\title{Efficient Active Learning Halfspaces with Tsybakov Noise:\\ A Non-convex Optimization Approach}
\DeclareMathOperator{\sign}{sign}
\DeclareMathOperator{\err}{err}
\DeclareMathOperator{\polylog}{polylog}
\DeclareMathOperator{\poly}{poly}
\DeclareMathOperator{\dif}{d}
\newcommand\normx[1]{\left\Vert#1\right\Vert}
\renewcommand{\ind}{\mathds{1}}
\newcommand{\opt}{\mathrm{opt}}
\newcommand{\Ber}{\mathrm{Bernoulli}}
\newcommand{\apsgd}{\ensuremath{\textsc{Active-PSGD}}\xspace}
\newcommand{\afo}{\ensuremath{\textsc{Active-FO}}\xspace}
\def\shownotes{1}  \ifnum\shownotes=1
\newcommand{\authnote}[2]{$\ll$\textsf{\footnotesize #1 notes: #2}$\gg$}
 \newcommand{\authnote}[2]{}
\author[]{Yinan Li \thanks{Email: yinanli@arizona.edu}}
\author[]{Chicheng Zhang \thanks{Email: chichengz@cs.arizona.edu} }
\affil[]{University of Arizona}
\begin{document}

\maketitle

\begin{abstract}%
We study the problem of computationally and label efficient PAC active learning $d$-dimensional halfspaces with Tsybakov Noise~\citep{tsybakov2004optimal} under structured unlabeled data distributions. Inspired by~\cite{diakonikolas2020learning}, we prove that any approximate first-order stationary point of a smooth nonconvex loss function yields a halfspace with a low excess error guarantee. 
In light of the above structural result, we design a nonconvex optimization-based algorithm with a label complexity of $\tilde{O}(d (\frac{1}{\epsilon})^{\frac{8-6\alpha}{3\alpha-1}})$\footnote{In the main body of this work, we use $\tilde{O}(\cdot), \tilde{\Theta}(\cdot)$ to hide factors of the form $\polylog(d, \frac{1}{\epsilon}, \frac{1}{\delta})$}, under the assumption that the Tsybakov noise parameter $\alpha \in (\frac13, 1]$, which narrows down the gap between the label complexities of the previously known efficient passive or active  algorithms~\citep{diakonikolas2020polynomial,zhang2021improved} and the information-theoretic lower bound in this setting.
\end{abstract}

\section{Introduction}

Active learning~\citep{settles2009active} is a practical machine learning paradigm motivated by the expensiveness of label annotation costs and the wide availability of unlabeled data. 
Consider the binary classification setting, where given an instance space $\Xcal$ and a binary label space $\Ycal = \cbr{-1,+1}$ and a data distribution $D$ over $\Xcal \times \Ycal$, we 
would like to learn a classifier that accurately predicts the labels of examples drawn from $D$. 
As the performance measure of a classifier $h$, we define its error rate to be $\err(h) := \PP_{(x,y) \sim D}(h(x) \neq y)$.
Given access to unlabeled examples and the ability to interactively query a labeling oracle (oftentimes a human annotator), an active learning algorithm aims to output a model $\hat{h}$ from a hypothesis class $\Hcal$ that has a low error rate with a small number of label queries. It has been shown both theoretically~\citep{settles2009active,dasgupta2005coarse,hanneke2011rates,hanneke2014theory,balcan2007margin,balcan2013active,hanneke2015minimax,zhang2014beyond}  and empirically~\citep[e.g.][]{siddhant2018deep,dor2020active}
that, under many learning settings, by utilizing interaction, active learning algorithms can enjoy much better label efficiency compared with conventional supervised learning.

In this work, we study the problem of computationally and label efficient PAC active learning halfspaces~\citep{valiant1985learning} with noise under structured unlabeled data distributions, where the hypothesis class $\Hcal := \cbr{h_w(x) := \sign(\inner{w}{x}): w \in \RR^d}$ is the set of linear classifiers, and $D_X$, the marginal distribution of $D$ over $\Xcal$, satisfies certain structural assumptions~\citep{diakonikolas2020learning} (see Definition~\ref{def:well-behaved} in Section~\ref{sec:prelims}). 
The goal of the learner is to $(\epsilon, \delta)$-PAC learn $\Hcal$ and $D$, i.e.
to output a classifier $\hat{h}$ such that with probability at least $1-\delta$, its excess error, $\err(\hat{h}) - \min_{h' \in \Hcal} \err(h')$ is at most $\epsilon$; the total number of label queries the learners makes as a function of $\epsilon, \delta$ is referred to as its \emph{label complexity}. 

In this work, the specific label noise condition we are interested in is the Tsybakov noise condition (TNC)~\citep{mammen1999smooth,tsybakov2004optimal}, stated below:
\begin{definition}[Tsybakov noise condition]
Given $A > 0$ and $\alpha \in (0,1]$, a distribution $D$ over $\RR^d \times \cbr{-1,+1}$ is said to satisfy the {\em $(A, \alpha)$-Tsybakov noise condition} with respect to halfspace $w^\star \in \RR^d$, if for all $t \in [0,\frac12]$,  $\PP_D\rbr{ \frac12 - \eta(x) \leq t} \leq A t^{\frac{\alpha}{1-\alpha}}$,
where $\eta(x) := \PP_D(y \neq \sign(\inner{w^\star}{x}) \mid x)$ is the label flipping probability on example $x$.
\label{def:tnc}
\end{definition}

Definition~\ref{def:tnc} has two important implications on the data distribution $D$. 
First, setting $t = 0$, we get that $\eta(x) \leq \frac12$ almost surely, which implies that the halfspace $w^*$ is Bayes optimal with respect to $D$.
Second, the fraction of examples $x$ that has a large conditional label flipping probability ($\frac12 - \eta(x) \leq t$) is small (at most $A t^{\frac{\alpha}{1-\alpha}}$). As $A$ decreases and $t$ increases, the noise assumption on $D$ becomes more benign, and the learning problem becomes easier.
Since the initial definition of TNC, the learning theory community has witnessed extensive effort in understanding the necessary and sufficient amount of labels for learning under it, from both statistical and computational perspectives~\citep{hanneke2014theory,hanneke2015minimax,balcan2007margin,balcan2013active,zhang2014beyond,wang2016noise,diakonikolas2020learningb,diakonikolas2020polynomial,zhang2021improved}.
Specialized to the setting of active 
learning halfspaces with TNC under structured unlabeled data distributions:


\begin{itemize}
\item From a statistical perspective, a line of works~\citep{balcan2007margin,balcan2013active,zhang2014beyond,wang2016noise,huang2015efficient} propose algorithms that have a label complexity of $\tilde{O}((\frac1\epsilon)^{2-2\alpha})$, which matches information-theoretic lower bounds~\citep{wang2016noise} in terms of target excess error rate $\epsilon$. However, these algorithms rely on explicit enumeration of classifiers from $\Hcal$ or performing empirical 0-1 loss minimization, which is known to be NP-Hard in general.
\item 
To design a computationally efficient algorithm for active learning halfspaces under Tsybakov noise, a first natural idea is to combine the well-known ``margin-based active learning'' framework~\cite[e.g.][]{balcan2007margin,balcan2013active} with convex surrogate loss minimization. Specifically, we can have an algorithm that iteratively, for each phase $k$: (1) learns a halfspace $w_k$ based on labeled examples $S_k$ using convex surrogate loss minimization; (2) actively collects a new set of labeled examples $S_{k+1}$ in a region close to the decision boundary of $w_k$. Although this algorithm design and analysis framework has made some progress in learning halfspaces under Massart noise~\citep{awasthi2015efficient,awasthi2016learning}, extending it to learning under Tsybakov noise is challenging, in that the Bayes classifier $h_{w^\star}$ can behave arbitrarily poorly (just better than a random guess) in a region with a small probability. 

\item A recent line of pioneering works aim at designing efficient algorithms for passive learning halfspace with Tsybakov noise~\citep{diakonikolas2020learningb, diakonikolas2020polynomial}. Their key insight is that, learning halfspaces can be reduced to the problem of certifying the non-optimality of a candidate halfspace. Using this, ~\cite{diakonikolas2020learningb} developed a quasi-polynomial time learning algorithm with label complexity $d^{O(\frac{1}{\alpha^2} \log^2 (\frac{1}{\epsilon}))}$; and subsequent work~\cite{diakonikolas2020polynomial} designed a polynomial time algorithm with label complexity $(\frac{d}{\epsilon})^{O(\frac{1}{\alpha})}$ under well-behaved distributions, and $\poly(d)\cdot (\frac{1}{\epsilon})^{O(\frac{1}{\alpha^2})}$ under log-concave distributions

\item The first active halfspace learning algorithm for Tsybakov noise that exhibits nontrivial improvements over passive learning is due to~\cite{zhang2021improved}. Their algorithm, based on a nonstandard application of online learning regret inequalities, iteratively optimizes a proximity measure between the iterates and $w^*$. When the Tsybakov noise parameter $\alpha \in (\frac12,1]$, their algorithm has a label complexity of $\tilde{O}( d (\frac1\epsilon)^{\frac{2-2\alpha}{2\alpha-1}} )$. 
\end{itemize}

In summary, for active learning halfspaces with TNC under structured unlabeled data distributions, there still remains a large gap between the label complexity upper bounds achieved by computationally efficient algorithms and the information-theoretic lower bound $\tilde{\Omega}((\frac1\epsilon)^{2-2\alpha})$. 

\paragraph{Our contributions.} In this work, we narrow the above gap by providing an efficient active learning algorithm with a label complexity of $\tilde{O}(d (\frac{1}{\epsilon})^{\frac{8-6\alpha}{3\alpha-1}})$, under the assumption that the noise parameter $\alpha \in (\frac13, 1]$.
In the sample complexity $(\frac{d}{\epsilon})^{O(\frac{1}{\alpha})}$ of the passive algorithm from~\cite{diakonikolas2020polynomial}, the constant hidden in the Big-Oh notation in the exponent is not clear, and this drawback is more significant in terms of the dependence on $d$. On the other hand, our label complexity has a linear dependence on the dimensionality $d$. 
Compared to the first and only efficient active algorithm existing in this setting~\citep{zhang2021improved}, our algorithm expands the feasibility of the noise parameter $\alpha$ from $(\frac12, 1]$ to $(\frac13, 1]$; when $\alpha \in [\frac12, 0.566), (\frac{1}{\epsilon})^{\frac{8-6\alpha}{3\alpha-1}} < (\frac1\epsilon)^{\frac{2-2\alpha}{2\alpha-1}}$. So our algorithm outperforms~\cite{zhang2021improved} when $\alpha \in [\frac13,0.566)$. 
We present the label complexity and computational efficiency of all algorithms in this setting in Table~\ref{tab:existing-results}. 

\begin{table}[t]
    \centering
    \begin{tabular}{c|c|c|c}
        Work & Label complexity upper bound & Passive/Active & Efficient? \\
       ~\cite{balcan2013active}
       & 
       $\tilde{O}(d(\frac1\epsilon)^{2-2\alpha})$
       & 
       Passive & No\\
        ~\cite{diakonikolas2020polynomial}
        &
        $(\frac{d}{\epsilon})^{O(\frac{1}{\alpha})}$
        &
        Passive & Yes \\
        ~\cite{zhang2021improved}
        &
        $\tilde{O}( d (\frac1\epsilon)^{\frac{2-2\alpha}{2\alpha-1}} )$ for $\alpha \in (\frac12, 1]$
        &
        Active & Yes \\
        this work
        &
        $\tilde{O}(d (\frac{1}{\epsilon})^{\frac{8-6\alpha}{3\alpha-1}})$ for $\alpha \in (\frac13, 1]$
        &
        Active & Yes \\
    \end{tabular}
    \caption{A comparison of the state-of-the-art label complexity and efficiency on learning halfspaces with TNC under structured unlabeled data distributions. 
    }
    \label{tab:existing-results}
\end{table}

Our algorithm relies on a few key technical ideas, which we elaborate on below.

\paragraph{Key idea 1: Computationally efficient non-convex optimization for noise tolerance.} The work of~\cite{diakonikolas2020learning} shows that, under the Massart noise condition, optimizing a carefully-chosen non-convex loss $L_\sigma(w) = \EE\sbr{ \ell_\sigma(w, (x,y)) }$ over the noisy labeled data distribution $D$ yield a classifier with low excess error. 
Importantly, ~\cite{diakonikolas2020learning} shows that one does not have to find the global minimum to achieve the above guarantee; instead, finding a first-order stationary point suffices, which admits computationally efficient procedures~\cite[e.g.][]{ghadimi2013stochastic}.
Inspired by this, we show that under Tsybakov noise with $\alpha > \frac13$, for the same nonconvex loss function, a qualitatively-similar structural result holds (Lemma~\ref{lem:gradient-logistic-TNC}). This, when combined with standard results on efficient stochastic optimization methods for finding first-order stationary points~\cite{ghadimi2013stochastic}, yields a passive learning procedure with sample complexity of $T = O( (\frac1\epsilon)^{\frac{8-4\alpha}{3\alpha-1}} )$ that can output a classifier that is close to one of $\cbr{w^*, -w^*}$ with constant probability.

\paragraph{Key idea 2: Label efficient first-order oracle for the non-convex objective.} Our second insight is that, the optimization-based learning algorithm outlined above can be made more label-efficient in our active learning setting. At each iteration of the above algorithm, we call the stochastic gradient oracle of the population loss once. A naive implementation of this oracle requires one labeled example per call: drawing one example $x$ from $D_X$, query the labeling oracle for its label $y$, and return $\nabla \ell_\sigma(w, (x,y))$, the gradient of the loss of the model on example $(x,y)$. Inspired by~\cite{guillory2009active}, we design a much more label-efficient implementation of the stochastic gradient oracle; specifically, each call to the oracle queries $O(\sigma) = O(\epsilon^{\frac{2\alpha}{3\alpha-1}}) \ll 1$ labels in expectation. Moreover, the new implementation of the stochastic gradient oracle preserves the bound on the expected squared norm of the stochastic gradient, resulting in the same iteration complexity $T$. This yields a learning procedure with label complexity of $O(T\sigma) = O( (\frac1\epsilon)^{\frac{8-6\alpha}{3\alpha-1}} )$ that can output a classifier that is close to one of $\cbr{w^*, -w^*}$ with constant probability.

\paragraph{Key idea 3: Label-efficient classifier selection.} The above active learning procedure is yet to achieve the $(\epsilon, \delta)$-PAC learning guarantee, in that: (1) its success probability is constant; (2) if it succeeds, it is possible that its output classifier is close to $-w^*$ as opposed to $w^*$. To address issue (1) and boost the success probability to $1-\delta$, we use a repeat-and-validate procedure similar to~\cite{ghadimi2013stochastic} to obtain multiple independent outputs $\cbr{w_s: s \in [S]}$ one of which is close to $\cbr{w^*, -w^*}$, call the stochastic gradient oracle to estimate $\| \nabla L_\sigma(w_s) \|$ for $s \in [S]$, and choose $\tilde{w}$ to be the $w_s$ with the smallest gradient estimate. Thanks again to the label efficient first-order oracle, this step has a label complexity of $O(d (\frac{1}{\alpha})^{\frac{4-2\alpha}{3\alpha-1}})$. 
To address the issue (2), we observe that under Tsybakov noise, $\tilde{w}$ and $-\tilde{w}$'s error rates differ by a constant; therefore, using a simple 0-1 loss based validation procedure suffices to find a classifier $O(\epsilon)$-close to $w^*$, which has an excess error of $\epsilon$.

\section{Related Work}

\paragraph{Statistical complexity for active learning halfspace under Tsybakov noise condition.}
The statistical complexity for active learning halfspaces under Tsybakov noise condition has been largely characterized over the past two decades~\citep{hanneke2011rates, hanneke2014theory, hanneke2015minimax}. 
For general minimax lower bound of active learning under Tsybakov noise not specific to the hypothesis class of halfspaces,~\citep{hanneke2014theory} provides a minimax label complexity lower bound of $\Omega \rbr{d (\frac{1}{\epsilon} )^{2-2\alpha} }$
~\cite{hanneke2015minimax} establishes minimax label complexity upper and lower bounds for general hypothesis class, in terms of the star number and VC dimension. Specific to the class of homogeneous halfspace, when $\alpha \in (0, \frac12]$, the minimax label complexity has a lower bound of $\Omega \rbr{d (\frac{1}{\epsilon} )^{2-2\alpha} }$; when $\alpha \in [\frac12,1)$, the minimax label complexity has a lower bound of $\Omega \rbr{(\frac{1}{\epsilon} )^{2-2\alpha} (d + (\frac{1}{\epsilon} )^{2\alpha-1})}$.
For a more specific setting for active learning halfspaces under well-behaved distributions,~\citep{wang2016noise} shows a minimax label complexity lower bound of $\Omega \rbr{ (\frac{1}{\epsilon} )^{2-2\alpha} }$. 

On the label complexity upper bound side, assuming the unlabeled distribution is isotropic log-concave,~\cite{balcan2013active} and~\cite{wang2016noise}'s active learning algorithms achieve label complexity of order $\tilde{O} \rbr{d (\frac{1}{\epsilon} )^{2-2\alpha} }$.
These works use a margin-based active learning framework, which is a celebrated algorithmic idea of inductively learning halfspaces under benign unlabeled distribution assumptions. However, these algorithms suffer from computational intractability, since they perform empirical 0-1 risk minimization, which is known to be computationally hard~\citep{arora1997hardness}. 

\paragraph{Efficient passive learning halfspaces under Tsybakov noise condition.}
In spite of the extensive effort for efficiently learning in the presence of Tsybakov noise condition 
~\citep{mammen1999smooth,tsybakov2004optimal}, it has been an outstanding open problem to obtain an efficient learning algorithm for any natural hypothesis class (e.g., parities) until very recent years
, where the first breakthrough is witnessed in passively learning halfspaces under Tsybakov noise condition under well-behaved distributions~\citep{diakonikolas2020polynomial,diakonikolas2020learningb}. Those works adopt the principle of ``reduction from learning to certifying the non-optimality of a candidate halfspace'', and developed quasi-polynomial time
certificate algorithm or polynomial time certificate algorithm, resulting in quasi-polynomial time halfspace learning algorithm (with sample complexity $d^{O(\frac{1}{\alpha^2} \log^2 (\frac{1}{\epsilon}))}$) or polynomial time algorithm (with sample complexity $(\frac{d}{\epsilon})^{O(\frac{1}{\alpha})}$), respectively. 

\paragraph{Efficient active learning halfspaces under Tsybakov noise condition.}
Efficient active learning under Tsybakov noise condition is conceptually more difficult than efficient passive learning. The difficulty largely lies in the ''conflict'' between the nature of Tsybakov noise condition - it allows even the Bayes classifier $w^\star$ to 
have an error rate arbitrarily close to $1/2$
in a region with a small enough probability - and the common analysis technique adopted in active learning. In more detail, the combination of localized sampling and iterative convex surrogate loss minimization technique used in many efficient active learning algorithms is hard to analyze in this setting, as they oftentimes require learning model with a small constant error rate in localized regions, which is hard to establish under Tsybakov noise.
Many efficient active learning algorithms~\citep{awasthi2014power,awasthi2015efficient,awasthi2016learning} adopt the idea of localization to some extent. To overcome this barrier and obtain an efficient active algorithm to learn halfspace, substantially novel algorithmic ideas seem to be necessary.  

In this regard, there are even fewer works along the direction of active learning under Tsybakov noise. One notable work is~\cite{zhang2021improved}, where an active learning algorithm is developed for learning halfspaces under $(A, \alpha)-$Tsybakov noise condition for $\alpha \in (\frac12,1]$ and well-behaved distribution, and achieves a label complexity of $\tilde{O}(d (\frac{1}{\epsilon})^{\frac{2-2\alpha}{2\alpha-1}})$. 

Due to space constraints, we defer the discussions of additional related works in Appendix~\ref{sec:addl-rel-works}.

\section{Preliminaries}
\label{sec:prelims}

A (homogenous) halfspace, or a linear classifier, is a function $h_w: \RR^d \mapsto \cbr{ \pm 1}$ that is defined as $h_w(x) = \sign(\inner{w}{x})$, where $w \in \RR^d$.   
In this paper,
we consider the standard binary classification setting, where the hypothesis class $\Hcal$ is the set of halfspaces $\cbr{h_w: w \in \RR^d}$. 
In the sequel, to ease the notation, we frequently use $w$ to represent the halfspace $h_w$ defined by the vector $w \in \RR^d$.  
We denote by $D$ the joint distribution of
labeled examples $(x,y)$ supported on $\RR^d \times \cbr{\pm 1}$ and denote by $D_X$ the marginal distribution of $D$ on $x$. 
We define the empirical error rate of $h$ on $S$, $\err_S(h) := \frac1{|S|} \sum_{(x,y) \in S} \ind\rbr{h(x) \neq y}$.  
For $N \in \NN_+$, let $[N] := \cbr{1, 2, \ldots, N}$. Throughout this paper, for $a,b \in \RR^d$, we use $\|a\|$ to denote $\|a\|_2$, the $\ell_2$ norm of $a$, and use $\inner{a}{b}$ to denote the inner product of $a$ and $b$, and denote by $\theta(a,b) = \arccos \rbr{\frac{\inner{a}{b}}{\|a\|\|b\|}} \in [0,\pi]$ the angle between them.

Following the distributional assumptions in~\citep{diakonikolas2020polynomial, diakonikolas2020learningb, zhang2021improved}, this work proceeds in developing an efficient algorithm for active learning halfspace under Tsybakov noise condition. We assume $D_X$, the marginal distribution over the instance space, lies in the family of well-behaved distributions, which generalizes the isotropic log-concave distribution~\citep{balcan2013active, awasthi2014power, awasthi2016learning} and the uniform distribution on the $d$-dimensional unit sphere~\citep{awasthi2015efficient,yan2017revisiting,wang2016noise}. We formally define well-behaved distributions as follows:

\begin{definition}[Well-behaved distributions~\citep{diakonikolas2020polynomial}]
\label{def:well-behaved}
Fix  $L, R, U, \beta > 0$. We say a distribution $D_X$ on $\RR^d$ to be $(2,L,R, U, \beta)$ well-behaved, or well-behaved for short, if the following properties are satisfied:
for all $x$ randomly drawn from $D_X$,
let $x_V$ be the projected coordinates of $x$ onto any $2$-dimensional linear subspace $V$ of $\RR^d$, and $p_V$ be the corresponding probability density function on $\RR^2$. $p_V$ satisfies, 
\begin{enumerate}
    \item $p_V(z) \geq L$, for all $z$ such that $\| z \|_2 \leq R$;
    \item $p_V(z) \leq U$, for all $z \in \RR^2$;  
\end{enumerate}
moreover, for any unit vector $w$ in $\RR^d$ and any $t > 0$, $\PP_{D_X}(\abs{\inner{w}{x}} \geq t) \leq \exp(1-\frac{t}{\beta})$.
\end{definition}

An important property of the objective function in iterative optimization is the smoothness property, which we define below: 

\begin{definition}
A twice continuous differentiable function $F$ is $L$-smooth on $\Dcal$, 
if $\|\nabla^2 F(x) \|_{\mathrm{op}} \leq L$, for all $x \in \Dcal$.  
\label{def:smoothness}
\end{definition}

\section{Algorithm}

\begin{algorithm}
\caption{Active learning halfspaces under TNC}
\label{alg:main}
\begin{algorithmic}[1]
\STATE {\bfseries Input:} Target excess error $\epsilon$, failure probability $\delta$

\STATE $\theta_0 \gets O \rbr{ \frac{1}{\ln^2 \frac{1}{\epsilon}} \frac{\epsilon}{2} }$, $\sigma \gets \Theta \rbr{(\frac{1}{A})^\frac{1-\alpha}{3\alpha-1}\theta_0^{\frac{2\alpha}{3\alpha-1}} }$, $\rho \gets \Theta \rbr{ (\frac{1}{A})^{\frac{2(1-\alpha)}{3\alpha-1}} \theta_0^{\frac{2(1-\alpha)}{3\alpha-1}} }$, 
$S \gets \log \frac{6}{ \delta}$
\label{line:parameter-set-up}

\FOR{$s =1,2,\ldots,S$}
\label{line:rep-start}
\STATE $w_s \gets \apsgd(N = \tilde{O}(\frac{d}{\sigma^2 \rho^4}), \beta = \tilde{\Theta}(\frac{ \rho^2 \sigma^2}{d}))$
(see Algorithm~\ref{alg:active-PSGD-finding-stationary-point})

\label{line:iterative-optimization}
\ENDFOR
\FOR{$s =1,2,\ldots,S$}
\label{line:selection-1-start}
\STATE $g_{s,1}, \ldots, g_{s,M_1} \gets \afo(w_s)$ (see Algorithm~\ref{alg:first-order-oracle})
\label{line:selection-1}
\STATE $\bar{g}_s \gets \frac{1}{M_1} \sum_{i=1}^{M_1} g_{s,i}$
\ENDFOR
\STATE $s^* \gets \argmin_{s \in [S]} \|\bar{g}_s\|$
\STATE $\tilde{w} \gets w_{s^*}$
\label{line:selection-1-end}

\STATE Draw $M_2$ unlabeled examples from $D_X$ and query their labels labeled samples $\cbr{(x_i, y_i)}_{i=1}^{M_2}$
\label{line:selection-2-start}

\STATE $\hat{w} \gets \argmin_{w \in \cbr{\pm \tilde{w} }} \frac{1}{M_2} \sum_{i=1}^{M_2} \ind (\sign(\inner{w}{x_i}) \neq y_i)$ 
\label{line:selection-2-end}

\STATE {\bfseries Return:} $\hat{w}$

\end{algorithmic}
\end{algorithm}

Our main algorithm (Algorithm~\ref{alg:main}) consists of three key components, namely (1) iterative non-convex optimization with active label queries (\apsgd, Algorithm~\ref{alg:active-PSGD-finding-stationary-point}); (2) label-efficient iterate selection to boost the success probability (lines~\ref{line:rep-start} to~\ref{line:selection-1-end}); (3) label-efficient final iterate selection (lines~\ref{line:selection-2-start} to~\ref{line:selection-2-end}). We now discuss each component in more detail. 

\subsection{Efficient non-convex optimization with active label queries (Algorithm~\ref{alg:active-PSGD-finding-stationary-point})}

\begin{algorithm}[t]
\caption{\apsgd: Projected SGD for finding a stationary point of $L_\sigma$ using active learning}
\label{alg:active-PSGD-finding-stationary-point}
\begin{algorithmic}[1]
\STATE {\bfseries Input:} number of steps $N$, step size $\beta$

\STATE $w_0 \gets e_1$
\FOR{$i=1,2,\ldots,N$}
\STATE $g_i \gets \afo(w_{i-1})$
\STATE $v_i \gets w_{i-1} - \beta g_i$
\STATE $w_i \gets \frac{v_i}{\|v_i\|_2}$
\label{line:normalization}

\ENDFOR

\STATE {\bfseries Return:} $w_R$, where $R$ is a random variable uniformly distributed over $\cbr{0,\ldots,N-1}$

\end{algorithmic}
\end{algorithm}

As we will see in the Appendix~\ref{sec:addl-rel-works}, 
there are several results~\citep{awasthi2015efficient,awasthi2016learning,diakonikolas2019distribution} in the literature against the one-shot application of convex surrogate loss minimization in efficient learning halfspaces with noise. One possible way to get around this is to instead adopt a non-convex surrogate loss. 
We denote by $\phi_\sigma(t) := \frac{1}{1+ e^{\frac{t}{\sigma}}}$ the softmax loss function, first proposed by~\cite{diakonikolas2020learning}, which can be viewed as a smooth approximation of 0-1 loss. For a halfspace $w$, we let $L_\sigma(w) := \EE_{(x,y) \sim D} \phi_\sigma \rbr{y \frac{\inner{w}{x}}{\|w\|}}$ be its normalized expected softmax loss function.
Our key observation is that, in the presence of Tsybakov noise, to find a $w$ close to $w^*$, it suffices to find an approximate first-order stationary point of the softmax loss $L_\sigma$. The technique of using the softmax loss in the optimization procedure and proving an approximate stationary point suffices for the halfspace learning goal is originally developed in~\citep{diakonikolas2020learning}, where it provides an efficient passive learning algorithm for learning halfspaces under Massart noise. In this work, we extend this technique to the setting of learning halfspaces under Tsybakov noise. 
Formally, we prove:

\begin{lemma}
Let $D_X$ be a well behaved distribution, and $D$ satisfies $(A, \alpha)$-TNC. Denote by $L_\sigma(w) = \EE_D \sbr{\phi_\sigma \rbr{y \frac{\inner{w}{x} }{\|w\|_2}}}$ where $\phi_\sigma$ is softmax loss 
defined above. Let $w$ be such that $\theta(w, w^*) \in (\theta, \pi - \theta)$, where $\theta \leq \Theta(A)$.
Then for $\sigma =  \Theta \rbr{\theta^\frac{2\alpha}{3\alpha-1}} $, we have that 
$\| \nabla_w L_\sigma(w) \|_2
\geq 
\Omega\rbr{\theta^\frac{2(1-\alpha)}{3\alpha-1} } := 2 \rho$. 
\label{lem:gradient-logistic-TNC}
\end{lemma}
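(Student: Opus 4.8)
The plan is to compute the gradient $\nabla_w L_\sigma(w)$ and show its norm is bounded below by producing a specific direction along which the directional derivative is large. Since $L_\sigma$ depends on $w$ only through $w/\|w\|$, the gradient lies in the tangent space $w^\perp$; the natural candidate direction is the one in the plane $\spn\{w, w^*\}$ orthogonal to $w$, call it $u$, pointing toward $w^*$. So I would first write $\nabla_w L_\sigma(w) = \frac{1}{\|w\|}\EE_D[\phi_\sigma'(y\frac{\inner{w}{x}}{\|w\|})\, y\, \Pi_{w^\perp} x]$ where $\Pi_{w^\perp}$ is the projection onto $w^\perp$, and lower-bound $\inner{u}{\nabla_w L_\sigma(w)}$. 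Using $\phi_\sigma'(t) = -\frac{1}{\sigma}\phi_\sigma(t)(1-\phi_\sigma(t)) \le 0$ and $\EE[y \mid x] = (1-2\eta(x))\sign(\inner{w^*}{x})$, the expectation reduces (after integrating out the $d-2$ coordinates orthogonal to $\spn\{w,w^*\}$, which is where well-behavedness of the $2$-dimensional marginals enters) to a $2$-dimensional integral of the form $\frac{1}{\sigma}\EE[\psi_\sigma(\inner{w}{x})\,(1-2\eta(x))\,|\text{(component of } x \text{ along } u)|\cdot \mathbf{1}(\text{signs agree})]$ for a bump-like function $\psi_\sigma$ concentrated in a band of width $O(\sigma)$ around the decision boundary of $w$.

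The heart of the argument is then to show this $2$-dimensional integral is $\Omega(\theta)$ (before the $1/\sigma$ factor, the integral of the softmax derivative's weight against the margin is of order $\sigma\cdot\theta$, roughly: the band has width $\sigma$, and within it the ``signed agreement'' region near the boundary, combined with the $|u\text{-component}|$ factor, contributes a geometric factor proportional to the angle $\theta$ between $w$ and $w^*$). The subtlety is the TNC-induced loss: in the thin band near $w$'s boundary, $\eta(x)$ can be close to $1/2$, so $(1-2\eta(x))$ is not bounded below pointwise. Here I would invoke the TNC bound $\PP_D(\frac12 - \eta(x) \le t) \le A t^{\alpha/(1-\alpha)}$ to argue that the set of ``bad'' $x$ (where $1-2\eta(x) \le \tau$ for a threshold $\tau$ to be optimized) has small probability, and split the integral into a ``good'' part where $1-2\eta(x) \ge \tau$ contributes $\gtrsim \tau \cdot \sigma \cdot (\text{something like } \theta)$, minus a ``bad'' part bounded by $A\tau^{\alpha/(1-\alpha)} \cdot \sigma$ (the extra $\sigma$ from the band width). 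Balancing $\tau\cdot\theta$ against the structural terms and against $A\tau^{\alpha/(1-\alpha)}$, and also against the ``wrong-sign'' contribution near the boundary where $y\sign(\inner{w}{x})$ disagrees with the sign pattern, will force a choice $\tau = \Theta(\theta^{?})$ and, crucially, dictate the choice $\sigma = \Theta(\theta^{2\alpha/(3\alpha-1)})$ stated in the lemma; after this balancing the directional derivative comes out $\Omega(\theta^{2(1-\alpha)/(3\alpha-1)})$.

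I would organize the computation as: (i) reduce to the $2$-d integral using well-behavedness and the decomposition of $\nabla_w L_\sigma$; (ii) lower-bound the ``clean'' geometric contribution assuming no noise, getting roughly $\Theta(\theta)$ up to the $1/\sigma$ normalization and constants from $L,R,U$; (iii) upper-bound the TNC-loss term using the tail bound and the band width $O(\sigma)$; (iv) upper-bound the region near the $w^*$-boundary where the $\sign$ of $y\inner{w}{x}$ works against us — this region has angular width $\theta$ and its contribution is controlled because $\phi_\sigma'$ is small there or because the TNC weight is small; (v) choose $\sigma$ and the auxiliary threshold to make the clean term dominate, and read off $\rho$. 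The condition $\theta \le \Theta(A)$ is used to ensure the relevant band stays within the region $\|z\|\le R$ where the density lower bound $L$ applies, and the condition $\alpha > \frac13$ is exactly what makes the exponent $\frac{2\alpha}{3\alpha-1}$ positive and the balancing feasible (for $\alpha \le \frac13$ the loss term would overwhelm the signal for every choice of $\sigma$).

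\textbf{Main obstacle.} The delicate part is step (iii)--(v): carefully isolating the near-boundary ``noisy'' region and showing the TNC tail bound, together with the $O(\sigma)$ band width and the geometry, yields a loss term that is genuinely lower-order than the $\Theta(\theta)$ signal once $\sigma$ is chosen optimally. One has to be careful that the integration against $|u$-component of $x|$ and against $\phi_\sigma'$ is done in the right order so that the TNC probability bound (which is a statement about the distribution $D_X$ restricted via $\eta$, not a pointwise bound) can be applied, and that no region contributes a term that cancels the main contribution. Getting the exponents to match $\frac{2\alpha}{3\alpha-1}$ and $\frac{2(1-\alpha)}{3\alpha-1}$ exactly is where all the bookkeeping has to be exact.
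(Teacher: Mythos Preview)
Your high-level plan matches the paper's proof: compute the directional derivative along $u \perp w$ in $\spn\{w,w^*\}$, reduce to a $2$-dimensional integral, split into a main term and a wrong-sign term, control the noise via the TNC tail bound with a threshold, and balance by choosing $\sigma$. But your scaling intuition in steps (ii)--(iv) is off in a way that would derail the balancing if you proceeded as written.

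In the paper's coordinates ($w=e_2$, $u=e_1$), the directional derivative is $\EE_x[\phi'_\sigma(x_2)(1-2\eta(x))\sign(\inner{w^*}{x})\,x_1]$, and the split is
\[
\EE_x\bigl[|\phi'_\sigma(x_2)|(1-2\eta(x))|x_1|\bigr]\;-\;2\,\EE_x\bigl[|\phi'_\sigma(x_2)|(1-2\eta(x))|x_1|\,\ind(x\in G^C)\bigr],
\]
where $G^C=\{x:\sign(\inner{w^*}{x})=\sign(x_1)\}$. The first (``main'') term does \emph{not} scale like $\theta$: it is lower-bounded by restricting to a fixed rectangle $R_1=\{x_1\in[R/4,R/2],\,x_2\in[0,\sigma]\}$ of mass $\Theta(\sigma)$, then applying TNC with the threshold $t$ chosen so that $A\,t^{\alpha/(1-\alpha)}\le\frac12\PP(R_1)=\Theta(\sigma)$, i.e.\ $t=\Theta(\sigma^{(1-\alpha)/\alpha})$; this yields a main term of order $\Theta(t)=\Theta(\sigma^{(1-\alpha)/\alpha})$, with no $\theta$ in it. Your claim in (ii) that it is $\Theta(\theta)$, and the extra factor of $\sigma$ you attach to the TNC loss in (iii), are both incorrect.

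The $\theta$-dependence enters only through the second term, and here your geometry is inverted: $G^C$ is \emph{not} a wedge of angular width $\theta$ but its complement, $\varphi\in(\theta,\pi/2)$ (and the $\pi$-rotate). Its contribution is computed directly (drop $1-2\eta\le 1$, use the density upper bound $U$, integrate in polar coordinates) as $\frac{2U}{\sigma}\int_0^\infty\int_\theta^{\pi/2} r^2\cos\varphi\,e^{-r\sin\varphi/\sigma}\,d\varphi\,dr = 2U\sigma^2/\tan^2\theta = O(\sigma^2/\theta^2)$. The balance is therefore $\sigma^{(1-\alpha)/\alpha}\gtrsim\sigma^2/\theta^2$, which forces $\sigma=\Theta(\theta^{2\alpha/(3\alpha-1)})$ and gives the claimed $\Omega(\theta^{2(1-\alpha)/(3\alpha-1)})$. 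So your outline is right, but you should redo the back-of-envelope geometry before the balancing step; as stated, your heuristics would not produce the correct exponents. (The condition $\theta\le\Theta(A)$ is used to ensure the TNC threshold satisfies $t\le 1/2$, not to keep the band inside $\|z\|\le R$.)
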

Lemma~\ref{lem:gradient-logistic-TNC} establishes the connection between 0-1 loss and the $\ell_2$ norm of the gradient of a carefully designed non-convex loss function - softmax loss $L_\sigma(w) = \EE_D \sbr{\phi_\sigma \rbr{y \frac{\inner{w}{x} }{\|w\|_2}}}$. 
Specifically, the parameter $\rho$ and $\sigma$ are delicately chosen in a way such that if $\|\nabla L_\sigma(w) \| \leq 2\rho$, then 
either $w$ or $-w$
is at an angle at most $\theta_0 := O \rbr{\frac{1}{\ln^2 \frac{1}{\epsilon}} \frac{\epsilon}{2} }$ from $w^*$. By Lemma~\ref{lem:prob-angle}, either $w$ or $-w$ has
an excess error at most $\epsilon$ as desired.

To efficiently find 
an approximate first-order stationary point of $L_\sigma$, 
we adapt the iterative procedure of randomized stochastic gradient (RSG) in~\cite{ghadimi2013stochastic} to this setting, resulting in Alg.~\ref{alg:active-PSGD-finding-stationary-point}. 
More precisely, 
\apsgd (Algorithm~\ref{alg:active-PSGD-finding-stationary-point}) 
aims at iteratively
obtaining a halfspace $w$ such that $\|\nabla L_\sigma(w) \| \leq \rho$ with a constant probability.

In more detail, 
\apsgd takes as input the number of steps $N$ and a constant stepsize $\beta$. 
$w_1$ is initialized randomly on the unit $\ell_2$-ball in $\RR^d$.
In each iteration $i$, \apsgd 
calls function \afo (Algorithm~\ref{alg:first-order-oracle}), which serves as a first-order stochastic gradient oracle for $L_\sigma$ to obtain 
{$g_i$, an unbiased estimate of $\nabla L_\sigma(w_i)$}, and updates the previous iterate $w_{i-1}$ with the step size $\beta$. As we will see, from item~\ref{item:perp} of Lemma~\ref{lem:active-oracle-property} that the direction of the stochastic gradient estimate $g_i$ is always perpendicular to the previous iterate $w_{i-1}$, hence all $v_i$'s satisfy $\|v_i\|_2 \geq 1$. The next step (line~\ref{line:normalization}) is to project $v_i$ back to the unit $\ell_2$-ball to obtain $w_i$. 
Lastly, after $N$ iterations, Algorithm~\ref{alg:active-PSGD-finding-stationary-point} output one iterate from $\cbr{w_i: i \in \cbr{0, \ldots, N-1}}$ uniformly at random.
We have the following performance guarantee of \apsgd:

\begin{lemma}
Let $\rho, \sigma$ be defined as in line~\ref{line:parameter-set-up} of Algorithm~\ref{alg:main}.
If Algorithm~\ref{alg:active-PSGD-finding-stationary-point} receives inputs
$N = \tilde{O}(\frac{d}{\sigma^2 \rho^4})$, $\beta = \tilde{\Theta}(\frac{ \rho^2 \sigma^2}{d})$, 
then its output $w_R$ of Algorithm~\ref{alg:active-PSGD-finding-stationary-point} satisfies, with probability at least $\frac12$, \[
    \|\nabla L_\sigma(w_R) \| 
    \leq
    \rho
    \]
Furthermore, during $N$ iterations, with probability at least $1-\frac{\delta}{6S}$ the total number of label queries is at most 
$T_1 :=
\tilde{O} \rbr{
 d (\frac{1}{\epsilon})^\frac{8-6\alpha}{3\alpha-1}  } 
 $

\label{lem:active-SGD}
\end{lemma}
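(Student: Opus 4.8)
The plan is to prove Lemma~\ref{lem:active-SGD} in two parts, corresponding to its two assertions: the optimization guarantee ($\|\nabla L_\sigma(w_R)\| \le \rho$ with constant probability), and the label complexity bound.

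\textbf{Part 1: Optimization guarantee.} First I would invoke the structural properties of the stochastic gradient oracle \afo (presumably Lemma~\ref{lem:active-oracle-property}): that $g_i$ is an unbiased estimate of $\nabla L_\sigma(w_{i-1})$, that $g_i \perp w_{i-1}$ (item~\ref{item:perp}), and that its second moment is bounded, say $\EE\|g_i\|^2 \le B$ for some $B = B(d,\sigma)$ I would need to extract from the oracle analysis --- I expect $B = \tilde O(d/\sigma^2)$ or similar, since each coordinate of the gradient of the softmax loss scales like $1/\sigma$. Next, I would establish that $L_\sigma$ restricted to the unit sphere is $\Lambda$-smooth for an appropriate $\Lambda$; because $\phi_\sigma$ has second derivative of order $1/\sigma^2$ and the well-behaved tail bound controls $\EE[\|x\|^2 \cdot (\text{stuff})]$, I would expect $\Lambda = \tilde O(1/\sigma^2)$. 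The perpendicularity $g_i \perp w_{i-1}$ together with $\|w_{i-1}\| = 1$ gives $\|v_i\|_2 = \sqrt{1 + \beta^2\|g_i\|^2} \ge 1$, so the normalization step is well-defined and, crucially, the projection onto the sphere only decreases the distance traveled, letting me treat this as (projected) SGD for a smooth nonconvex objective on a bounded domain. Then I would apply the standard Ghadimi--Lan RSG analysis~\citep{ghadimi2013stochastic}: with step size $\beta \asymp 1/(\Lambda \cdot \text{something})$ and $N$ iterations, the randomized output satisfies $\EE\|\nabla L_\sigma(w_R)\|^2 \le O\big(\frac{\Lambda (L_\sigma(w_0) - L_\sigma^\star)}{N} + \beta \Lambda B\big)$; since $L_\sigma$ is bounded in $[0,1]$, plugging in $\beta = \tilde\Theta(\rho^2\sigma^2/d)$ and $N = \tilde O(d/(\sigma^2\rho^4))$ makes both terms $O(\rho^2/\text{polylog})$, so $\EE\|\nabla L_\sigma(w_R)\|^2 \le \rho^2/C$ for large constant $C$; Markov's inequality then gives $\|\nabla L_\sigma(w_R)\| \le \rho$ with probability at least $\tfrac12$. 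I should double-check that the sphere-projection does not break the descent lemma --- the clean way is to note $\theta(w_i, w_{i-1}) \le \beta\|g_i\|$ and run the analysis in terms of the geodesic/chordal distance, or simply cite that normalizing a vector of norm $\ge 1$ is $1$-Lipschitz and contracts toward the constraint set.

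\textbf{Part 2: Label complexity.} Here I would use the label-efficiency property of \afo: each call queries $O(\sigma)$ labels in expectation (Key idea~2; presumably another item of Lemma~\ref{lem:active-oracle-property} states the per-call label count is a random variable with mean $O(\sigma)$ and is bounded, e.g. by $1$, or is sub-exponential). Over $N$ iterations, the expected total number of label queries is $O(N\sigma) = \tilde O(d/(\sigma\rho^4))$. Now I substitute the settings from line~\ref{line:parameter-set-up}: $\theta_0 = \tilde\Theta(\epsilon)$, $\sigma = \tilde\Theta(\theta_0^{2\alpha/(3\alpha-1)}) = \tilde\Theta(\epsilon^{2\alpha/(3\alpha-1)})$, and $\rho = \tilde\Theta(\theta_0^{2(1-\alpha)/(3\alpha-1)}) = \tilde\Theta(\epsilon^{2(1-\alpha)/(3\alpha-1)})$. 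A direct exponent computation: $\frac{1}{\sigma\rho^4}$ has $\epsilon$-exponent $-\frac{2\alpha}{3\alpha-1} - \frac{8(1-\alpha)}{3\alpha-1} = -\frac{2\alpha + 8 - 8\alpha}{3\alpha-1} = -\frac{8-6\alpha}{3\alpha-1}$, so $\EE[\#\text{queries}] = \tilde O(d(1/\epsilon)^{(8-6\alpha)/(3\alpha-1)}) = \tilde O(T_1)$, matching the claimed bound in expectation. Finally, to upgrade from an expectation bound to a high-probability bound ($1 - \frac{\delta}{6S}$), I would bound the per-iteration label count's tail --- either it is almost surely bounded by $1$ (if \afo queries at most one label per call, which matches the ``$O(\sigma)$ in expectation'' Bernoulli-style description in Key idea~2), in which case the total is a sum of $N$ independent $[0,1]$ random variables and a Chernoff/Bernstein bound gives concentration within a $\polylog$ factor of the mean with the desired probability; or, if the per-call count can be larger, I would use a Bernstein bound for the sum together with a union bound. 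The $\polylog$ slack is absorbed into the $\tilde O$.

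\textbf{Main obstacle.} The part I expect to be most delicate is making the Ghadimi--Lan nonconvex SGD analysis go through \emph{on the sphere} with the normalization step, while simultaneously tracking that the smoothness constant $\Lambda$ of $L_\sigma$ and the second-moment bound $B$ of the stochastic gradient have the right dependence on $\sigma$ and $d$ so that the chosen $N$ and $\beta$ exactly yield the $\rho$-stationarity and the advertised $T_1$. In particular, one must verify that projecting onto the unit ball (rather than working in an unconstrained space) preserves the ``sufficient decrease per step'' inequality --- this is where the perpendicularity $g_i \perp w_{i-1}$ is essential, since it guarantees $\|v_i\| \ge 1$ and lets one relate the post-normalization iterate to the pre-normalization one via a contraction; getting the constants right here, and confirming that $L_\sigma(w_0) - \min_w L_\sigma(w) = O(1)$ (immediate since $0 \le \phi_\sigma \le 1$), is the crux.
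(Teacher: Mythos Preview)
Your high-level plan matches the paper's, but there are two concrete places where the sketch would not go through as written, and one where the paper's argument is cleaner than what you propose.

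\textbf{(1) The smoothness and second-moment constants.} Your naive guesses $\Lambda=\tilde O(1/\sigma^2)$ and $\EE\|g_i\|^2=\tilde O(d/\sigma^2)$ are each off by a factor of $\sigma$; the paper proves $L_\sigma$ is $\tilde O(1/\sigma)$-smooth on $\{w:\|w\|\ge 1\}$ (Lemma~\ref{lem:smoothness-factor-ub}) and $\EE\|g_w\|^2\le\tilde O(d/\sigma)$ (Lemma~\ref{lem:active-oracle-property}, item~\ref{item:variance}). This is not a cosmetic difference: with your guessed constants the term $\beta\Lambda B$ in the Ghadimi--Lan bound becomes $\rho^2/\sigma^2$, not $\rho^2$, and the argument would not close for the stated $N,\beta$. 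The sharper constants come from exploiting that $|\phi_\sigma'(t)|$ is localized near $t=0$ under a well-behaved $D_X$: one applies H\"older with exponent $p$ to $\EE[|\phi_\sigma'(\langle w/\|w\|,x\rangle)|\cdot\|x\|^2]$, uses $\EE|\phi_\sigma'|^p\le C\sigma^{-(p-1)}\ln(1/\sigma)$, and optimizes $q=\ln(1/\sigma)$. You correctly flagged this as the main obstacle, but the resolution is not ``double-checking''---it requires this specific argument.

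\textbf{(2) Handling the normalization step.} You propose to control the projection via contraction or geodesic distance. The paper's device is much simpler: since $L_\sigma(w)=\EE\,\phi_\sigma(y\langle w,x\rangle/\|w\|)$ is invariant under positive scaling, $L_\sigma(w_i)=L_\sigma(v_i)$ outright. So one applies the smoothness inequality along the segment from $w_{i-1}$ to $v_i$---which lies entirely in $\{\|w\|\ge1\}$ by the perpendicularity $g_i\perp w_{i-1}$---to get $L_\sigma(v_i)-L_\sigma(w_{i-1})\le -\beta\langle\nabla L_\sigma(w_{i-1}),g_i\rangle+\tfrac{\beta^2 L}{2}\|g_i\|^2$, and then the normalization costs nothing.

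\textbf{(3) Label-query concentration.} The per-iteration query indicators $Z_i$ are \emph{not} independent: $Z_i$ depends on $w_{i-1}$, which depends on the entire past trajectory. So a Chernoff bound for independent $[0,1]$ variables does not apply. The paper instead observes that $M_i=\sum_{j\le i}(Z_j-\EE[Z_j\mid\Gcal_{j-1}])$ is a bounded-increment martingale and applies Azuma's inequality (Lemma~\ref{lem:high-prob-label}), combined with the per-call expected-query bound $\EE[Z_j\mid\Gcal_{j-1}]=\tilde O(\sigma)$.
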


\begin{remark}
\cite{arjevani2022lower} shows that under the assumption of smooth objective and bounded expected squared norm of the stochastic gradient, RSG achieves the optimal first-order oracle complexity.
Thus, we speculate that the iteration complexity in Lemma~\ref{lem:active-SGD} cannot be improved significantly using some other algorithm. 
\end{remark}

    The idea of \apsgd bears similarity with the standard iterative optimization method, with some remarkable innovation. 
    The key insight of stochastic gradient descent is that by obtaining an unbiased stochastic gradient, each iteration is making progress toward achieving the optimization goal in expectation. 
    In the passive learning setup, the typical way of 
    implementing the stochastic gradient oracle is to 
    sample $(x,y)$ from the labeled data distribution $D$.

    We show that in our active learning setup, the stochastic gradient oracle can be implemented more label-efficiently by the \afo procedure (Algorithm~\ref{alg:first-order-oracle}). In $\afo$,
    we carefully design a function of query probability $q(w,x) := \sigma \abs{\phi_\sigma'(\inner{\frac{w}{\|w\|}}{x} ) }$ - not all unlabeled example $x$ are equally important or equally informative for our optimization purpose. Intuitively, the closer the $x$ lies to the decision boundary of halfspace $w$, the more informative it is (and as a consequence, we query the label for this $x$ with higher probability), because the current $w$ is less confident in labeling $x$ - this idea coincides with the renowned margin-based method~\citep{balcan2013active,wang2016noise,awasthi2015efficient,awasthi2016learning} in the active learning literature.

\begin{algorithm}[t]
\caption{\afo: stochastic gradient oracle for $L_\sigma$ exploiting active learning
}
\label{alg:first-order-oracle}
\begin{algorithmic}[1]
\STATE {\bfseries Input:}{Unit vector $w$}

\STATE Sample $x$ from $D_X$
\STATE Draw $Z \sim \Ber(q(w,x))$, where the query probability $q(w,x) := \sigma \abs{\phi_\sigma'(\inner{\frac{w}{\|w\|}}{x} ) }$
\IF{$Z = 1$}
\STATE $y \gets$ query the labeling oracle on example $x$
\STATE {\bfseries Return:} $h(w,x,y) :=
        - \frac{1}{\sigma} y \rbr{\frac{x}{\|w\|_2} - \frac{\inner{w}{x} w}{\|w\|_2^3} } $
\ELSE 
\STATE {\bfseries Return:} $0$
\ENDIF
\end{algorithmic}
\end{algorithm}
Whenever \afo is invoked with an input unit vector $w$, it firstly draws an unlabeled example $x$ from $D_X$, and computes the label query probability on this $x$ according to $q(w,x)
$. Note that $q(w,x)$ is a valid probability, i.e., $0 \leq q(w,x) \leq 1$ for all $w,x \in \RR^d$, since $\abs{\phi_\sigma'(t)} \leq \frac{1}{\sigma}$ for all $t \in \RR$. Then it draws a Bernoulli random variable $Z$ with success probability $q(w,x)$. If $Z = 1$, then \afo outputs the vector $h(w,x,y) :=
        - \frac{1}{\sigma} y \rbr{\frac{x}{\|w\|_2} - \frac{\inner{w}{x} w}{\|w\|_2^3} } $, otherwise, it outputs a zero vector.

\begin{remark}
We show in item~\ref{item:expected-label} of Lemma~\ref{lem:active-oracle-property} that \afo is label-efficient; furthermore, 
although \afo only queries labels for a fraction of unlabeled examples $x$ it happens to sample, \afo preserves the bound on the expected squared norm of the stochastic gradient (see Claim~\ref{claim:passive-variance}), resulting in the same iteration complexity $N$ as passively querying the labels for all $x$. 
\end{remark}

\begin{remark}
\afo is inspired by~\cite{guillory2009active}, where it provides sampling rules \textbf{Query}$(w,x)$ and update rules \textbf{Update}$(w,x,y)$ for several commonly used margin-based losses. While this work exhibits some experimental results, it does not provide a theoretical analysis of this query strategy.  
\end{remark}

Despite being simple the oracle behavior at first sight, \afo enjoys several properties that turn out to be essential in guaranteeing the desirable performance in our main algorithm, Algorithm~\ref{alg:main}. 
We present Lemma~\ref{lem:active-oracle-property} for the delicate properties of \afo. 
\begin{lemma}
    Let $g_w$ be the random output of $\afo(w)$. We have, for any unit vector $w$:
    \begin{enumerate}
        \item $g_w$ is perpendicular to $w$;
        \label{item:perp}
        \item $g_w$ is an unbiased estimator of $\nabla L_\sigma(w): \EE \sbr{g_w}= \nabla L_\sigma(w)$;
        \label{item:unbiased}
        \item $\EE \sbr{\|g_w \|^2} \leq \tilde{O} (\frac{d}{\sigma})$;
        \label{item:variance}
        \item The expected number of label queries per call to \afo is $\tilde{O}(\sigma)$.
        \label{item:expected-label}
    \end{enumerate}
\label{lem:active-oracle-property}

\end{lemma}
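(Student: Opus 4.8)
Since \afo is only ever called on unit vectors (line~\ref{line:iterative-optimization} passes it outputs of \apsgd, which are normalized, and the statement is for unit $w$), I work throughout with $\|w\| = 1$, so that $h(w,x,y) = -\frac1\sigma y\rbr{x - \inner{w}{x} w}$ and $q(w,x) = \sigma\abs{\phi_\sigma'(\inner{w}{x})}$. Items~\ref{item:perp} and~\ref{item:unbiased} both follow from a single identity: since $\phi_\sigma'(t) = -\frac1\sigma\frac{e^{t/\sigma}}{(1+e^{t/\sigma})^2}$ is nonpositive and even, we have $\abs{\phi_\sigma'(\inner{w}{x})} = -\phi_\sigma'(\inner{w}{x})$ and $\phi_\sigma'(y\inner{w}{x}) = \phi_\sigma'(\inner{w}{x})$ for $y\in\cbr{\pm1}$, hence
\[
q(w,x)\, h(w,x,y) \;=\; \phi_\sigma'(\inner{w}{x})\, y\rbr{x - \inner{w}{x} w},
\]
and the right-hand side is exactly $\nabla_w \phi_\sigma\rbr{y\frac{\inner{w}{x}}{\|w\|}}$ evaluated on the unit sphere, using $\nabla_w\frac{\inner{w}{x}}{\|w\|}\big|_{\|w\|=1} = x - \inner{w}{x} w$. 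For item~\ref{item:perp}, $\inner{h(w,x,y)}{w} = -\frac1\sigma y\rbr{\inner{w}{x} - \inner{w}{x}\|w\|^2} = 0$, and $0\perp w$, so $g_w\perp w$ surely. For item~\ref{item:unbiased}, conditioning on $x$ the Bernoulli $Z$ is drawn independently of the queried label $y$, so $\EE[g_w\mid x] = q(w,x)\EE_{y\mid x}[h(w,x,y)]$; taking the further expectation over $(x,y)$ and invoking the identity gives $\EE[g_w] = \EE_{(x,y)\sim D}\sbr{\phi_\sigma'(\inner{w}{x})\, y(x - \inner{w}{x}w)} = \nabla L_\sigma(w)$, where the last step interchanges $\nabla$ and $\EE$, a routine dominated-convergence argument justified by the sub-exponential marginal tails of $D_X$ and $\abs{\phi_\sigma'}\le\frac1{4\sigma}$.

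Item~\ref{item:variance} is the technical heart of the lemma and the step I expect to be the main obstacle. Conditioning on $x$, $\|h(w,x,y)\|^2 = \frac1{\sigma^2}\|x - \inner{w}{x}w\|^2$ does not depend on $y$, so $\EE\sbr{\|g_w\|^2} = \frac1\sigma\EE_x\sbr{\abs{\phi_\sigma'(\inner{w}{x})}\,\|x - \inner{w}{x}w\|^2}$. The difficulty is that $\|x - \inner{w}{x}w\|^2$, the squared norm of $x$'s projection onto $w^\perp$, is typically of order $d$, so a crude bound costs an extra factor of $d$. The resolution is to fix an orthonormal basis $e_1',\dots,e_{d-1}'$ of $w^\perp$ and write $\|x - \inner{w}{x}w\|^2 = \sum_{j=1}^{d-1}\inner{e_j'}{x}^2$, so that the expectation becomes $\sum_{j=1}^{d-1}\EE_x\sbr{\abs{\phi_\sigma'(\inner{w}{x})}\inner{e_j'}{x}^2}$, and each summand is an integral against the $2$-dimensional density $p_{V_j}$ on $V_j := \spn\cbr{w, e_j'}$ — precisely the object that well-behavedness controls. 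For each $j$ I would split at $\abs{\inner{e_j'}{x}} = T$: on $\abs{\inner{e_j'}{x}}\le T$, use $p_{V_j}\le U$ together with $\int_\RR\abs{\phi_\sigma'(u)}\,\dif u = 1$ to get a contribution $O(UT^3)$; on $\abs{\inner{e_j'}{x}} > T$, use $\abs{\phi_\sigma'}\le\frac1{4\sigma}$ together with the tail bound $\PP(\abs{\inner{e_j'}{x}} > t)\le \exp(1 - t/\beta)$ to get $O\rbr{\frac1\sigma(T^2 + \beta T + \beta^2)\exp(1-T/\beta)}$. Choosing $T = \Theta\rbr{\beta\ln\frac1\sigma} = \polylog$ makes the far piece $O(\sigma)$, so every summand is $\polylog$, the sum over $j$ is $\tilde{O}(d)$, and $\EE\sbr{\|g_w\|^2}\le\frac1\sigma\tilde{O}(d) = \tilde{O}(d/\sigma)$. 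The bookkeeping point that is easy to get wrong — and the reason the split is needed — is to apply the $L^1$ bound $\int\abs{\phi_\sigma'} = 1$ on the near-boundary band and the $L^\infty$ bound $\abs{\phi_\sigma'}\le\frac1{4\sigma}$ only on the exponentially rare far region; using the $L^\infty$ bound everywhere would yield the inferior $\tilde{O}(d/\sigma^2)$.

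Finally, item~\ref{item:expected-label}: one call to \afo queries $\Ber(q(w,x))$ labels given $x$, so the expected number of queries is $\EE_x[q(w,x)] = \sigma\,\EE_x\sbr{\abs{\phi_\sigma'(\inner{w}{x})}}$. I would bound $\EE_x\sbr{\abs{\phi_\sigma'(\inner{w}{x})}}$ by the same $2$-dimensional split as in item~\ref{item:variance}, now with the trivial weight $\inner{e'}{x}^0 = 1$: for any unit $e'\perp w$ and $V := \spn\cbr{w,e'}$, the band $\abs{\inner{e'}{x}}\le T$ contributes $\le U\cdot 2T\cdot\int_\RR\abs{\phi_\sigma'} = O(UT)$, and the complement contributes $\le\frac1{4\sigma}\PP(\abs{\inner{e'}{x}} > T)\le\frac1{4\sigma}\exp(1-T/\beta)$; taking $T = \polylog$ gives $\EE_x\sbr{\abs{\phi_\sigma'(\inner{w}{x})}} = \tilde{O}(1)$ and hence an expected label count of $\tilde{O}(\sigma)$, completing the lemma.
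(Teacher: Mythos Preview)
Your proof of items~\ref{item:perp} and~\ref{item:unbiased} matches the paper's. For items~\ref{item:variance} and~\ref{item:expected-label}, your argument is correct but follows a genuinely different route. The paper bounds $\EE\sbr{\|g_w\|^2}$ by first dropping the projection, $\|x - \inner{w}{x}w\|^2 \le \|x\|^2$, and then applying H\"older's inequality $\EE\sbr{|\phi_\sigma'|\,\|x\|^2} \le (\EE|\phi_\sigma'|^p)^{1/p}(\EE\|x\|^{2q})^{1/q}$; it controls each factor via two auxiliary lemmas (a slab-decomposition bound $\EE|\phi_\sigma'|^p \le C\sigma^{-(p-1)}\ln\frac1\sigma$ and a moment bound $\EE\|x\|^{2q} \le \Gamma(2q+1)e\beta^{2q}d^q$), and then optimizes at $q = \ln\frac1\sigma$ so that $\sigma^{-1/q}$ becomes constant. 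Item~\ref{item:expected-label} in the paper simply reuses the $p=1$ case of the first auxiliary lemma. Your approach instead expands $\|x - \inner{w}{x}w\|^2 = \sum_j \inner{e_j'}{x}^2$ into $d-1$ two-dimensional integrals and handles each by a direct truncation at $|\inner{e_j'}{x}| = T$, exploiting $\int_\RR |\phi_\sigma'| = 1$ on the near band and $|\phi_\sigma'| \le \frac1{4\sigma}$ on the sub-exponential tail. This is more elementary --- it avoids both H\"older and the $q$-optimization trick --- and uses the $2$-dimensional density bound in the well-behaved definition more directly; the paper's route, on the other hand, is more modular, since the two moment lemmas are reused verbatim in its smoothness bound for $L_\sigma$.
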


Furthermore, as we will see in the next subsection, \afo is not only utilized in the iterative non-convex optimization procedure \apsgd, but also in the iterate selection, both of which help reduce the label complexity of the overall algorithm. 

\subsection{Label-efficient iterate selection to boost the success probability (lines~\ref{line:rep-start} to~\ref{line:selection-1-end}) }

Recall that \apsgd only guarantees that $\| \nabla L_\sigma(w) \| \leq \rho$ with a constant probability. To achieve the $(\epsilon,\delta)$-PAC learning goal, lines~\ref{line:rep-start} to~\ref{line:selection-1-end} in Algorithm~\ref{alg:main} boost the above success probability to $1-O(\delta)$.

At a high level, our method 
follows the classic trick of re-running an algorithm for multiple independent trials and picking the best output. 
One naive idea to pick the best output, is to sample a set of validation examples from $D$ and pick the $w$ in $\cbr{w_1, -w_1, \ldots, w_S, -w_S}$ that has the lowest validation error. An application of Hoeffding's inequality shows that, setting the validation sample size to $\tilde{O} \rbr{\frac{1}{\epsilon^2}}$ suffices to find a desired halfspace whose excess error at most $\epsilon$. Together with the labeling cost in the iterative non-convex optimization, it yields a total label complexity of 
$\tilde{O} \rbr{d (\frac{1}{\epsilon})^\frac{8-6\alpha}{3\alpha-1}  
+ \frac{1}{\epsilon^2}}$, 
which is substantially suboptimal to our current label complexity $\tilde{O} \rbr{
 d (\frac{1}{\epsilon})^\frac{8-6\alpha}{3\alpha-1} }$ in Theorem~\ref{thm:main}, when $\alpha \geq \frac{5}{6}$. 
Here, we design a specialized procedure that achieves better label efficiency by re-utilizing our label-efficient first-order stochastic gradient oracle \afo. 

The idea of conducting the iterate selection by the gradient norm instead of the validation error is largely inspired by the two-phase RSG (2-RSG) in~\cite{ghadimi2013stochastic}, where the analysis on the total number of first-order oracle calls is under the assumption of sub-gaussian stochastic gradient. We show that the output of \afo is sub-exponential (Lemma~\ref{lem:sub-exponential}) and re-analyze the oracle complexity.

We re-run Algorithm~\ref{alg:active-PSGD-finding-stationary-point} independently for $S$ times, which ensures that the probability that no $w$ in $\cbr{w_s: s \in [S]}$ has $\|\nabla L_\sigma(w) \| \leq \rho$ is $2^{-\Theta(S)}$.  
The selection step using the first-order stochastic gradient oracle \afo (lines~\ref{line:selection-1-start} to~\ref{line:selection-1-end})
is done as follows. After we obtain one halfspace candidate $w_s$ in each iteration, we call \afo $M_1$ times and take the average of all outputs, to obtain a good estimate $\bar{g}_s$ of the gradient of $L_\sigma(w_s)$; therefore, $\| \bar{g}_s \|$ closely approximates $\nabla L_\sigma(w_s)$.
After we collect gradient estimates for all $S$ candidate halfspaces, we pick the one with the smallest gradient norm estimate $\| \bar{g}_s \|$. We show that our label-efficient iterate selection procedure (lines~\ref{line:rep-start} to~\ref{line:selection-1-end}) enjoys the following performance guarantee: 

\begin{lemma}
Let $\rho, \sigma$ be defined as in line~\ref{line:parameter-set-up} of Algorithm~\ref{alg:main}. Suppose $w_1, \ldots, w_S$ are such that $\min_i \|\nabla L_\sigma (w_i) \| \leq \rho$, then after executing lines~\ref{line:selection-1-start} to~\ref{line:selection-1-end} of Algorithm~\ref{alg:main}, with \[
    M_1 =
    c \frac{d}{\sigma^2 \rho^2} \ln\frac{S}{\delta}
    \]
    for some constant $c$,
    with probability at least $1-\delta/6$, $\tilde{w}$ satisfies \[
     \|\nabla L_\sigma (\tilde{w}) \| \leq 2\rho. 
     \]
     Furthermore, after $M_1$ calls to \afo, with probability at least $1-\frac{\delta}{6S}$, the total number of label queries is at most
     $T_2 := \tilde{O} (d (\frac{1}{\epsilon})^\frac{4-2\alpha}{3\alpha-1} )$. 
     \label{lem:selection-1}
\end{lemma}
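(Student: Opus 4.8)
The statement splits into two parts — the correctness bound $\|\nabla L_\sigma(\tilde w)\|\le 2\rho$ and the label-complexity bound $T_2$ — and both rest on the properties of \afo recorded in Lemma~\ref{lem:active-oracle-property}, together with the fact that \afo's output is sub-exponential (Lemma~\ref{lem:sub-exponential}). Throughout, $w_1,\dots,w_S$ are treated as fixed (the lemma conditions on them), so all randomness lives in the fresh examples and Bernoulli coins drawn inside the $S\cdot M_1$ calls to \afo in lines~\ref{line:selection-1-start}--\ref{line:selection-1-end}.

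\emph{Correctness.} Fix $s$. By item~\ref{item:unbiased} of Lemma~\ref{lem:active-oracle-property}, $g_{s,1},\dots,g_{s,M_1}$ are i.i.d.\ with mean $\nabla L_\sigma(w_s)$; by item~\ref{item:variance} their second moment is $\tilde O(d/\sigma)$; and by Lemma~\ref{lem:sub-exponential} the norm $\|g_{s,i}\|$ is sub-exponential, with scale I expect to be of order $\tilde O(\sqrt d/\sigma)$. I would then invoke a dimension-free Bernstein–Orlicz inequality for sums of independent sub-exponential random vectors to control $\|\bar g_s-\nabla L_\sigma(w_s)\|$: for the stated $M_1 = c\,\frac{d}{\sigma^2\rho^2}\ln\frac S\delta$ this deviation is at most $\rho/2$ with probability at least $1-\frac{\delta}{6S}$, and a union bound over $s\in[S]$ yields a good event of probability $\ge 1-\delta/6$ on which $\|\bar g_s-\nabla L_\sigma(w_s)\|\le \rho/2$ for all $s$. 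On this event, pick $s_0\in\argmin_i\|\nabla L_\sigma(w_i)\|$, so $\|\bar g_{s_0}\|\le \|\nabla L_\sigma(w_{s_0})\|+\rho/2\le \tfrac32\rho$; since $s^\star=\argmin_s\|\bar g_s\|$ we get $\|\bar g_{s^\star}\|\le\tfrac32\rho$, hence $\|\nabla L_\sigma(\tilde w)\|=\|\nabla L_\sigma(w_{s^\star})\|\le\|\bar g_{s^\star}\|+\rho/2\le 2\rho$.

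\emph{Label complexity.} Lines~\ref{line:selection-1-start}--\ref{line:selection-1-end} make $S\cdot M_1$ calls to \afo; by item~\ref{item:expected-label} of Lemma~\ref{lem:active-oracle-property} each call queries a label with probability $q(w_s,x)\le 1$ and hence $\tilde O(\sigma)$ labels in expectation. The total label count is therefore a sum of $S M_1$ independent Bernoulli variables with mean $\tilde O(\sigma S M_1)=\tilde O(d/(\sigma\rho^2))$ (using $S=\log\frac6\delta=\tilde O(1)$), and this mean comfortably dominates $\log\frac1\delta$. A multiplicative Chernoff bound, applied to each batch of $M_1$ calls with failure $\le\frac{\delta}{6S}$ and combined by a union bound over the $S$ batches, then shows the total is $\tilde O(d/(\sigma\rho^2))$ with probability $\ge 1-\delta/6$. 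Finally, substituting the settings of line~\ref{line:parameter-set-up}: since $\sigma\rho^2=\tilde\Theta\!\big(\theta_0^{\frac{2\alpha}{3\alpha-1}}\cdot\theta_0^{\frac{4(1-\alpha)}{3\alpha-1}}\big)=\tilde\Theta\!\big(\theta_0^{\frac{4-2\alpha}{3\alpha-1}}\big)$ and $\theta_0=\tilde\Theta(\epsilon)$, we obtain $\frac{d}{\sigma\rho^2}=\tilde O\!\big(d\,(\tfrac1\epsilon)^{\frac{4-2\alpha}{3\alpha-1}}\big)=T_2$.

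\emph{Main obstacle.} The only nonroutine step is the vector concentration in the correctness part: \afo's output is merely sub-exponential (not sub-gaussian) and lives in $\RR^d$, so one needs a dimension-free Bernstein-type tail for i.i.d.\ sub-exponential random vectors, and it is the interplay between the variance $\tilde O(d/\sigma)$ (item~\ref{item:variance}) and the sub-exponential scale (Lemma~\ref{lem:sub-exponential}) — with the squared scale $\tilde O(d/\sigma^2)$ governing the relevant regime of the tail at target accuracy $t=\rho/2$ — that dictates $M_1=\tilde\Theta(d/(\sigma^2\rho^2))$ rather than $\tilde\Theta(d/(\sigma\rho^2))$. The remaining pieces — the best-candidate triangle inequality, the Chernoff bound on the label count, and the exponent bookkeeping — are straightforward.
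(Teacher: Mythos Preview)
Your proposal is correct and follows essentially the same route as the paper: the triangle-inequality selection argument is identical, the sub-exponential vector concentration is exactly what the paper invokes (it cites Theorem~2.1 of \cite{juditsky2008large}, recorded as Lemma~\ref{lem:sub-exp-concentration}, with scale $K=\tilde\Theta(\sqrt d/\sigma)$ from Lemma~\ref{lem:sub-exponential}), and the label-count bound in the paper is obtained via Azuma's inequality (Lemma~\ref{lem:high-prob-label}) rather than Chernoff, which is equivalent here since the query indicators are independent once $w_1,\dots,w_S$ are fixed. The only cosmetic discrepancy is that the lemma (and the paper's proof) states the label bound for a single batch of $M_1$ calls with failure probability $\frac{\delta}{6S}$, whereas you aggregate over all $S$ batches; since $S=\tilde O(1)$ this does not affect $T_2$.
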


Lemma~\ref{lem:selection-1} shows that, if there exists $w$ in $\cbr{w_s: s \in [S]}$ such that $\|\nabla L_\sigma(w) \| \leq \rho$ (which is true with high probability after we re-run \apsgd independently for $S$ times), then after executing lines~\ref{line:selection-1-start} to~\ref{line:selection-1-end}, we have, with high probability, $\|\nabla L_\sigma (\tilde{w}) \| \leq 2\rho$. 

To achieve label efficiency in the selection procedure, we prove in Lemma~\ref{lem:sub-exponential} that the stochastic gradients output by \afo are sub-exponential, and we apply a large-deviation bound of vector-valued sub-exponential random variables 
(Lemma~\ref{lem:sub-exp-concentration}, which is Theorem 2.1 in~\cite{juditsky2008large})
to prove the performance guarantee of this iterate selection in Lemma~\ref{lem:selection-1}. 
In this way, the labeling cost in the iterate selection step is of lower order than that in the iterative non-convex optimization.

\subsection{Label-efficient final iterate selection (lines~\ref{line:selection-2-start} to~\ref{line:selection-2-end})}

Up to now, combining Lemmas~\ref{lem:active-SGD} and~\ref{lem:selection-1}, we have successfully shown, with high probability, 
either $\tilde{w}$ or $-\tilde{w}$ has an excess error at most $\epsilon$. Our last task is to pick the right one out of the pair. To this end, we draw $M_2 = \tilde{O}(1)$ iid labeled examples from $D$, and pick the one from $\cbr{ \pm \tilde{w}}$ that has a lower empirical error on this sample set. 
We have the following lemma on the performance guarantee on the final iterate selection phase:

\begin{lemma}
    Suppose $\tilde{w}$ satisfies that $\exists w \in \cbr{\pm \tilde{w} }$, such that $\err(w) - \err(w^*) \leq \epsilon$ with $\epsilon \leq \frac12 \alpha (\frac{1}{A})^\frac{1-\alpha}{\alpha}$, then after executing lines~\ref{line:selection-2-start} to~\ref{line:selection-2-end} of Algorithm~\ref{alg:main}, where
    $M_2 =  O\rbr{ A^\frac{2-2\alpha}{\alpha} \frac{1}{\alpha^2} \ln \frac{1}{\delta} } $
    , we have that
    with probability at least $1-\delta/3$, $\hat{w}$ satisfies \[
    \err(\hat{w}) - \err(w^*) \leq \epsilon
    \]
    \label{lem:selection-2}
\end{lemma}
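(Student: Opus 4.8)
The plan is to reduce picking $\hat w$ out of $\cbr{\pm\tilde w}$ to a two-hypothesis test, using the structural fact that under $(A,\alpha)$-TNC the error rates of $\tilde w$ and $-\tilde w$ are separated by a constant margin $\gamma := \alpha(\tfrac1A)^{\frac{1-\alpha}{\alpha}}$.

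First I would record two basic facts. (i) \emph{Antipodal error identity:} for every $w$, $\err(w) + \err(-w) = 1$. Indeed, writing $b(x) := \sign(\inner{w^*}{x})$, one has $\err(w) = \EE_x\sbr{\eta(x)\ind(h_w(x) = b(x)) + (1-\eta(x))\ind(h_w(x)\neq b(x))}$; passing from $w$ to $-w$ swaps the two indicators (legitimate because $D_X$ is well-behaved, so by the density upper bound in Definition~\ref{def:well-behaved}, $\PP_{D_X}(\inner{w}{x}=0)=0$), so the sum telescopes to $\EE_x[\eta(x)+(1-\eta(x))]=1$. (ii) \emph{Bayes-margin lower bound:} $\tfrac12 - \err(w^*) = \EE_x[\tfrac12 - \eta(x)] \ge \gamma$. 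Since $\tfrac12 - \eta(x)\in[0,\tfrac12]$, write $\EE_x[\tfrac12-\eta(x)] = \int_0^{1/2}\PP_D(\tfrac12-\eta(x) > t)\,\dif t$; for any threshold $s\in[0,\tfrac12]$, bounding $\PP_D(\cdot)\ge 0$ on $[s,\tfrac12]$ and using TNC (Definition~\ref{def:tnc}) on $[0,s]$ gives $\EE_x[\tfrac12-\eta(x)] \ge \int_0^s(1 - At^{\frac{\alpha}{1-\alpha}})\,\dif t$; optimizing over $s$ and taking $s = T := (\tfrac1A)^{\frac{1-\alpha}{\alpha}}$ yields the value $\alpha T = \gamma$. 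Here $T\le\tfrac12$, so the choice $s=T$ is admissible, because TNC at $t=\tfrac12$ reads $1 = \PP_D(\tfrac12-\eta(x)\le\tfrac12)\le A\,2^{-\frac{\alpha}{1-\alpha}}$, forcing $A\ge 2^{\frac{\alpha}{1-\alpha}}$.

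Combining the hypothesis with (i) and (ii): let $w^+\in\cbr{\pm\tilde w}$ be the candidate with $\err(w^+)-\err(w^*)\le\epsilon$ and $w^-$ the other; then by (i), $\err(w^-) - \err(w^+) = 1 - 2\err(w^+) \ge 2(\tfrac12 - \err(w^*)) - 2\epsilon \ge 2\gamma - 2\epsilon \ge \gamma$, using $\epsilon\le\tfrac12\gamma$ (the stated assumption). Thus $w^+$ is the strictly better of the two, by a margin $\gamma$; in particular the hypothesis cannot hold for both antipodes, since $\err(w^*)\le\tfrac12-\gamma$ would otherwise contradict $\err(w^+)+\err(w^-)=1$. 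Now apply Hoeffding's inequality to the two fixed classifiers $\pm\tilde w$ on the $M_2$ i.i.d. labeled examples: with $M_2 = O(\gamma^{-2}\ln\tfrac1\delta)$ --- and $\gamma^{-2} = \alpha^{-2}A^{\frac{2-2\alpha}{\alpha}}$, matching the stated value of $M_2$ --- a union bound gives that, with probability at least $1-\delta/3$, $\abs{\err_{S}(w) - \err(w)} \le \gamma/4$ for both $w\in\cbr{\pm\tilde w}$. On this event, $\err_{S}(w^+) \le \err(w^+) + \gamma/4 < \err(w^-) - \gamma/4 \le \err_{S}(w^-)$, so $\hat w = \argmin_{w\in\cbr{\pm\tilde w}}\err_S(w) = w^+$, and therefore $\err(\hat w) - \err(w^*) = \err(w^+) - \err(w^*) \le \epsilon$.

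The step I expect to carry the real content is (ii), the constant Bayes-margin bound: one must integrate the TNC tail and choose the truncation level optimally, and in particular verify that the optimal level does not exceed $\tfrac12$ --- which is exactly where the normalization $A\ge 2^{\alpha/(1-\alpha)}$ (itself a consequence of TNC at $t=\tfrac12$) is used, and where the precise form $\gamma = \alpha(\tfrac1A)^{\frac{1-\alpha}{\alpha}}$ of the margin (hence the precise $M_2$) comes out. The remainder is a standard Hoeffding-plus-union-bound selection argument, the only subtlety being to rule out the degenerate possibility that both $\pm\tilde w$ satisfy the hypothesis, which (i) and (ii) dispatch.
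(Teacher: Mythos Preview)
Your proposal is correct and follows essentially the same approach as the paper: both establish the Bayes-margin bound $\err(w^*) \leq \tfrac12 - \alpha(\tfrac1A)^{\frac{1-\alpha}{\alpha}}$ (the paper isolates this as an auxiliary lemma, while you prove it inline via the same TNC-tail integration), use the antipodal identity $\err(w)+\err(-w)=1$ to separate the two candidates by a constant margin, and finish with Hoeffding plus a union bound over $\{\pm\tilde w\}$. The only cosmetic difference is that the paper exploits the empirical version of the antipodal identity ($\err_S(w)+\err_S(-w)=1$) to reduce the comparison to checking $\err_S(\bar w)<\tfrac12$, whereas you compare $\err_S(w^+)$ and $\err_S(w^-)$ directly; both routes give the same $M_2$.
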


Lemma~\ref{lem:selection-2} shows that, if the target error $\epsilon$ satisfies $\epsilon \leq \frac12 \alpha (\frac{1}{A})^\frac{1-\alpha}{\alpha}$ (a constant), then a constant number $M_2$ of labeled examples suffice to 
find, with high probability, the one with desired excess error guarantee.

\section{Performance Guarantees}

\begin{theorem}
Suppose $D$ satisfies $(A, \alpha)$-Tsybakov noise condition with $\alpha  \in (\frac{1}{3}, 1]$ and the marginal distribution $D_X$ is well-behaved.
For any $\epsilon \leq \min( \tilde{\Theta}(A), \frac12 \alpha (\frac{1}{A})^\frac{1-\alpha}{\alpha})$, and $\delta \in (0,1)$, with probability at least $1 - \delta$, Algorithm~\ref{alg:main} outputs a halfspace $\hat{w}$, such that $\err(\hat{w}) - \err(w^*) \leq \epsilon$. 
In addition, 
its total number of label queries is at most 
$\tilde{O} \rbr{ 
d A^\frac{9-9\alpha}{3\alpha-1} (\frac{1}{\epsilon})^\frac{8-6\alpha}{3\alpha-1}
}   $.
\label{thm:main}
\end{theorem}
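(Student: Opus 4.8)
The plan is to chain together the three structural lemmas (Lemmas~\ref{lem:active-SGD}, \ref{lem:selection-1}, \ref{lem:selection-2}) through a union bound, and then verify that the parameter choices in line~\ref{line:parameter-set-up} of Algorithm~\ref{alg:main} make the correctness preconditions hold and make the label complexities add up to the claimed bound. First I would establish correctness. By Lemma~\ref{lem:active-SGD}, each independent run of \apsgd produces $w_s$ with $\|\nabla L_\sigma(w_s)\| \le \rho$ with probability at least $\frac12$; since the $S = \log\frac{6}{\delta}$ runs are independent, the event $\mathcal{E}_1 := \{\min_{s\in[S]} \|\nabla L_\sigma(w_s)\| \le \rho\}$ fails with probability at most $2^{-S} \le \frac{\delta}{6}$. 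On $\mathcal{E}_1$, Lemma~\ref{lem:selection-1} applies, so with probability at least $1-\frac{\delta}{6}$ the selected $\tilde w$ satisfies $\|\nabla L_\sigma(\tilde w)\| \le 2\rho$; call this event (intersected with $\mathcal{E}_1$) $\mathcal{E}_2$. Next I would invoke Lemma~\ref{lem:gradient-logistic-TNC} in its contrapositive form: since $\sigma = \Theta(\theta_0^{\frac{2\alpha}{3\alpha-1}})$ and $2\rho = \Omega(\theta_0^{\frac{2(1-\alpha)}{3\alpha-1}})$ are exactly the thresholds from that lemma with $\theta = \theta_0$, the bound $\|\nabla L_\sigma(\tilde w)\| \le 2\rho$ forces $\theta(\tilde w, w^\star) \notin (\theta_0, \pi - \theta_0)$, i.e.\ $\min(\theta(\tilde w, w^\star), \theta(-\tilde w, w^\star)) \le \theta_0$. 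Here I must check that the precondition $\theta_0 \le \Theta(A)$ of Lemma~\ref{lem:gradient-logistic-TNC} holds, which follows from the hypothesis $\epsilon \le \tilde\Theta(A)$ together with $\theta_0 = O(\frac{\epsilon}{\ln^2(1/\epsilon)})$. Then I would use Lemma~\ref{lem:prob-angle} (the angle-to-excess-error conversion referenced in the text) to conclude that whichever of $\pm\tilde w$ is within angle $\theta_0$ of $w^\star$ has excess error at most $\epsilon$ — this is where the specific choice $\theta_0 = O(\frac{1}{\ln^2(1/\epsilon)}\cdot\frac{\epsilon}{2})$ is calibrated (the $\ln^2$ factor absorbs the $\beta$-tail term in the well-behaved density). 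Finally, on $\mathcal{E}_2$ the precondition of Lemma~\ref{lem:selection-2} is met (using $\epsilon \le \frac12\alpha(\frac1A)^{\frac{1-\alpha}{\alpha}}$), so with probability at least $1-\frac{\delta}{3}$ the final $\hat w$ has $\err(\hat w) - \err(w^\star) \le \epsilon$. A union bound over the failure events ($\frac{\delta}{6} + \frac{\delta}{6} + \frac{\delta}{3} = \frac{2\delta}{3} \le \delta$, with a bit of slack for the label-count events below) gives the claimed $1-\delta$ success probability.

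For the label complexity, I would sum the three contributions: the $S$ runs of \apsgd contribute at most $S \cdot T_1 = \tilde O(d(\frac1\epsilon)^{\frac{8-6\alpha}{3\alpha-1}})$ labels (the $S = \log\frac6\delta$ factor is absorbed into $\tilde O$), the iterate-selection phase contributes $S\cdot T_2 = \tilde O(d(\frac1\epsilon)^{\frac{4-2\alpha}{3\alpha-1}})$ by Lemma~\ref{lem:selection-1}, and the final selection contributes $M_2 = O(A^{\frac{2-2\alpha}{\alpha}}\frac{1}{\alpha^2}\ln\frac1\delta) = \tilde O(1)$ labels by Lemma~\ref{lem:selection-2}. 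Since $\frac{8-6\alpha}{3\alpha-1} > \frac{4-2\alpha}{3\alpha-1} > 0$ for $\alpha \in (\frac13,1]$, the first term dominates, and the total is $\tilde O(d(\frac1\epsilon)^{\frac{8-6\alpha}{3\alpha-1}})$. To get the explicit $A$-dependence $A^{\frac{9-9\alpha}{3\alpha-1}}$ stated in the theorem, I would substitute $\sigma = \Theta((\frac1A)^{\frac{1-\alpha}{3\alpha-1}}\theta_0^{\frac{2\alpha}{3\alpha-1}})$ and $\rho = \Theta((\frac1A)^{\frac{2(1-\alpha)}{3\alpha-1}}\theta_0^{\frac{2(1-\alpha)}{3\alpha-1}})$ into $T_1 = \tilde O(\frac{d}{\sigma^2\rho^4})$ and track the powers of $A$: $\sigma^2 \rho^4$ carries $A^{-\frac{2(1-\alpha)}{3\alpha-1} - \frac{8(1-\alpha)}{3\alpha-1}} = A^{-\frac{10(1-\alpha)}{3\alpha-1}}$, which after one more careful accounting of the $\theta_0$-to-$\epsilon$ and label-efficiency ($\tilde O(\sigma)$ per oracle call) factors yields the $A^{\frac{9-9\alpha}{3\alpha-1}}$ exponent; this is a bookkeeping step rather than a conceptual one.

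The part I expect to require the most care is not any single step but the bookkeeping that reconciles the three separate high-probability "label count" events with the correctness events, and the verification that the input parameters $N, \beta$ that Algorithm~\ref{alg:main} feeds to \apsgd (namely $N = \tilde O(\frac{d}{\sigma^2\rho^4})$, $\beta = \tilde\Theta(\frac{\rho^2\sigma^2}{d})$) are exactly the ones Lemma~\ref{lem:active-SGD} requires — so that the lemma is applicable verbatim. There is also a subtlety in that Lemma~\ref{lem:active-SGD} and Lemma~\ref{lem:selection-1} each only bound the label count with probability $1-\frac{\delta}{6S}$ \emph{per run}, so I would union-bound over the $S$ runs to get a $1-\frac{\delta}{6}$ guarantee on the aggregate label count, then fold this into the overall union bound. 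Everything else is a direct composition of the stated lemmas with the stated parameter settings, so the proof is short modulo this careful accounting.
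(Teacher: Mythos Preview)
Your proposal is correct and follows essentially the same approach as the paper's own proof: you chain Lemmas~\ref{lem:active-SGD}, \ref{lem:selection-1}, and \ref{lem:selection-2} via a union bound, invoke the contrapositive of Lemma~\ref{lem:gradient-logistic-TNC} and then Lemma~\ref{lem:prob-angle} to convert the gradient bound into an excess-error bound, and sum the three label-cost terms with the first dominating. The only cosmetic difference is that the paper packages the correctness and label-count conditions together into events $E_1, E_2, E_3$ each holding with probability $1-\delta/3$, whereas you track them separately before union-bounding; your bookkeeping remark about recovering the $A^{\frac{9-9\alpha}{3\alpha-1}}$ factor via the $\tilde O(\sigma)$ label-efficiency (so that the dominant cost is $\tilde O(d/(\sigma\rho^4))$ rather than $\tilde O(d/(\sigma^2\rho^4))$) is exactly right.
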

We compare this label complexity to the state-of-the-art sample/label complexities of passive learning~\citep{diakonikolas2020polynomial} and active learning~\citep{zhang2021improved} existing in the literature. Our work achieves a linear dependency on the dimensionality $d$, and an explicit exponent on the target error $\epsilon$; however, in the sample complexity $(\frac{d}{\epsilon})^{O(\frac{1}{\alpha})}$ of the passive algorithm from~\cite{diakonikolas2020polynomial}, the constant hidden in the Big-Oh notation in the exponent is not clear. Compared to the first and only efficient active algorithm existing in this setting~\citep{zhang2021improved}, our algorithm expands the feasibility of the noise parameter $\alpha$ from $(\frac12, 1]$ to $(\frac13, 1]$; when $\alpha \in [\frac12, 0.566), (\frac{1}{\epsilon})^{\frac{8-6\alpha}{3\alpha-1}} < (\frac1\epsilon)^{\frac{2-2\alpha}{2\alpha-1}}$. So our algorithm outperforms~\cite{zhang2021improved} when $\alpha \in [\frac13,0.566)$.

\begin{proof}
Recall that at the beginning of Algorithm~\ref{alg:main}, we set the parameters $\sigma = \Theta \rbr{\theta_0^{\frac{2\alpha}{3\alpha-1}} }$, $\rho = \Theta \rbr{ \theta_0^{\frac{2(1-\alpha)}{3\alpha-1}} }$, where $\theta_0 = O \rbr{\frac{1}{\ln^2 \frac{1}{\epsilon}} \frac{\epsilon}{2} }$.
Given our assumption that $\epsilon \leq \tilde{\Theta}(A)$, we have $\theta_0 \leq \Theta(A)$.
Also, 
recall from Lemma~\ref{lem:active-SGD} and Lemma~\ref{lem:selection-1} that $T_1 = 
\tilde{O} \rbr{
 d (\frac{1}{\epsilon})^\frac{8-6\alpha}{3\alpha-1}  } 
$, $T_2 = 
\tilde{O} (d (\frac{1}{\epsilon})^\frac{4-2\alpha}{3\alpha-1} )
$. 
We define the following events of interest,   
\[
E_1 = \cbr{ \min_{s \in [S]} \|\nabla L_\sigma (w_s) \| \leq \rho 
\wedge 
\text{the total number of label queries at line~\ref{line:iterative-optimization} is at most $S \cdot T_1$}
}
\]
\[
E_2 = \cbr{ \min_{s \in [S]} \|\nabla L_\sigma (w_s) \| \leq \rho \implies 
 \|\nabla L_\sigma (\tilde{w}) \| \leq 2\rho
\wedge
\text{the total number of label queries at line~\ref{line:selection-1} is at most $S \cdot T_2$}
}
\]
\[
 E_3 = \cbr{
\exists w \in \cbr{\pm \tilde{w} } \text{ s.t. } \err(w)- \err(w^*) \leq \epsilon 
 \implies \err(\hat{w}) - \err(w^*) \leq \epsilon \wedge  \text{line~\ref{line:selection-2-start} queries $M_2$ labels} }
\]

By Lemma~\ref{lem:active-SGD}, 
for each $s \in [S]$, with probability at least $\frac12$, $
    \|\nabla L_\sigma(w_s) \| 
    \leq
    \rho
    $.
Furthermore, during $N$ iterations, with probability at least $1-\frac{\delta}{6S}$ the total number of label queries is at most $T_1 
$. 
Since Algorithm~\ref{alg:active-PSGD-finding-stationary-point} is executed for $S = \log \frac{6}{ \delta}$ times, and each run is independent, we have that with probability at least $1-\frac{\delta}{6}$, $\min_{s \in [S]} \|\nabla L_\sigma (w_s) \| \leq \rho$. 
Applying a union bound on the total number of label queries in $S$ runs of Algorithm~\ref{alg:active-PSGD-finding-stationary-point}, we have that with probability at least $1-\frac{\delta}{6}$, the total number of label queries at line~\ref{line:iterative-optimization} is at most $S \cdot T_1$. Applying again a union bound, we have $\PP(E_1) \geq 1-\delta/3$. 

By Lemma~\ref{lem:selection-1}, together with a union bound on the total number of label queries in $S$ iterations of line~\ref{line:selection-1}, we have $\PP(E_2) \geq 1-\delta/3$.
By Lemma~\ref{lem:selection-2}, $\PP(E_3) \geq 1-\delta/3$. Define $E = E_1 \cap E_2 \cap E_3$. By union bound, $\PP(E) \geq 1-\delta$. For the rest of the proof, we condition on event $E$ happening.

Since both $E_1$ and $E_2$ happen, $\|\nabla L_\sigma (\tilde{w}) \| \leq 2\rho$.
Taking the contrapositive of Lemma~\ref{lem:gradient-logistic-TNC}, we have that if $ \|\nabla L_\sigma (\tilde{w}) \| \leq 2\rho$, then
$\min \cbr{\theta(\tilde{w}, w^*), \theta(-\tilde{w}, w^*)} \leq \theta_0$.

Applying Lemma~\ref{lem:prob-angle} with $\gamma = \frac{\epsilon}{2}$, we have that if a halfspace $w$  
satisfies $\theta(w, w^*) \leq \theta_0$, then $\PP_{x \sim D_X}(h_w(x) \neq h_{w^*}(x)) \leq \epsilon$, which, in turn, implies
such that $\err(w) - \err(w^*) \leq \epsilon$. Hence $\exists w \in \cbr{\pm \tilde{w} } \text{ s.t. } \err(w)- \err(w^*) \leq \epsilon $. 
By the definition of $E_3$, $\err(\hat{w}) - \err(w^*) \leq \epsilon$. 
Therefore, we conclude that  with probability at least $1 - \delta$, the final output $\hat{w}$ of Algorithm~\ref{alg:main} satisfies $
    \err(\hat{w}) - \err(w^*) \leq \epsilon
    $. 

The total label complexity of Algorithm~\ref{alg:main} is at most 
\begin{align*}
 S \cdot T_1 + S \cdot T_2 + M_2 
 =&
S \cdot \tilde{O} \rbr{
 d (\frac{1}{\epsilon})^\frac{8-6\alpha}{3\alpha-1}  } 
 + S \cdot \tilde{O} \rbr{d (\frac{1}{\epsilon})^\frac{4-2\alpha}{3\alpha-1} } + O(\ln \frac{6}{\delta}) \\
=&
\tilde{O} \rbr{ 
d (\frac{1}{\epsilon})^\frac{8-6\alpha}{3\alpha-1}
}   
\end{align*}
\end{proof}

\section{Conclusions and open problems}

In this work, we provide a computationally and label efficient active learning algorithm that succeeds in learning a halfspace under $(A, \alpha)$-Tsybakov noise condition under well-behaved unlabeled distributions. Our algorithm achieves a label complexity of $\tilde{O}(d (\frac{1}{\epsilon})^{\frac{8-6\alpha}{3\alpha-1}})$, under the assumption that the noise parameter $\alpha \in (\frac13, 1]$.

While our algorithm narrows down the gap between the label complexities of the previously known passive or active efficient algorithms~\citep{diakonikolas2020polynomial,zhang2021improved} and the information-theoretic lower bound, it remains an outstanding open problem to obtain an efficient active learning algorithm 
that can match the label complexity of inefficient active algorithms $\tilde{O} (d (\frac{1}{\epsilon} )^{2-2\alpha})$ or information-theoretic lower bound $\tilde{\Omega} ( (\frac{1}{\epsilon} )^{2-2\alpha} )$ for all $\alpha \in (0,1]$. 

\bibliographystyle{plainnat}
\bibliography{learning}

\appendix

\section{Additional Related Work}
\label{sec:addl-rel-works}

\paragraph{Efficient learning halfspaces under benign noise. }
Besides Tsybakov noise condition, 
several other benign noise models have been proposed and studied in the learning theory literature. Among these, the simplest one is Random Classification Noise (RCN)~\citep{angluin1988learning}, where at each $x$, the label is flipped independently with the same probability. It is known that halfspaces are efficiently learnable under RCN~\citep{blum1998polynomial}. 

In addition to RCN, several more realistic noise models are developed and studied, with the most distinguished one being the Massart noise condition, where at each unlabeled datapoint $x$, the label flipping probability is \emph{at most} $\eta$. Several eminent works in learning theory literature are dedicated to developing efficient learning halfspace algorithms under Massart noise condition~\citep{awasthi2015efficient,awasthi2016learning,yan2017revisiting,zhang2018efficient,zhang2020efficient,diakonikolas2019distribution,diakonikolas2020learning,zhang2021improved}. 
To name a few,~\cite{awasthi2015efficient} is among the earliest works in this thread, where an efficient algorithm is developed under the assumption that the unlabeled distribution is uniform distribution. However, this analysis is subject to the restriction that the Massart noise parameter is such that $\eta < 3 \times 10^{-6}$. Since then, subsequent works have made major improvements in label complexity in efficient learning halfspaces under Massart noise.~\cite{zhang2020efficient} develops an efficient active learning algorithm with a label complexity of $O\rbr{ \frac{d}{(1-2\eta)^4} \polylog(\frac1\epsilon) }$, assuming the unlabeled data distribution is a log-concave distribution. Finally, the label complexity gap compared to the information-theoretic result is closed in~\cite{zhang2021improved}, whose algorithm achieves a label complexity of $O\rbr{\frac{d}{(1-2\eta)^2} \polylog(\frac1\epsilon)}$ under the assumption of well-behaved unlabeled distribution. 

Besides distribution-specific setting under Massart noise condition, see more in~\cite{awasthi2016learning,yan2017revisiting,diakonikolas2020learning}, there are recent breakthroughs in distribution-free setting~\citep{diakonikolas2019distribution,chen2020classification}. 
Specifically,~\cite{diakonikolas2019distribution} provides an efficient algorithm that can improperly learn a halfspace with a misclassification error guarantee $\eta + \epsilon$ in $\poly(d, \frac{1}{\epsilon})$ time. This work addresses a long-standing open problem of whether there exists a distribution-free weak learner in the presence of Massart noise.~\cite{chen2020classification} strengthens this result by providing an efficient proper halfspace learning algorithm that can achieve the same misclassification error guarantee, with an improved bound on the sample complexity. It also provides a black-box ``distillation'' procedure that converts any classifier to a proper halfspace without losing prediction accuracy.

\paragraph{Importance weighted sampling for stochastic gradient methods.}
At a high level, this work adopts the algorithmic idea of importance-weighted sampling for stochastic gradient methods.~\cite{gopal2016adaptive} proposes a new mechanism for sampling training instances for stochastic gradient
descent methods. Specifically,  the sampling weights are proportional to the L2 norm of the gradient. They claim this is the way to minimize the total variance of the descent direction.~\cite{zhao2015stochastic} also uses importance sampling with weight proportional to the norm of the stochastic gradient, to minimize the variance of the stochastic gradient.~\cite{needell2014stochastic} shows that in SGD for smooth and strongly convex objectives, re-weighting the sampling distribution improves the rate of convergence, where it proposes to use the weight proportional to the smoothness parameter.

\paragraph{Negative results on efficient learning halfspace with noise.}
Toward designing efficient algorithms in learning halfspaces, there have been several notable trials in understanding the possibility via convex surrogate loss minimization. Unfortunately, it has been found and discussed that such a natural and intuitive approach is unable to 
achieve the PAC learning guarantee. 

Specifically,~\cite{diakonikolas2019distribution} constructs discrete distribution supported on two points and argues that, for any decreasing convex loss function, the minimizer of expected loss 
has a misclassification error lower bound, even with a margin assumption.
The arguments in~\cite{awasthi2015efficient} and~\cite{awasthi2016learning} assume uniform distribution on the unit $\RR^2$ ball, where~\cite{awasthi2015efficient} shows that 
the excess error of the hinge loss minimizer will not get arbitrarily small even with unlimited sample complexity 
under Massart noise condition, and further,~\cite{awasthi2016learning} proves that convex surrogate loss minimization does not work under Massart noise condition, for a family of surrogate losses, including most commonly used loss functions. 

Although hardness results have been discovered for convex surrogate loss minimization to learn halfspaces with excess error arbitrarily close to the error of Bayes optimal halfspace $\opt$, such approaches can achieve ``approximate'' learning halfspaces. Specifically,~\cite{frei2021agnostic} shows that under log-concave distribution, convex surrogate loss minimization achieves a population risk of $\tilde{O} (\opt^{\frac12})$, and~\cite{ji2022agnostic} provides a matching lower bound by constructing a well-behaved distribution where the minimizer of logistic loss achieves a misclassification error of $\Omega (\opt^{\frac12})$. 

Besides the above algorithm-specific hardness results, algorithm-independent results for learning halfspaces with noise have also been discovered.  
While in the realizable setting, the hypothesis class of halfspaces is efficiently learnable~\citep{maass1994fast}, with the presence of noise, the learning problem is tremendously more challenging. In the agnostic model, where the error of the Bayes classifier is known, and the corruption can be adversarial, learning halfspaces is known to be computationally hard~\citep{guruswami2009hardness,daniely2016complexity}. Recent results~\citep{diakonikolas2020near,goel2020statistical} establish the computational hardness in agnostically learning halfspaces by showing Statistical Query lower bounds of $d^{\poly(1/\epsilon)}$, even under the Gaussian distribution. With Massart noise,~\cite{diakonikolas2022near} shows a distribution free lower bound that no efficient Statistical Query algorithm can achieve an error better than $\Omega(\eta)$. This result is later strengthened in~\cite{diakonikolas2022cryptographic} where it proves that assuming the subexponential time hardness of the Learning with Errors (LWE) problem, no efficient algorithm can achieve an error better than $\Omega(\eta)$ in the same setting. 
These hardness results motivate us to define and study benign noise and unlabeled distribution conditions for efficient learning halfspace.

\section{Additional Notations}
Throughout the Appendix Section, we use $\tilde{O}(\cdot), \tilde{\Theta}(\cdot)$ to hide factors of the form $\polylog(d, \frac{1}{\epsilon})$ and $\poly(R, U, L)$.

\section{Key lemmas}

\begin{lemma}[Restatement of Lemma~\ref{lem:active-SGD}]
Let the expected loss function $L_\sigma(w) = \EE \phi_\sigma \rbr{y \frac{\inner{w}{x}}{\|w\|}}$. 
If Algorithm~\ref{alg:active-PSGD-finding-stationary-point} receives inputs
$N = \tilde{O}(\frac{d}{\sigma^2 \rho^4})$, $\beta = \tilde{\Theta}(\frac{ \rho^2 \sigma^2}{d})$,
then its output $w_R$ is a unit vector and satisfies that, with probability at least $\frac12$, 
\[
    \|\nabla L_\sigma(w_R) \| 
    \leq
    \rho
    \]
Furthermore, during $N$ iterations, with probability at least $1-\frac{\delta}{6S}$, the total number of label queries is at most $\tilde{O}(\frac{d}{\sigma \rho^4} + \sqrt{\frac{d}{\sigma^2 \rho^4} \ln \frac{6S}{\delta}} )$. 

\end{lemma}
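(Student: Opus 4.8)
The plan is to combine a Ghadimi--Lan style analysis of randomized stochastic gradient for finding an approximate first-order stationary point (part~1) with a martingale Bernstein bound for the number of label queries (part~2). Two structural observations make the analysis clean. First, $L_\sigma(cw) = L_\sigma(w)$ for every $c>0$, i.e. $L_\sigma$ is $0$-homogeneous, so $L_\sigma(w_i) = L_\sigma(v_i/\|v_i\|_2) = L_\sigma(v_i)$ for all $i$, and the normalization step (line~\ref{line:normalization}) costs nothing in objective value. Second, by item~\ref{item:perp} of Lemma~\ref{lem:active-oracle-property} each stochastic gradient $g_i$ is perpendicular to the unit vector $w_{i-1}$, so every point $w_{i-1} - s\beta g_i$ on the segment from $w_{i-1}$ to $v_i = w_{i-1} - \beta g_i$ has squared norm $1 + s^2\beta^2\|g_i\|_2^2 \ge 1$; hence the whole iteration stays inside $\Dcal := \cbr{ w : \|w\|_2 \ge 1 }$. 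The first step of the proof is therefore to establish that $L_\sigma$ is $\lambda$-smooth on $\Dcal$ with $\lambda = \tilde{O}(\frac{1}{\sigma})$; since $\nabla^2 L_\sigma$ is $(-2)$-homogeneous, $\|\nabla^2 L_\sigma(w)\|_{\mathrm{op}}$ does not increase along rays, so it suffices to prove this on the unit sphere.

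Given the smoothness bound, part~1 is the standard argument. On the segment $[w_{i-1}, v_i] \subseteq \Dcal$, smoothness gives
\[
L_\sigma(w_i) \;=\; L_\sigma(v_i) \;\le\; L_\sigma(w_{i-1}) - \beta \inner{\nabla L_\sigma(w_{i-1})}{g_i} + \frac{\lambda\beta^2}{2}\|g_i\|_2^2 .
\]
Taking conditional expectation and using items~\ref{item:unbiased} and~\ref{item:variance} of Lemma~\ref{lem:active-oracle-property} ($\EE[g_i \mid \mathcal{F}_{i-1}] = \nabla L_\sigma(w_{i-1})$ and $\EE[\|g_i\|_2^2 \mid \mathcal{F}_{i-1}] \le \tilde{O}(\frac{d}{\sigma}) =: V$, valid as $w_{i-1}$ is a unit vector), then summing over $i = 1,\dots,N$ and using $0 \le L_\sigma \le 1$,
\[
\EE \|\nabla L_\sigma(w_R)\|_2^2 \;=\; \frac{1}{N}\sum_{i=0}^{N-1} \EE\|\nabla L_\sigma(w_i)\|_2^2 \;\le\; \frac{1}{N\beta} + \frac{\lambda\beta V}{2}.
\]
Since $\lambda V = \tilde{O}(\frac{d}{\sigma^2})$, taking $\beta = \tilde{\Theta}(\frac{\rho^2\sigma^2}{d})$ small enough makes the second term at most $\frac{\rho^2}{4}$, and then $N = \tilde{O}(\frac{d}{\sigma^2\rho^4})$ large enough makes the first term at most $\frac{\rho^2}{4}$; hence $\EE\|\nabla L_\sigma(w_R)\|_2^2 \le \frac{\rho^2}{2}$, and Markov's inequality yields $\PP(\|\nabla L_\sigma(w_R)\|_2 \le \rho) \ge \frac12$. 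That $w_R$ is a unit vector follows by induction from line~\ref{line:normalization}.

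For part~2, let $Z_i \in \cbr{0,1}$ indicate that the $i$-th call to \afo issues a label query. Conditioned on $\mathcal{F}_{i-1}$, which determines $w_{i-1}$, $Z_i$ is Bernoulli with mean $\EE_x[q(w_{i-1},x)] \le \tilde{O}(\sigma)$ by item~\ref{item:expected-label} of Lemma~\ref{lem:active-oracle-property}, so its conditional mean and variance are both $\tilde{O}(\sigma)$. Applying Freedman's inequality to the martingale $\sum_{i\le k}(Z_i - \EE[Z_i \mid \mathcal{F}_{i-1}])$, whose predictable quadratic variation over $N$ steps is $\tilde{O}(N\sigma)$ and whose increments lie in $[-1,1]$, we get that with probability at least $1 - \frac{\delta}{6S}$ the total number of label queries $\sum_{i=1}^N Z_i$ is at most $\tilde{O}(N\sigma) + \tilde{O}\bigl( \sqrt{ N\sigma \ln\tfrac{6S}{\delta} } + \ln\tfrac{6S}{\delta} \bigr)$. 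Substituting $N = \tilde{O}(\frac{d}{\sigma^2\rho^4})$ and using $\sigma \le 1$ (which lets us replace $N\sigma$ by $N$ inside the square root and absorb the additive $\ln\frac{6S}{\delta}$) gives the claimed $\tilde{O}\bigl( \frac{d}{\sigma\rho^4} + \sqrt{ \frac{d}{\sigma^2\rho^4} \ln\frac{6S}{\delta} } \bigr)$; plugging in $\sigma = \Theta(\theta_0^{2\alpha/(3\alpha-1)})$, $\rho = \Theta(\theta_0^{2(1-\alpha)/(3\alpha-1)})$ and $\theta_0 = \tilde{\Theta}(\epsilon)$ shows its leading term is the $T_1$ of the main-body statement.

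The main obstacle is the first step: showing the smoothness constant is $\tilde{O}(\frac1\sigma)$ rather than a larger quantity such as $\tilde{O}(\frac{d}{\sigma})$ --- the latter would force $N = \tilde{\Theta}(\frac{d^2}{\sigma^2\rho^4})$ and break the target label complexity. A crude bound fails: pulling $\|x\|_2^2$ out of the Hessian as its conditional mean $\tilde{O}(d)$ loses the needed factor of $d$. The correct argument observes that, for a unit $w$, the dominant part of $\nabla^2 L_\sigma(w)$ is $\EE_x[\phi_\sigma''(\inner{w}{x})\, x_\perp x_\perp^\top]$ (plus genuinely lower-order terms in $\phi_\sigma'$), where $x_\perp := x - \inner{w}{x}w$, and that $\phi_\sigma''$ is concentrated near $0$: height $O(\frac{1}{\sigma^2})$ but total mass $O(\frac1\sigma)$ with exponentially decaying tails at scale $\sigma$. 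Writing the operator norm as $\sup_{\|u\|_2=1,\, u\perp w} \EE_x[\phi_\sigma''(\inner{w}{x})\inner{x}{u}^2]$, working in the $2$-dimensional subspace $\spn(w,u)$, and using the well-behaved properties (density bounded by $U$, sub-exponential one-dimensional marginals) through a H\"older inequality that balances the exponential decay of $\phi_\sigma''$ against polynomial moments of $\inner{x}{u}$, one bounds this expectation by $\polylog \cdot U \cdot \sigma \cdot \frac{1}{\sigma^2} = \tilde{O}(\frac1\sigma)$. The remaining pieces --- the descent telescoping and the Freedman bound --- are routine.
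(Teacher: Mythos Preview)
Your proposal is correct and follows essentially the same approach as the paper: the descent--telescope--Markov argument for part~1 is identical, and the paper isolates the $\tilde{O}(1/\sigma)$ smoothness bound on $\{\|w\|\ge 1\}$ as a separate lemma (Lemma~\ref{lem:smoothness-factor-ub}), proved by the same H\"older-based argument you sketch. For part~2 the paper uses Azuma's inequality (Lemma~\ref{lem:high-prob-label}) rather than Freedman's, obtaining the $\sqrt{N\ln(6S/\delta)}$ deviation directly without your intermediate variance-aware step.
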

\begin{proof}
    Let $L = \tilde{O}(\frac{1}{\sigma} )$ be such that $L_\sigma$ is $L$-smooth; let $B^2 = \tilde{O} (\frac{d}{\sigma})$ 
    be such that $\EE_{(x,y) \sim D} \sbr{\|g_w \|^2} \leq B^2$; the existence of $L$ and $B$ is guaranteed by Lemma~\ref{lem:smoothness-factor-ub} and item~\ref{item:variance} of Lemma~\ref{lem:active-oracle-property}.

    Define a filtration $\cbr{\Fcal_i}_{i=0}^N$, where $\Fcal_i$ denotes the $\sigma$-field $\sigma(g_1, g_2, \ldots, g_i)$. We use $\EE_i [\cdot]$ to denote the conditional expectation with respect to $\Fcal_i$. 

    Denote by $\Wcal = \cbr{w \in \RR^d: \|w\| \geq 1 }$.
    Note that $v_i - w_{i-1} =  - \beta g_i$. Line~\ref{line:normalization} of Algorithm~\ref{alg:active-PSGD-finding-stationary-point} ensures that $\|w_i\| = 1$, for all $i = 1, \ldots, N$. Further, by item~\ref{item:perp} of Lemma~\ref{lem:active-oracle-property}, $g_i$ is perpendicular to $w_{i-1}$, hence $\|v_i\|^2 = \|w_{i-1}\|^2 + \|\beta g_i \|^2 \geq 1$, that is, $v_i \in \Wcal$. 
    For any $t \in [0,1]$, $\| w_{i-1} + t (v_i - w_{i-1}) \|^2 = \|w_{i-1}\|^2 + \|t \beta g_i \|^2 \geq 1$. 
    Therefore, the line segment between $v_i$ and $w_{i-1}$ lies in $\Wcal$.  
    
    Hence we have for $i=1,2,\ldots,N$, 
    \begin{align*}
        L_\sigma(v_i) - L_\sigma(w_{i-1})
        =&
        \int_{0}^1 \inner{\nabla L_\sigma \rbr{ w_{i-1} + t(v_i - w_{i-1})}}{(v_i - w_{i-1})} \diff t \\
        =&
        \inner{\nabla L_\sigma \rbr{ w_{i-1}}}{(v_i - w_{i-1})}
        + \int_{0}^1 \inner{\nabla L_\sigma \rbr{ w_{i-1} + t(v_i - w_{i-1})} - \nabla L_\sigma \rbr{ w_{i-1}}}{(v_i - w_{i-1})} \diff t \\
        \leq&
        - \beta \inner{\nabla L_\sigma \rbr{ w_{i-1}}}{g_i} + \int_{0}^1 Lt \|v_i - w_{i-1}\|^2 \diff t \\
        =&
        - \beta \inner{\nabla L_\sigma \rbr{ w_{i-1}}}{g_i} + \frac{\beta^2 L}{2} \|g_i\|^2
    \end{align*}
    where the first equality is by Newton-Leibniz formula, the inequality is by Cauchy-Schwarz inequality
    and the following reasoning: by multivariable mean value theorem, 
    $\nabla L_\sigma \rbr{ w_{i-1} + t(v_i - w_{i-1})} - \nabla L_\sigma \rbr{ w_{i-1}} = M t (v_i - w_{i-1})$, where $M$ is
    the Hessian matrix of $L_\sigma$ evaluated at some point on the line segment between $v_i$ and $w_{i-1}$. 
    Since the line segment between $v_i$ and $w_{i-1}$ lies in $\Wcal$, 
    together with Lemma~\ref{lem:smoothness-factor-ub}, we have 
    $\| \nabla L_\sigma \rbr{ w_{i-1} + t(v_i - w_{i-1})} - \nabla L_\sigma \rbr{ w_{i-1}} \|  = \|M t (v_i - w_{i-1})\|
    \leq \|M\|_{\mathrm{op}} \| t (v_i - w_{i-1})\| \leq L t \|v_i - w_{i-1}\|$, for all $i \in [N]$.

    Since
    $\phi_\sigma$ is invariant under positive scaling: for any $w \neq 0$, $\alpha > 0$, $\phi_\sigma(\alpha w) = \phi_\sigma(w)$,
    we have 
    $L_\sigma$ is invariant under positive scaling as well. Hence 
    $L_\sigma(w_i) = L_\sigma(\frac{v_i}{\|v_i\|_2}) = L_\sigma(v_i)$. Therefore, for all $i=1,2,\ldots,N$,  
    \begin{equation}
    L_\sigma(w_i) - L_\sigma(w_{i-1})
    \leq
    - \beta \inner{\nabla L_\sigma \rbr{ w_{i-1}}}{g_i} + \frac{\beta^2 L}{2} \|g_i\|^2      
    \label{eqn:one-iteration}
    \end{equation}

    Summing up the above inequalities through $i = 1, \ldots, N$, we have 
    \begin{equation}
    \sum_{i=1}^N \beta \inner{\nabla L_\sigma \rbr{ w_{i-1}}}{g_i} 
    \leq
    L_\sigma(w_0) - L_\sigma(w_N) + \frac{\beta^2 L}{2} \sum_{i=1}^N \|g_i\|^2  
    \leq
    1 + \frac{\beta^2 L}{2} \sum_{i=1}^N \|g_i\|^2 
    \label{eqn:summed-up}
    \end{equation}
    where the last inequality follows from $0 \leq L_\sigma(w) \leq 1, \forall w \in \RR^d$, and we have $L_\sigma(w_0) - L_\sigma(w_n) \leq 1$. 

    Taking expectation on both sides, and by linearity of expectation, we have:
    \[
    \beta \sum_{i=1}^N \EE\sbr{  \inner{\nabla L_\sigma \rbr{ w_{i-1}}}{g_i} }
    \leq 
    1 + \frac{\beta^2 L}{2} \sum_{i=1}^N \EE\sbr{\|g_i\|^2}. 
    \]

    For the left hand side, applying item~\ref{item:unbiased} of Lemma~\ref{lem:active-oracle-property} and the law of iterated expectation, we have that for all $i = 1, \ldots, N$, 
    \begin{equation*}
    \EE\sbr{  \inner{\nabla L_\sigma \rbr{ w_{i-1}}}{g_i} }
    =
    \EE\sbr{ \EE_{i-1} \inner{\nabla L_\sigma \rbr{ w_{i-1}}}{g_i} }
    =
    \EE\sbr{ \|\nabla L_\sigma \rbr{ w_{i-1}}\|^2 }
    \end{equation*}

    For the right hand side, applying item~\ref{item:variance} of Lemma~\ref{lem:active-oracle-property} and the law of iterated expectation, we have  that for all $i = 1, \ldots, N$, 
    \[
    \EE\sbr{ \|g_i\|^2 } = \EE\sbr{ \EE_{i-1} \|g_i\|^2 } \leq B^2
    \]
    
    Therefore, we have 
    \[
    \beta \EE\sbr{ \sum_{i=1}^N  \|\nabla L_\sigma \rbr{ w_{i-1}}\|^2}
    \leq
    1 + \frac{\beta^2 L}{2} N  B^2
    \]

    Note that we are choosing $R$ to be uniformly distributed on $\cbr{0, \ldots, N-1}$, hence by the law of iterated expectation,
    \[
    \EE \sbr{\|\nabla L_\sigma (w_R) \|^2 }
    =
    \EE \sbr{ \EE \sbr{\|\nabla L_\sigma (w_R) \|^2 \mid w_1, \ldots, w_{N-1} } }
    =
    \EE\sbr{ \frac{\sum_{i=1}^N \|\nabla L_\sigma \rbr{ w_{i-1}}\|^2}{N } }
    \leq
    \frac{1}{\beta N } \sbr{1 + \frac{\beta^2 L}{2} NB^2  }
    \]

    By the definitions of $\beta,N, B$ and $L$, 
    the above inequality gives us
    \[
    \EE \sbr{\|\nabla L_\sigma(w_R) \|^2 }
    \leq
    \frac{\rho^2}{2}
    \]

    By Markov's inequality, we have \[
    \PP \sbr{\|\nabla F(w_R) \|^2 \geq \rho^2 }
    \leq \frac12
    \]
    
    That is, with probability at least $\frac12$, \[
    \|\nabla F(w_R) \| 
    \leq
    \rho
    \]

    Finally, by Lemma~\ref{lem:high-prob-label}, with probability at least $1-\frac{\delta}{6S}$, the total number of label queries after $N$ calls to \afo is at most $\tilde{O}(\sigma N + \sqrt{N \ln \frac{6S}{\delta}} ) = \tilde{O}(\frac{d}{\sigma \rho^4} + \sqrt{\frac{d}{\sigma^2 \rho^4} \ln \frac{6S}{\delta}} )$.
\end{proof}

\begin{lemma}[Restatement of Lemma~\ref{lem:selection-1}]
     Suppose $w_1, \ldots, w_S$ are such that $\min_i \|\nabla L_\sigma (w_i) \| \leq \rho$, then after executing lines~\ref{line:selection-1-start} to~\ref{line:selection-1-end} of Algorithm~\ref{alg:main}, with \[
    M_1 =
    c \frac{d}{\sigma^2 \rho^2} \ln\frac{S}{\delta}
    \]
    for some constant $c$,
    with probability at least $1-\delta/6$, $\tilde{w}$ satisfies \[
     \|\nabla L_\sigma (\tilde{w}) \| \leq 2\rho
     \]
     Furthermore, after $M_1$ calls to \afo, with probability at least $1-\frac{\delta}{6S}$, the total number of label queries is at most $\tilde{O}(\frac{d}{\sigma \rho^2} \ln\frac{S}{\delta} + \sqrt{\frac{d}{\sigma^2 \rho^2} \ln\frac{S}{\delta} \ln \frac{6S}{\delta}} )$. 
\end{lemma}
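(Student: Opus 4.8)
The plan is to argue that, with the stated choice of $M_1$, the empirical gradient average $\bar g_s = \frac1{M_1}\sum_{i=1}^{M_1} g_{s,i}$ is simultaneously (over all $s \in [S]$) a $\frac\rho2$-accurate estimate of $\nabla L_\sigma(w_s)$; then selecting $s^*$ to minimize $\|\bar g_s\|$ can be no worse than selecting the true minimizer of $\|\nabla L_\sigma(w_s)\|$ up to an additive $\rho$, and since the latter is at most $\rho$ by hypothesis, we would get $\|\nabla L_\sigma(\tilde w)\| \le 2\rho$.

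The first step is the concentration of $\bar g_s$ for a fixed $s$. By item~\ref{item:unbiased} of Lemma~\ref{lem:active-oracle-property} the $g_{s,i}$ are i.i.d.\ with mean $\nabla L_\sigma(w_s)$, and by Lemma~\ref{lem:sub-exponential} each $g_{s,i}$ is a sub-exponential random vector with parameter $\tilde O(\sqrt d/\sigma)$ --- this is where well-behavedness of $D_X$ enters, controlling the tail of $\|x\|$ and hence of $\|h(w,x,y)\| = \frac1\sigma\|x - \inner{w}{x}w\|$ on the (rare) event that a label is queried. I would then invoke the vector-valued sub-exponential deviation bound (Lemma~\ref{lem:sub-exp-concentration}, i.e.\ Theorem~2.1 of~\cite{juditsky2008large}) at confidence $1-\frac{\delta}{6S}$: in its Gaussian regime the deviation of $\bar g_s$ from its mean is of order $\frac{\tilde O(\sqrt d/\sigma)}{\sqrt{M_1}}\sqrt{\ln\frac{6S}{\delta}}$, which falls below $\frac\rho2$ exactly when $M_1 = c\,\frac{d}{\sigma^2\rho^2}\ln\frac S\delta$ for a suitable absolute constant $c$. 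A union bound over $s \in [S]$ then gives, with probability at least $1-\frac\delta6$, the event $\mathcal E := \{\|\bar g_s - \nabla L_\sigma(w_s)\| \le \frac\rho2 \text{ for all } s \in [S]\}$.

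Conditioning on $\mathcal E$, the selection step is short: picking $s_0$ with $\|\nabla L_\sigma(w_{s_0})\| \le \rho$ (which exists by hypothesis) gives $\|\bar g_{s_0}\| \le \frac{3\rho}2$ by the triangle inequality, so $\|\bar g_{s^*}\| = \min_s \|\bar g_s\| \le \frac{3\rho}2$ and hence $\|\nabla L_\sigma(\tilde w)\| = \|\nabla L_\sigma(w_{s^*})\| \le \|\bar g_{s^*}\| + \frac\rho2 \le 2\rho$. For the label-complexity claim I would observe that, by item~\ref{item:expected-label} of Lemma~\ref{lem:active-oracle-property}, each of the $M_1$ calls to \afo queries a label independently with probability $\tilde O(\sigma)$ in expectation, so a Bernstein/Chernoff tail bound on this sum of indicators (Lemma~\ref{lem:high-prob-label}, the same one used in the proof of Lemma~\ref{lem:active-SGD}) yields, with probability at least $1-\frac{\delta}{6S}$, at most $\tilde O(\sigma M_1 + \sqrt{M_1\ln\frac{6S}{\delta}}) = \tilde O(\frac{d}{\sigma\rho^2}\ln\frac S\delta + \sqrt{\frac{d}{\sigma^2\rho^2}\ln\frac S\delta\ln\frac{6S}{\delta}})$ queries.

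I expect the concentration step to be the crux. A naive scalar Bernstein inequality applied to $\|\bar g_s - \nabla L_\sigma(w_s)\|$ will not give the right bound, because $g_{s,i}$ is a high-dimensional, \emph{spiky} random vector (equal to $0$ with probability $1-\tilde O(\sigma)$ and otherwise of norm $\approx \sqrt d/\sigma$ with only sub-exponential tails). Obtaining the advertised $M_1$ requires (i) identifying the correct sub-exponential parameter $\tilde O(\sqrt d/\sigma)$ --- which is governed by the near-worst-case magnitude of a queried gradient, not by $\sqrt{\EE\|g_{s,i}\|^2} = \tilde O(\sqrt{d/\sigma})$ --- from the tail estimates of well-behaved distributions (Lemma~\ref{lem:sub-exponential}), and (ii) invoking a vector concentration inequality that exploits the $2$-smoothness of the Euclidean norm (Lemma~\ref{lem:sub-exp-concentration}) so as to avoid paying an extra $\sqrt d$ factor.
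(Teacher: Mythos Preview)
Your proposal is correct and follows essentially the same approach as the paper's proof: both rely on the sub-exponential tail of $\|g_w\|$ (Lemma~\ref{lem:sub-exponential}) together with the vector-valued large-deviation bound of~\cite{juditsky2008large} (Lemma~\ref{lem:sub-exp-concentration}), then apply a union bound over $s\in[S]$ and the same triangle-inequality selection argument, and finally invoke Lemma~\ref{lem:high-prob-label} for the label count. The only minor differences are cosmetic (the paper carries both the Gaussian and sub-exponential regimes of Lemma~\ref{lem:sub-exp-concentration} via $\lambda_0 = \max\{8\sqrt{\ln\frac{12S}{\delta}},\,\frac{4}{\sqrt{M_1}}\ln\frac{12S}{\delta}\}$ rather than asserting the Gaussian regime dominates, and Lemma~\ref{lem:high-prob-label} uses Azuma's inequality rather than a Bernstein/Chernoff bound since the query indicators are not independent).
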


The algorithmic idea  underlying lines~\ref{line:selection-1-start} to~\ref{line:selection-1-end} of Algorithm~\ref{alg:main} for iterate selection is largely inspired by Corollary 2.5 in~\cite{ghadimi2013stochastic}. However, here we rely on the sub-exponential-ness of the stochastic gradient outputted by \afo (See Lemma~\ref{lem:sub-exponential}), whereas Corollary 2.5 in~\cite{ghadimi2013stochastic} assumes sub-gaussian stochastic gradient. 
We include the proof for completeness. 

\begin{proof}
    Recall from Algorithm~\ref{alg:main} that $\bar{g}_s = \frac{1}{M_1} \sum_{i=1}^{M_1} \|g_{s,i}\|$ and $s^* = \argmin_{s \in [S]} \bar{g}_s$. We have
    \begin{align}
        \|\bar{g}_{s^*}\|
        =&
        \min_s \|\bar{g}_s\|  \nonumber \\
        =&
        \min_s \|\nabla L_\sigma(w_s) + \bar{g}_s - \nabla L_\sigma(w_s)   \| \nonumber\\
        \leq&
        \min_s \rbr{  \|\nabla L_\sigma(w_s) \| + \|\bar{g}_s - \nabla L_\sigma(w_s)   \|  } \nonumber\\
        \leq&
        \min_s \|\nabla L_\sigma(w_s) \| + \max_s  \|\bar{g}_s
        - \nabla L_\sigma(w_s)   \|  
        \label{eqn:empirical-gradient-ub}
    \end{align}
    where the first inequality is by the triangle inequality, the second inequality is by the fact that $\min_i (a_i + b_i) \leq \min_i a_i + \max_i b_i$, Further, 
    \begin{align}
        \|\nabla L_\sigma (\tilde{w}) \| 
        =&
        \| \bar{g}_{s^*} + \nabla L_\sigma (\tilde{w}) - \bar{g}_{s^*} \| \nonumber\\
        \leq&
        \| \bar{g}_{s^*} \| + \| \nabla L_\sigma (\tilde{w}) - \bar{g}_{s^*} \| 
        \label{eqn:true-gradient-ub}
    \end{align}
    where the inequality is by the triangle inequality. 

    Denote $\delta_{s, i} := g_{s,i} - \nabla L_\sigma(w_s)$, for $s = 1, \ldots, S, i = 1, \ldots, M_1$. We have $\bar{g}_s - \nabla L_\sigma(w_s)  =  \frac{1}{M_1} \sum_{i=1}^{M_1} g_{s,i} - \nabla L_\sigma(w_s) = \frac{1}{M_1} \sum_{i=1}^{M_1} \delta_{s, i}$, for all $s = 1, \ldots, S$. 

    By Lemma~\ref{lem:sub-exponential}, for any unit vector $w$, 
    $\|g_w\|$ is sub-exponential with parameter $K = \tilde{\Theta}(\frac{\sqrt{d}}{\sigma})$. Together with Proposition 2.7.1 in~\cite{vershynin2018high} on equivalent characterizations of sub-exponential random variables, we have 
    $\EE \sbr{ \exp(\frac{\|g_w \|}{K}) \mid w } \leq \exp(1)$
    for any unit vector $w$. 
    
    Applying Lemma~\ref{lem:sub-exp-concentration}, we have for any $s = 1, \ldots, S$ and $\lambda >0$, 
    \[
    \PP \cbr{\|\sum_{i=1}^{M_1} \delta_{s, i} \| \geq \sqrt{2}(\sqrt{e} + \lambda ) \sqrt{M_1} K  }
    \leq
    2 \exp \cbr{-\frac{1}{64} \min \sbr{ \lambda^2, 16 \sqrt{M_1} \lambda} }
    \]

    Taking $\lambda_0 = \max\cbr{8 \sqrt{\ln \frac{12S}{\delta}}, \frac{4}{\sqrt{M_1}} \ln \frac{12S}{\delta} }$, we have for any $s = 1, \ldots, S$ \[
    \PP \cbr{\|\bar{g}_s - \nabla L_\sigma(w_s)\| \geq \sqrt{2}(\sqrt{e} + \lambda_0 ) \frac{1}{\sqrt{M_1}} K  }
    =
    \PP \cbr{\|\sum_{i=1}^{M_1} \delta_{s, i} \| \geq \sqrt{2}(\sqrt{e} + \lambda_0 ) \sqrt{M_1} K  }
    \leq
    \frac{\delta}{6S}
    \]

    Taking a union bound over $s = 1, \ldots, S$, we have with probability at least $1-\delta/6$, \[
    \max_{s = 1, \ldots, S} \|\bar{g}_s - \nabla L_\sigma(w_s)\| 
    \leq 
    \sqrt{2}(\sqrt{e} + \lambda_0 ) \frac{1}{\sqrt{M_1}} K 
    \]

    In conjunction with Equations~\eqref{eqn:empirical-gradient-ub} and~\eqref{eqn:true-gradient-ub}, and recall that $M_1 = c \frac{d}{\sigma^2 \rho^2} \ln\frac{S}{\delta}$ for some large enough constant $c$, we have 
    with probability at least $1 - \delta/6$, $\tilde{w}$ satisfies
    \begin{align*}
    \|\nabla L_\sigma (\tilde{w}) \| 
    \leq& 
    \| g(\tilde{w}) \| + \| \nabla L_\sigma (\tilde{w}) - g(\tilde{w}) \| \\ 
    \leq&
    \min_s \|\nabla L_\sigma(w_s) \| + \max_s  \|g(w_s) - \nabla L_\sigma(w_s)   \| + \| \nabla L_\sigma (\tilde{w}) - g(\tilde{w}) \|         \\
    \leq&
    \rho + 2\sqrt{2}(\sqrt{e} + \lambda_0 ) \frac{1}{\sqrt{M_1}} K \\ 
    \leq&
    \rho + 2\sqrt{2}(\sqrt{e} + 8 \sqrt{\ln \frac{6S}{\delta}} + \frac{4}{\sqrt{M_1}} \ln \frac{6S}{\delta}  ) \frac{1}{\sqrt{M_1}} K \\ 
    \leq&
    2 \rho
    \end{align*}
    Lastly, 
    by Lemma~\ref{lem:high-prob-label}, with probability at least $1-\frac{\delta}{6S}$, the total number of label queries after $M_1$ calls to \afo is at most $\tilde{O}(\sigma M_1 + \sqrt{M_1 \ln \frac{6S}{\delta}} ) = \tilde{O}(\frac{d}{\sigma \rho^2} \ln\frac{S}{\delta} + \sqrt{\frac{d}{\sigma^2 \rho^2} \ln\frac{S}{\delta} \ln \frac{6S}{\delta}} )$.
    \end{proof}

\begin{lemma}[Restatement of Lemma~\ref{lem:selection-2}]
    Suppose $\tilde{w}$ satisfies that $\exists w \in \cbr{\pm \tilde{w} }$, such that $\err(w) - \err(w^*) \leq \epsilon$ with $\epsilon \leq \frac12 \alpha (\frac{1}{A})^\frac{1-\alpha}{\alpha}$, then after executing lines~\ref{line:selection-2-start} to~\ref{line:selection-2-end} of Algorithm~\ref{alg:main}, where $M_2 =  O\rbr{(\frac{2}{\alpha (\frac{1}{A})^\frac{1-\alpha}{\alpha}} )^2 \ln \frac{6}{\delta} } $, we have
    with probability at least $1-\delta/3$, $\hat{w}$ satisfies \[
    \err(\hat{w}) - \err(w^*) \leq \epsilon
    \]
\end{lemma}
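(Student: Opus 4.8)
The plan is to show that the empirical 0-1 loss based validation over the two candidates $\{\tilde{w}, -\tilde{w}\}$ picks one with excess error at most $\epsilon$, provided the gap between $\err(\tilde{w})$ and $\err(-\tilde{w})$ is bounded below by a constant. First I would establish this gap lower bound: under the $(A,\alpha)$-TNC with respect to $w^\star$, we have $\err(w) - \err(w^\star) = \EE_{x}\big[(1 - 2\eta(x)) \ind(h_w(x) \neq h_{w^\star}(x))\big]$. For $w$ and $-w$ partitioning the space differently, note that $h_{-w}(x) \neq h_{w^\star}(x)$ exactly when $h_w(x) = h_{w^\star}(x)$, so $\big(\err(w) - \err(w^\star)\big) + \big(\err(-w) - \err(w^\star)\big) = \EE_x[1 - 2\eta(x)] = 1 - 2\EE_x[\eta(x)]$. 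The key observation is that $\EE_x[\eta(x)] = \OPT = \err(w^\star)$ is bounded away from $\tfrac12$ under TNC: integrating the tail bound $\PP_D(\tfrac12 - \eta(x) \le t) \le A t^{\alpha/(1-\alpha)}$ gives $\tfrac12 - \EE_x[\eta(x)] = \EE_x[\tfrac12 - \eta(x)] = \int_0^{1/2} \PP(\tfrac12 - \eta(x) > t)\, dt \ge \int_0^{(1/A)^{(1-\alpha)/\alpha}} (1 - A t^{\alpha/(1-\alpha)})\, dt = \Omega\big(\alpha (1/A)^{(1-\alpha)/\alpha}\big)$. Hence $\big(\err(\tilde w) - \err(w^\star)\big) + \big(\err(-\tilde w) - \err(w^\star)\big) \ge \Delta := c\,\alpha (1/A)^{(1-\alpha)/\alpha}$ for an absolute constant $c$ (we can take $c = 1$ with the crude bound $\int_0^{a}(1 - At^{\alpha/(1-\alpha)})dt$ where $a = (1/A)^{(1-\alpha)/\alpha}$, yielding exactly $\alpha (1/A)^{(1-\alpha)/\alpha}$ after simplification, so $\Delta = \alpha(1/A)^{(1-\alpha)/\alpha}$).

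Next I would use the hypothesis: one of the two candidates, say $w^+ \in \{\pm\tilde w\}$, satisfies $\err(w^+) - \err(w^\star) \le \epsilon$. Combined with the sum lower bound and the assumption $\epsilon \le \tfrac12\alpha(1/A)^{(1-\alpha)/\alpha} = \tfrac{\Delta}{2}$, the other candidate $w^- $ satisfies $\err(w^-) - \err(w^\star) \ge \Delta - \epsilon \ge \tfrac{\Delta}{2} \ge \epsilon$; more importantly, $\err(w^-) - \err(w^+) \ge \Delta - 2\epsilon \ge 0$, and in fact $\err(w^-) - \err(w^+) \ge \Delta - 2\epsilon$. To make the validation robust we actually want a clean constant separation: since $\epsilon \le \Delta/2$, we get $\err(w^-) - \err(w^+) \ge \Delta - 2\epsilon$, which could be small if $\epsilon$ is close to $\Delta/2$. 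The cleaner route is: whichever of $\pm\tilde w$ the validation step selects, if it selects the one with $\err - \err(w^\star) \le \epsilon$ we are done; we only fail if we select the bad one $w^-$, and this requires $\err_{M_2}(w^-) \le \err_{M_2}(w^+)$, an event whose probability we bound via concentration.

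Then I would invoke Hoeffding's inequality. Each $\ind(\sign(\inner{w}{x_i}) \neq y_i)$ is a $[0,1]$-valued i.i.d.\ random variable with mean $\err(w)$, so $\PP\big(|\err_{M_2}(w) - \err(w)| > \tfrac{t}{2}\big) \le 2\exp(-M_2 t^2/2)$ for each fixed $w \in \{\pm\tilde w\}$. With $t = \err(w^-) - \err(w^+) \ge \Delta - 2\epsilon$: hmm, this is where I need $\Delta - 2\epsilon$ bounded below. Actually the statement's $M_2 = O\big((\tfrac{2}{\alpha(1/A)^{(1-\alpha)/\alpha}})^2 \ln\tfrac{6}{\delta}\big) = O(\Delta^{-2}\ln\tfrac1\delta)$ suggests they use separation $\Delta$, not $\Delta - 2\epsilon$ — this works because the selection only needs to beat $w^+$, and $w^+$ itself has error within $\epsilon$ of optimal, so we can instead compare against $\err(w^\star) + \epsilon$: if $\err_{M_2}(w^-) \le \err_{M_2}(w^+)$ while $\err(w^-) \ge \err(w^\star) + \Delta - \epsilon$, a deviation of at least $\tfrac{\Delta - 2\epsilon}{2} \ge$... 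Let me instead just take the deviation tolerance to be $\tfrac{\Delta}{4}$ (a constant): since $\epsilon \le \Delta/2$... actually the cleanest is, as long as both empirical errors are within $\Delta/4$ of their true values, which holds w.p. $\ge 1 - 4\exp(-M_2\Delta^2/8) \ge 1-\delta/3$ by choosing $M_2 = \Theta(\Delta^{-2}\ln\tfrac1\delta)$, the argmin selects a candidate with true error at most $\min(\err(w^+),\err(w^-)) + \Delta/2 \le \err(w^+) + \Delta/2$; combined with needing this to be $\le \err(w^\star)+\epsilon$ requires a bit more care. The main obstacle is precisely reconciling the $\epsilon$ vs.\ $\Delta/2$ slack so that a constant sample size suffices; I expect the resolution is to observe that the selected $\hat w$ satisfies $\err(\hat w) \le \err(w^+) + 2\cdot(\text{deviation}) $ only when $w^-$ is picked, but $w^-$ is picked only if $\err_{M_2}(w^-)\le \err_{M_2}(w^+)$, which by concentration forces $\err(w^-) \le \err(w^+) + 2\cdot\tfrac{\Delta}{4} < \err(w^\star) + \Delta$ — contradicting $\err(w^-) \ge \err(w^\star)+\Delta-\epsilon \ge \err(w^\star) + \Delta/2$ once deviations are below $\Delta/8$; so $w^-$ is never picked and $\hat w = w^+$, giving $\err(\hat w) - \err(w^\star) \le \epsilon$. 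Setting $M_2 = \Theta(\Delta^{-2}\ln\tfrac1\delta)$ and applying a union bound over the two candidates completes the proof.
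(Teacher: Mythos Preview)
Your approach is essentially the paper's: bound $\err(w^\star) \le \tfrac12 - \alpha(1/A)^{(1-\alpha)/\alpha}$ by integrating the TNC tail, use $\err(w)+\err(-w)=1$ to separate the two candidates, then apply Hoeffding with a union bound over $\{\pm\tilde w\}$. One arithmetic slip is the source of your constant trouble: the sum of excess errors equals $1-2\err(w^\star) \ge 2\alpha(1/A)^{(1-\alpha)/\alpha}$, so your $\Delta$ is off by a factor of~$2$; with the correct value the hypothesis reads $\epsilon \le \Delta/4$, giving a genuine gap $\err(w^-)-\err(w^+)\ge \Delta - 2\epsilon \ge \Delta/2$ and dissolving the ``obstacle.'' The paper's finish is also tidier than your deviation juggling: it uses the \emph{empirical} identity $\err_S(w)+\err_S(-w)=1$ as well, so once Hoeffding (tolerance $\tfrac12\alpha(1/A)^{(1-\alpha)/\alpha}$) gives $\err_S(\bar w) < \tfrac12$, the argmin is automatically $\bar w$ and hence $\hat w = \bar w$.
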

\begin{proof}
    From Lemma~\ref{lem:bayes-error}, we know that $\err(w^*) \leq \frac12 -\alpha (\frac{1}{A})^\frac{1-\alpha}{\alpha}$, hence we have $\exists \bar{w} \in \cbr{\pm \tilde{w} }$, such that \[
    \err(\bar{w}) \leq  \err(w^*) +  \epsilon \leq \frac12 -\frac12\alpha (\frac{1}{A})^\frac{1-\alpha}{\alpha}
    \]

    For any $(x,y)$ and any $w \in \RR^d$, exactly one of $\cbr{\pm w}$ will label $(x,y)$ correctly. Thus for any $w \in \RR^d$, $\err(w) + \err(-w) = 1$, and $\err_S(w) + \err_S(-w) = 1$. So we have 
    \[
    \err(-\bar{w}) \geq \frac12 + \frac12\alpha (\frac{1}{A})^\frac{1-\alpha}{\alpha}
    \]

    By Hoeffding's inequality, drawing $M_2 =  O\rbr{(\frac{2}{\alpha (\frac{1}{A})^\frac{1-\alpha}{\alpha}} )^2 \ln \frac{6}{\delta} } $ iid labeled examples from $D$ as the validation set $S$, we have that 
    with probability at least $1-\delta/3, \abs{\err(w) - \err_S(w)} \leq \frac12\alpha (\frac{1}{A})^\frac{1-\alpha}{\alpha}, \forall w \in \cbr{\pm \tilde{w} }$. 
    
    This means that with probability at least $1-\delta/3$, $\err_S(\bar{w}) < 1/2$, which implies that 
    $\hat{w} = \bar{w}$, hence $\err(\hat{w}) - \err(w^*) \leq \epsilon$. 
\end{proof}

\begin{lemma}[Restatement of Lemma~\ref{lem:gradient-logistic-TNC}]
Let $D_X$ be a well-behaved distribution, and $D$ satisfies $(A, \alpha)$-TNC.  
Recall that $L_\sigma(w) = \EE_D \sbr{\phi_\sigma \rbr{y \frac{\inner{w}{x} }{\|w\|_2}}}$ where $\phi_\sigma$ is softmax loss.
Let $w$ be a unit vector
such that $\theta(w, w^*) \in (\theta, \pi - \theta)$, where $\theta \leq (\frac{1}{4})^{\frac{3\alpha-1}{2(1-\alpha)}} (\frac{128U}{cR^2L})^{\frac12} = \Theta(A)$.
Then for $\sigma =  \Theta \rbr{(\frac{1}{A})^\frac{1-\alpha}{3\alpha-1}\theta^\frac{2\alpha}{3\alpha-1}}$, we have that 
$\| \nabla_w L_\sigma(w) \|_2
\geq 
\Omega \rbr{
(\frac{1}{A})^\frac{2(1-\alpha)}{3\alpha-1} \theta^\frac{2(1-\alpha)}{3\alpha-1} }$. 
\end{lemma}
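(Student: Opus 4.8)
My plan is to lower-bound $\|\nabla_w L_\sigma(w)\|_2$ by projecting the gradient onto a single well-chosen direction and showing that the component along that direction is already $\Omega\big((\tfrac1A)^{\frac{2(1-\alpha)}{3\alpha-1}}\theta^{\frac{2(1-\alpha)}{3\alpha-1}}\big)$. Because $\phi_\sigma$ is applied to the normalized margin $y\,\langle w/\|w\|,x\rangle$, the gradient of $L_\sigma$ at a unit vector $w$ is $\EE_D\big[y\,\phi_\sigma'(y\langle w,x\rangle)\,(I-ww^\top)x\big]=-\EE_D\big[|\phi_\sigma'(\langle w,x\rangle)|\,\eta\text{-weighted}\cdots\big]$; more precisely, since $\eta(x)=\PP(y\neq\sign\langle w^\star,x\rangle\mid x)$, one rewrites $\EE[y\,\phi_\sigma'(\cdot)\mid x] = (1-2\eta(x))\,\sign(\langle w^\star,x\rangle)\,\phi_\sigma'(\langle w,x\rangle)$ using $\phi_\sigma'$ being even/odd as appropriate. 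The natural test direction is $u = \frac{(I-ww^\top)w^\star}{\|(I-ww^\top)w^\star\|}$, the component of $w^\star$ orthogonal to $w$, so that $\langle \nabla L_\sigma(w), u\rangle$ picks up the "signal" term weighted by $(1-2\eta(x))$, which is exactly where TNC enters.

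The key steps, in order: (1) Reduce to a $2$-dimensional problem by working in the plane $V=\spn(w,w^\star)$; by the well-behavedness assumption the projected density $p_V$ is bounded in $[L,U]$ on the disk of radius $R$, and has sub-exponential tails, so all the integrals are effectively over a constant-size region. (2) Write $\langle\nabla L_\sigma(w),u\rangle$ as an integral over $V$ of $|\phi_\sigma'(\langle w,x\rangle)|(1-2\eta(x))\langle \text{something},u\rangle\,p_V(x)\,dx$; the factor $|\phi_\sigma'(t)|$ is sharply concentrated (width $O(\sigma)$) around $t=0$, i.e.\ near the decision boundary of $w$. (3) Split the boundary strip $\{|\langle w,x\rangle|\lesssim\sigma\}$ into the "clean" part, where $x$ is on the correct side of $w^\star$'s boundary and far enough from it that $1-2\eta(x)$ is bounded below, and the "dirty" part near $w^\star$'s boundary where $\eta(x)$ can be close to $1/2$. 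On the clean part the integrand has a definite sign and contributes $\Omega(\text{area}\times L\times(1-2\eta)\times \theta)$ — the $\theta$ coming from the geometry, since the overlap region between the two strips has width $\sim\sigma$ and the "lateral displacement" giving a nonzero $u$-component scales with $\sin\theta\approx\theta$. (4) Use TNC to bound the measure of the dirty part: the set where $\frac12-\eta(x)\le t$ has probability $\le A t^{\frac{\alpha}{1-\alpha}}$, so choosing a threshold $t_0$ and integrating, the dirty contribution is at most $O(A\,t_0^{\frac{\alpha}{1-\alpha}}\cdot\frac1\sigma)$ or similar, while the clean contribution with $1-2\eta\ge 2t_0$ is at least $\Omega(L\sigma\theta\cdot t_0 / \sigma)=\Omega(L\theta t_0)$ minus the area lost to the dirty set. (5) Optimize the trade-off over $t_0$ (and the strip width) to get the net lower bound $\Omega(\theta\cdot t_0^\star)$ where $t_0^\star$ balances signal against the TNC-controlled noise mass; the stated choice $\sigma=\Theta((\tfrac1A)^{\frac{1-\alpha}{3\alpha-1}}\theta^{\frac{2\alpha}{3\alpha-1}})$ should be precisely the value of $\sigma$ that makes this optimization come out to $\Omega((\tfrac1A)^{\frac{2(1-\alpha)}{3\alpha-1}}\theta^{\frac{2(1-\alpha)}{3\alpha-1}})$, and the exponents $\tfrac{2\alpha}{3\alpha-1},\tfrac{2(1-\alpha)}{3\alpha-1}$ strongly suggest a computation of the form "set $\sigma$-terms equal to $A\sigma^{-\text{something}}t_0^{\alpha/(1-\alpha)}$-terms and solve," with the condition $\alpha>\tfrac13$ being exactly what keeps the relevant exponents positive/finite.

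The main obstacle I anticipate is step (3)–(4): carefully controlling the contribution of the region near $w^\star$'s decision boundary, where $\eta(x)\to\tfrac12$ and the integrand's sign is unpredictable. One must show this region — which by TNC has small $D_X$-measure when weighted by proximity to $w^\star$'s boundary — cannot cancel the clean signal. This requires a delicate layered (dyadic) decomposition in $t = \tfrac12-\eta(x)$, bounding on each layer both the $D_X$-mass (via TNC, gaining $t^{\alpha/(1-\alpha)}$) and the magnitude of $|\phi_\sigma'|$ and the $u$-projection (losing at most $1/\sigma$ and $O(1)$ respectively), then summing the geometric-type series; the condition $\alpha>\tfrac13$ is what makes this series converge to something dominated by the signal term. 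A secondary technical point is justifying that $\theta\le\Theta(A)$ (equivalently $\sigma$ small relative to $R$) lets us ignore sub-exponential tails and treat $p_V\ge L$ throughout the relevant strip, so that the area-of-overlap computation is clean. Everything else — differentiating under the expectation, the $2$-dimensional reduction, bounding $|\phi_\sigma'|$, and the final algebra fixing $\sigma$ — I expect to be routine once the noise-region bound is in place.
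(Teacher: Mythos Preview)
Your overall strategy---project onto $u=(I-ww^\top)w^\star/\|(I-ww^\top)w^\star\|$, reduce to the plane $\spn(w,w^\star)$, and invoke TNC---matches the paper's, but your decomposition misidentifies where the $\theta$-dependence enters and therefore misses the term that actually drives the choice of $\sigma$. In coordinates $w=e_2$, $w^\star=(-\sin\theta,\cos\theta)$, the relevant component is $\EE\bigl[\phi_\sigma'(x_2)(1-2\eta(x))\sign(\langle w^\star,x\rangle)\,x_1\bigr]$; since $\phi_\sigma'<0$ and $1-2\eta\ge 0$, the integrand is nonnegative on $G=\{\sign(\langle w^\star,x\rangle)x_1\le 0\}$ and nonpositive on the geometric wedge $G^C$ between the two decision boundaries. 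Crucially, $\langle x,u\rangle=-x_1$ is $\Theta(1)$ on the signal region, \emph{not} $\Theta(\theta)$---there is no ``lateral displacement scaling with $\sin\theta$.'' The $\theta$ enters only through the wrong-sign wedge: the paper bounds $\EE\bigl[|\phi_\sigma'(x_2)|\,|x_1|\,\ind(G^C)\bigr]\le 2U\sigma^2/\theta^2$ by a direct polar-coordinate integral using only $1-2\eta\le 1$ and the density cap $U$. The actual trade-off is $\Omega(R^2L\,t)-O(U\sigma^2/\theta^2)$ with $t=\Theta\bigl((\sigma/A)^{(1-\alpha)/\alpha}\bigr)$, and equating these two terms is what yields $\sigma=\Theta\bigl(\theta^{2\alpha/(3\alpha-1)}\bigr)$.

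Your plan conflates the TNC-dirty set $\{\tfrac12-\eta(x)\le t\}$ with the geometric wedge $G^C$; these are unrelated, since TNC says nothing about \emph{where} $\eta$ is close to $\tfrac12$, so ``near $w^\star$'s boundary'' is not a valid proxy. In the paper TNC is used only on the signal side: one picks the rectangle $R_1=\{x_1\in[R/4,R/2],\,x_2\in[0,\sigma]\}$ with $\PP(R_1)\ge RL\sigma/4$, chooses $t$ so that $A t^{\alpha/(1-\alpha)}\le\tfrac12\PP(R_1)$, and concludes that at least half of $R_1$ has $1-2\eta\ge t$, giving the $\Omega(R^2L\,t)$ term. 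The wedge is handled separately, with no appeal to TNC. No dyadic decomposition in $t$ is needed, and the role of $\alpha>\tfrac13$ is simply that it makes the exponent $2\alpha/(3\alpha-1)$ positive (so $\sigma\to 0$ as $\theta\to 0$ and the signal dominates the wedge), not that it makes a layer sum converge.
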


\begin{proof}
With foresight, we choose $\sigma = \rbr{\frac{1}{768 U} \cdot R^2 L (\frac{R L}{A})^\frac{1-\alpha}{\alpha} }^\frac{\alpha}{3\alpha-1} \theta^\frac{2\alpha}{3\alpha-1} = \Theta \rbr{(\frac1A)^\frac{1-\alpha}{3\alpha-1}\theta^\frac{2\alpha}{3\alpha-1}}$. 
By our assumption that $\theta \leq \frac{8A}{RL} \cdot (\frac{1}{4})^{\frac{3\alpha-1}{2(1-\alpha)}} (\frac{768 U}{R^2L})^{\frac12} = \Theta(A)$, 
\begin{equation}
\sigma \leq \frac{8A}{RL} (\frac{1}{4})^{\frac{\alpha}{1-\alpha}}.
\label{eqn:sigma-ub}
\end{equation}

Without loss of generality, suppose 
$w = (0,1,0,\ldots,0)$ and $w^* = (-\sin \theta, \cos \theta,0,\ldots,0)$, 
we have 
\begin{align}
   \| \nabla_w L_\sigma(w) \|_2
    =&
    \normx{ \nabla \EE_D \sbr{\phi_\sigma \rbr{y \frac{\inner{w}{x} }{\|w\|_2}} } }_2  \nonumber\\
    =&
    \normx {\EE_D \sbr{\phi_\sigma' \rbr{y \frac{\inner{w}{x} }{\|w\|_2}} y \rbr{\frac{x}{\|w\|_2} - \frac{\inner{w}{x} w}{\|w\|_2^3}} } }_2 \nonumber\\
    =&    
    \normx{ \EE_x \sbr{ \phi'_\sigma(x_2) (1-2 \eta(x)) \sign(\inner{w^*}{x}) \rbr{\frac{x}{\|w\|_2} - \frac{\inner{w}{x} w}{\|w\|_2^3}}} }_2 \nonumber\\
    \geq& 
    \abs{\EE_x \sbr{ \phi'_\sigma(x_2) (1-2 \eta(x)) \sign(\inner{w^*}{x}) x_1}} 
    \label{eqn:gradient-norm}
\end{align}
where the first equality is by taking the gradient of $L_\sigma(w) = \EE_D \sbr{\phi_\sigma \rbr{y \frac{\inner{w}{x} }{\|w\|_2}}}$, the second equality is by $\nabla \phi_\sigma \rbr{y \frac{\inner{w}{x} }{\|w\|_2}} = \phi_\sigma' \rbr{y \frac{\inner{w}{x} }{\|w\|_2}} y \rbr{\frac{x}{\|w\|_2} - \frac{\inner{w}{x} w}{\|w\|_2^3}}$, 
the third inequality is by noting that $\frac{\inner{w}{x} }{\|w\|_2} = x_2 $ and $\phi'_\sigma(t) = \phi'_\sigma(-t)$, and $\EE \sbr{y \mid x} = (1-2 \eta(x)) \sign(\inner{w^*}{x})$, and the last inequality is because $\frac{x}{\|w\|_2} - \frac{\inner{w}{x} w}{\|w\|_2^3}  = (x_1, 0, x_3, \ldots)$.

Denote by $G := \cbr{x \in \RR^d : \phi'_\sigma(x_2) (1-2 \eta(x)) \sign(\inner{w^*}{x}) x_1 \geq 0} = 
\cbr{x \in \RR^d : \sign(\inner{w^*}{x}) x_1 \leq 0}$, and $G^C = \RR^d \setminus G$, then we have, 

\begin{align}
    \| \nabla_w L_\sigma(w) \|_2
    \geq&
    \abs{\EE_x \sbr{ \phi'_\sigma(x_2) (1-2 \eta(x)) \sign(\inner{w^*}{x}) x_1}} \nonumber \\
    =&
    \abs{\EE_x \sbr{ \phi'_\sigma(x_2) (1-2 \eta(x)) \sign(\inner{w^*}{x}) x_1 \rbr{\ind(x \in G) + \ind(x \in G^C) }}} \nonumber\\
    \geq&
    \EE_x \sbr{ \abs{\phi'_\sigma(x_2) } (1-2 \eta(x)) |x_1| \ind(x \in G)}
    -
    \EE_x \sbr{\abs{\phi'_\sigma(x_2) } (1-2 \eta(x)) |x_1| \ind(x \in G^C)} \nonumber \\
    =&
    \EE_x \sbr{ \abs{\phi'_\sigma(x_2) } (1-2 \eta(x)) |x_1|}
    -
    2 \EE_x \sbr{\abs{\phi'_\sigma(x_2) } (1-2 \eta(x)) |x_1| \ind(x \in G^C)} 
    \label{eqn:gradient-lb}
\end{align}
where the first inequality is from Equation~\eqref{eqn:gradient-norm}, the equalities are because $\ind(x \in G) + \ind(x \in G^C) = 1$, for all $x \in \RR^d$, 
the last inequality is by the triangle inequality. 

We lower bound $\EE_x \sbr{ \abs{\phi'_\sigma(x_2) } (1-2 \eta(x)) |x_1|}$ as follows. 

Define $R_1 = \cbr{x \in \RR^d: x_1 \in [\frac{R}{4}, \frac{R}{2}], x_2 \in [0, \sigma]}$, 
we lower bound $\PP_x(x \in R_1)$ as follows. 
We project $x$ onto the 2-dimensional subspace $V$
spanned by $\cbr{e_1, e_2}$; define $\tilde{x}$ to be the coordinate of its projection, and let $\tilde{R}_1$ be the projection of $R_1$ onto $V$. 
Denote by $\tilde{D}_X$ the distribution of $\tilde{x}$, and denote by its probability density function $p_V$.
Since $D_X$ is well-behaved, we have \[
\PP_x(x \in R_1) 
= \PP_{\tilde{x} \sim \tilde{D}_X }(\tilde{x} \in  \tilde{R}_1)
= \int_{\tilde{R}_1} p_V(\tilde{x}) d\tilde{x}
\geq \frac{R}{4} \sigma L 
\]

Let $t = 2 (\frac{R \sigma L}{8 A})^\frac{1-\alpha}{\alpha}$; with this choice of $t$, 
$\frac{R}{8} \sigma L \geq A (\frac{t}{2})^\frac{\alpha}{1-\alpha}$. Also note that by Eq.~\eqref{eqn:sigma-ub}, $t \leq \frac12$. We obtain the following, 
\begin{align}
    \EE_x \sbr{\abs{\phi'_\sigma(x_2) } (1-2 \eta(x)) |x_1|}
    \geq&
    \EE_x \sbr{\abs{\phi'_\sigma(x_2) } (1-2 \eta(x)) |x_1| \ind(x \in R_1)} \nonumber\\
    \geq&
    \frac{1}{6\sigma} \cdot t \cdot \EE \sbr{\ind(1-2\eta(x) \geq t) |x_1| \ind(x \in R_1)} \nonumber\\
    \geq&
    \frac{1}{6\sigma} \cdot t \frac{R}{4} \cdot \EE \sbr{\ind(1-2\eta(x) \geq t) \ind(x \in R_1)} \nonumber\\
    \geq&
    \frac{1}{6\sigma} \cdot t \frac{R}{4} \cdot \sbr{\PP(x \in R_1)- \PP(1-2\eta(x) \leq t)} \nonumber\\
    \geq&
    \frac{1}{6\sigma} \cdot t \frac{R}{4} \cdot \sbr{\PP(x \in R_1)- A (\frac{t}{2})^\frac{\alpha}{1-\alpha}} \nonumber\\
    \geq&
    \frac{1}{6\sigma} \cdot t \frac{R}{4} \cdot \sbr{\frac{R}{4} \sigma L - \frac{R}{8} \sigma L} \nonumber\\
    =& 
    \frac{1}{192} c \cdot R^2 L t \nonumber\\
    =& 
    \frac{1}{192} c \cdot R^2 L \cdot
    2 (\frac{R \sigma L}{8 A})^\frac{1-\alpha}{\alpha}
    \label{eqn:gradient-1}
\end{align}
where the first inequality is since $R_1 \subset \RR^d$, 
the second inequality is
by noting that when $|x_2| \leq \sigma$, $\abs{\phi'_\sigma(x_2) } \geq \frac1\sigma \frac{e}{(1+e)^2} \geq \frac{1}{6\sigma}$.
The third is because for all $x \in R_1$, $|x_1| \geq \frac{R}{4}$, the fourth is by basic logic operation, the fifth is by the definition of TNC and $t \leq \frac12$.  
The sixth is by $\PP_x(x \in R_1) \geq \frac{R}{4} \sigma L $ and $\frac{R}{8} \sigma L \geq A (\frac{t}{2})^\frac{\alpha}{1-\alpha}$. 
The other inequalities and equalities are all by algebra. 

Next, we upper bound $\EE_x \sbr{\abs{\phi'_\sigma(x_2) }  (1-2 \eta(x)) |x_1| \ind(x \in G^C)}$. 

Let $f(r \cos \varphi, r \sin \varphi)$ denote the density function after projection on the 2-d subspace spanned by $\cbr{e_1, e_2}$, 
\begin{align}
    \EE_x \sbr{\abs{\phi'_\sigma(x_2) }  (1-2 \eta(x)) |x_1| \ind(x \in G^C)}
    \leq&
    \EE_x \sbr{\abs{\phi'_\sigma(x_2) }  |x_1| \ind(x \in G^C)} \nonumber\\
    =&
    \EE_x \sbr{ \frac{1}{\sigma} \frac{e^{\frac{|x_2|}{\sigma}}}{(1+ e^{\frac{|x_2|}{\sigma}})^2} |x_1| \ind(x \in G^C)} \nonumber\\
    =&
    \EE_x \sbr{ \frac{1}{\sigma} \frac{e^{-\frac{|x_2|}{\sigma}}}{(1+ e^{-\frac{|x_2|}{\sigma}})^2} |x_1| \ind(x \in G^C)} \nonumber\\
    \leq&
    \EE_x \sbr{ \frac{1}{\sigma} e^{-\frac{|x_2|}{\sigma}} |x_1| \ind(x \in G^C)} \nonumber\\
    =&
    \frac{2}{\sigma}
    \int_0^\infty \int_\theta^{\frac{\pi}{2}} f(r \cos \varphi, r \sin \varphi) r^2 \cos \varphi e^{-\frac{r \sin \varphi}{\sigma}} \dif \varphi \dif r \nonumber\\
    \leq&
    \frac{2}{\sigma} U
    \int_0^\infty \int_\theta^{\frac{\pi}{2}} r^2 \cos \varphi e^{-\frac{r \sin \varphi}{\sigma}} \dif \varphi \dif r \nonumber\\
    =& 
    2 U \frac{\sigma^2}{\tan^2 \theta} \nonumber\\
    \leq&
    2 U \frac{\sigma^2}{ \theta^2} 
    \label{eqn:gradient-2}
\end{align}
where the first inequality is by $\eta(x) \geq 0$, 
the second equality uses $\phi_\sigma'(t) = -\frac{1}{\sigma}\frac{e^{\frac{t}{\sigma}}}{(1 +e^{\frac{t}{\sigma}})^2}$, the third equality is by algebra, the fourth inequality is because $(1+ e^{-\frac{|x_2|}{\sigma}})^2 \geq 1$, 
the fifth equality is writing the expectation as the integral in the polar coordinate, the sixth inequality is by the definition of well-behaved distribution: $f(r \cos \varphi, r \sin \varphi) \leq U$, for all $r \geq 0, \varphi \in [0, 2\pi]$, the next equality is by algebra, the last inequality is by the elementary fact that $\tan \theta \geq \theta$, for all $\theta \in [0, \pi/2]$. 

Therefore, putting together Equations \eqref{eqn:gradient-lb}, \eqref{eqn:gradient-1} and \eqref{eqn:gradient-2}, we obtain  \[
\| \nabla_w L_\sigma(w) \|_2
\geq 
\frac{1}{192} \cdot R^2 L \cdot
    2 (\frac{R \sigma L}{8 A})^\frac{1-\alpha}{\alpha} 
- 4 U \frac{\sigma^2}{ \theta^2} 
\]

Recall the choice that $\sigma = \rbr{\frac{1}{768 U} \cdot R^2 L (\frac{R L}{A})^\frac{1-\alpha}{\alpha} }^\frac{\alpha}{3\alpha-1} \theta^\frac{2\alpha}{3\alpha-1} = \Theta \rbr{(\frac1A)^\frac{1-\alpha}{3\alpha-1}\theta^\frac{2\alpha}{3\alpha-1}}$, 
then we obtain

\[
\| \nabla_w L_\sigma(w) \|_2
\geq 
4 U 
\rbr{\frac{1}{768 U} \cdot R^2 L (\frac{R L}{A})^\frac{1-\alpha}{\alpha} }^\frac{2\alpha}{3\alpha-1} \theta^\frac{2-2\alpha}{3\alpha-1}
=
\Omega \rbr{
(\frac1A)^\frac{2(1-\alpha)}{3\alpha-1} \theta^\frac{2(1-\alpha)}{3\alpha-1} }
\]

\end{proof}

\begin{lemma}[Restatement of Lemma~\ref{lem:active-oracle-property}]
    Let $g_w$ be the random output of $\afo(w)$. We have, for any unit vector $w$:
    \begin{enumerate}
        \item $g_w$ is perpendicular to $w$;
        \item $g_w$ is an unbiased estimator of $\nabla L_\sigma(w): \EE \sbr{g_w}= \nabla L_\sigma(w)$;
        \item $\EE \sbr{\|g_w \|^2} \leq \tilde{O} (\frac{d}{\sigma})$;
        \item The expected number of label queries per call to \afo is $\tilde{O}(\sigma)$.
    \end{enumerate}
\label{lem:active-oracle-property-restate}

\end{lemma}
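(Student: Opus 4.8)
The plan is to treat the four items separately. Items 1 and 2 reduce to direct computations with the explicit forms of $h(w,x,y)$, $q(w,x)$, and $\phi_\sigma$, whereas items 3 and 4 hinge on the observation that the query probability $q(w,x)=\sigma\,|\phi_\sigma'(\langle w,x\rangle)|$ decays exponentially in the normalized margin $|\langle w,x\rangle|/\sigma$; combining this decay with well-behavedness of $D_X$ yields the claimed bounds. Throughout, fix a unit vector $w$, so that $h(w,x,y)=-\frac1\sigma\,y\,(x-\langle w,x\rangle w)$ and $q(w,x)=\sigma\,|\phi_\sigma'(\langle w,x\rangle)|$; note also $\phi_\sigma'(t)=-\frac1\sigma\frac{e^{t/\sigma}}{(1+e^{t/\sigma})^2}<0$ is even in $t$, and $|\phi_\sigma'(t)|\le\frac1{4\sigma}$.

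For item 1: if $Z=0$ then $g_w=0\perp w$ trivially; if $Z=1$ then $\langle g_w,w\rangle=-\frac1\sigma y\,(\langle x,w\rangle-\langle w,x\rangle\|w\|^2)=0$ since $\|w\|=1$. For item 2, I would condition on $x$, using that $Z\sim\Ber(q(w,x))$ is drawn independently of $y$: $\EE[g_w\mid x]=q(w,x)\,\EE_{y\sim D_{Y\mid x}}[h(w,x,y)]=-\frac1\sigma q(w,x)\,\EE[y\mid x]\,(x-\langle w,x\rangle w)$. Because $\phi_\sigma'<0$, $q(w,x)=-\sigma\phi_\sigma'(\langle w,x\rangle)$, so the $\sigma$'s cancel and $\EE[g_w\mid x]=\phi_\sigma'(\langle w,x\rangle)\,\EE[y\mid x]\,(x-\langle w,x\rangle w)$. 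On the other side, differentiating under the expectation (justified by dominated convergence, since $|\phi_\sigma'|\le\frac1{4\sigma}$ is bounded and $D_X$ has sub-exponential tails, hence $\EE\|x\|<\infty$) gives $\nabla_w\phi_\sigma\!\big(y\tfrac{\langle w,x\rangle}{\|w\|}\big)\big|_{\|w\|=1}=\phi_\sigma'(y\langle w,x\rangle)\,y\,(x-\langle w,x\rangle w)$; since $\phi_\sigma'$ is even and $y\in\{\pm1\}$ this equals $\phi_\sigma'(\langle w,x\rangle)\,y\,(x-\langle w,x\rangle w)$, and taking expectations (first over $y$) recovers exactly $\EE_x[\EE[g_w\mid x]]=\nabla L_\sigma(w)$.

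For item 3, conditioning on $x$ gives $\EE[\|g_w\|^2\mid x]=q(w,x)\,\frac1{\sigma^2}\|x-\langle w,x\rangle w\|^2$. The key elementary inequality is $q(w,x)\le e^{-|\langle w,x\rangle|/\sigma}$, which follows from $4\cosh^2(u/2)=2\cosh u+2\ge e^{|u|}$ applied to $u=\langle w,x\rangle/\sigma$. Writing $\|x-\langle w,x\rangle w\|^2=\sum_{i=1}^{d-1}\langle u_i,x\rangle^2$ for an orthonormal basis $\{u_i\}$ of $w^\perp$, I would bound each $\EE\big[e^{-|\langle w,x\rangle|/\sigma}\langle u_i,x\rangle^2\big]$ by splitting on $|\langle u_i,x\rangle|\le T$ versus $>T$ with $T=\Theta(\beta\ln\frac1\sigma)$: on the bulk, the two-dimensional marginal density on $\mathrm{span}(w,u_i)$ is $\le U$, so the contribution is at most $T^2\cdot U\cdot\big(\int_{\mathbb R}e^{-|s|/\sigma}ds\big)\cdot 2T=O(UT^3\sigma)$; on the tail, the sub-exponential concentration bound gives $\EE[\langle u_i,x\rangle^2\,\ind(|\langle u_i,x\rangle|>T)]=O((T^2+\beta^2)e^{-T/\beta})=O((T^2+\beta^2)\sigma^2)$. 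Both are $\tilde O(\sigma)$, so summing over the $d-1$ coordinates gives $\EE[\|g_w\|^2]\le\frac1{\sigma^2}\cdot\tilde O(d\sigma)=\tilde O(d/\sigma)$. For item 4, the expected number of label queries per call is $\EE_x[q(w,x)]=\EE_x[\sigma\,|\phi_\sigma'(\langle w,x\rangle)|]$; I would first bound the one-dimensional marginal density of $\langle w,x\rangle$ by a $\tilde O(1)$ constant (via the same bulk-plus-tail split: integrate the $\le U$ two-dimensional density over $|\langle u,x\rangle|\le\beta$ and use $\PP(|\langle u,x\rangle|>\beta)\le 1$ for the rest), and then use $\int_{\mathbb R}\sigma\,|\phi_\sigma'(t)|\,dt=\sigma$ to conclude $\EE_x[q(w,x)]\le\tilde O(1)\cdot\sigma=\tilde O(\sigma)$.

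I expect item 3 to be the main obstacle. The naive bound $q(w,x)\le\frac14$ gives only $\EE[\|g_w\|^2]\le\frac1{4\sigma^2}\EE\|x\|^2=\tilde O(d/\sigma^2)$, which is worse by a factor $1/\sigma$ and would inflate the iteration complexity of \apsgd. The crux is to extract the extra factor of $\sigma$ from the exponential decay of $q$ while controlling the quadratic growth of $\|x-\langle w,x\rangle w\|^2$ using only the comparatively weak (two-dimensional density plus one-dimensional tail) guarantees of well-behavedness, which is exactly what forces the coordinate-wise truncation argument above.
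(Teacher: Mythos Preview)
Your items 1 and 2 match the paper's argument essentially verbatim. The interesting divergence is in item 3.

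\medskip
\noindent\textbf{Item 3: different routes.} The paper does not use the pointwise bound $q(w,x)\le e^{-|\langle w,x\rangle|/\sigma}$ or a coordinate-by-coordinate decomposition. Instead it writes $\EE\|g_w\|^2=\tfrac1\sigma\,\EE\big[|\phi_\sigma'(\langle w,x\rangle)|\,\|x\|^2\big]$, applies H\"older with conjugate exponents $(p,q)$, bounds $\EE|\phi_\sigma'|^p$ by a shell-partition argument (their Lemma~\ref{lem:deriv-p-moment}) and $\EE\|x\|^{2q}$ by a Gamma-function calculation (their Lemma~\ref{lem:x^q-ub}), then optimizes $q=\ln(1/\sigma)$. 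Your argument is more hands-on: it exploits the exponential decay of $q(w,x)$ directly on each 2-dimensional subspace $\mathrm{span}(w,u_i)$ and truncates. Both give $\tilde O(d/\sigma)$. Your route is self-contained and avoids the H\"older/Gamma machinery, which is a genuine simplification for this statement alone; the paper's route, on the other hand, packages the moment control into reusable lemmas that it also deploys to bound the Hessian in Lemma~\ref{lem:smoothness-factor-ub}, so their investment pays off elsewhere.

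\medskip
\noindent\textbf{Item 4: a small gap.} As written, your argument does not go through. You propose to bound the one-dimensional marginal density of $\langle w,x\rangle$ pointwise, but the tail piece ``use $\PP(|\langle u,x\rangle|>\beta)\le 1$'' is a probability bound, not a density bound, so it cannot control $p_1(s)$ at a fixed $s$. Moreover, with cutoff $\beta$ the tail contribution to the \emph{expectation} is only $\tfrac14\cdot\PP(|\langle u,x\rangle|>\beta)=O(1)$, not $\tilde O(\sigma)$. The fix is to split the expectation (not the density) at the same threshold $T=\Theta(\beta\ln(1/\sigma))$ you used in item 3: then the bulk gives $\int_{|t|\le T}\int_{\RR}U\,\sigma|\phi_\sigma'(s)|\,ds\,dt=2UT\sigma=\tilde O(\sigma)$ and the tail gives $\tfrac14\,e^{1-T/\beta}=O(\sigma)$. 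This is exactly the content of the paper's Lemma~\ref{lem:prob-of-slice}, which it then feeds into Lemma~\ref{lem:deriv-p-moment} with $p=1$; your corrected argument is simply a more direct version of that.
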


\begin{proof}
    Recall that $q(w,x) = \sigma \abs{\phi_\sigma'(\inner{\frac{w}{\|w\|}}{x} ) }$, and $h(w, x, y) =
        - \frac{1}{\sigma} y \rbr{\frac{x}{\|w\|_2} - \frac{\inner{w}{x} w}{\|w\|_2^3} } $.

    We prove the first term as follows. 
    Note that $g_w$ can take the value of $0$ or $h(w, x, y)$. If $g_w = 0$, then obviously, $\inner{g_w}{w} = 0$. If $g_w = h(w, x, y)$, we have
    \[
    \inner{h(w, x, y)}{w} =
        - \frac{1}{\sigma} \inner{\frac{x}{\|w\|_2} - \frac{\inner{w}{x} w}{\|w\|_2^3} }{w} = 
        - \frac{1}{\sigma} \rbr{\frac{\inner{x}{w}}{\|w\|_2} - \frac{\inner{x}{w} \|w\|^2}{\|w\|_2^3}  } = 0 .
        \]
    Hence we conclude that in both cases, $g_w$ is perpendicular to $w$.
    
    For the second item, let $w \in \RR^d$, we have 
    \begin{align*}
    \EE \sbr{g_w}
    =&
    \EE_{(x,y) \sim D, Z\sim \text{Bernoulli}(q(w,x))} \sbr{h (w, x, y) Z } \\
    =&
    \EE_{(x,y) \sim D} \sbr{h (w, x, y) q(w,x)  } \\
    =&
    \EE_{(x,y) \sim D} \sbr{ - \frac{1}{\sigma} y \rbr{\frac{x}{\|w\|_2} - \frac{\inner{w}{x} w}{\|w\|_2^3} } \sigma \abs{\phi_\sigma'(\inner{\frac{w}{\|w\|}}{x} ) }   } \\
    =&
    \nabla L_\sigma(w)        
    \end{align*}
    where the first equality is by the definition of \afo, the second equality uses the tower rule,  the third equality plugs in the value of $h (w, x, y) $ and $ q(w,x)$, the last equality is by the definition of $L_\sigma(w)  $.

    Now we prove the third item. 
    \begin{enumerate}
        \item If $\sigma < \frac1e$. 
    
    Let $C$ below be from Lemma~\ref{lem:deriv-p-moment}. 
    We have for any $w$ such that $\|w\| \geq 1$, for any $p, q \geq 1$ such that $\frac1p + \frac1q = 1$, 
    \begin{align*}
        \EE \sbr{\|g_w \|^2}
        =&
        \EE_{(x,y) \sim D, Z\sim \text{Bernoulli}(q(w,x))} \sbr{ \|h (w, x, y) Z \|^2 } \\
        =&
        \EE_{(x,y)\sim D} \sbr{ q(w, x) \| h(w, x, y) \|_2^2 } \\
        =&
        \EE_{(x,y)\sim D}  \sbr{ \sigma \abs{\phi_\sigma'(\inner{\frac{w}{\|w\|}}{x} ) } 
        \frac{1}{\sigma^2} \|\frac{x}{\|w\|_2} - \frac{\inner{w}{x} w}{\|w\|_2^3}\|^2 } \\
        =&
        \frac{1}{\sigma}
        \EE_{(x,y)\sim D} \abs{\phi_\sigma'(\inner{\frac{w}{\|w\|}}{x})} \|\frac{x}{\|w\|_2} - \frac{\inner{w}{x} w}{\|w\|_2^3}\|^2 \\
        \leq&
        \frac{1}{\sigma}
        \EE_{(x,y)\sim D} \abs{\phi_\sigma'(\inner{\frac{w}{\|w\|}}{x})} \|x\|^2 \\
        \leq&
        \frac{1}{\sigma}
        \rbr{\EE_{(x,y)\sim D} \abs{\phi_\sigma'(\inner{\frac{w}{\|w\|}}{x})}^p}^\frac{1}{p} \rbr{\EE_{(x,y)\sim D} \|x\|^{2q}}^\frac{1}{q} \\
        \leq&
        \frac{1}{\sigma}
        \rbr{C \frac{1}{\sigma^{p-1}}\ln \frac{1}{\sigma}}^\frac{1}{p} 
        \rbr{\Gamma(2q+1) e \beta^{2q} d^q}^\frac{1}{q} \\
        =&
        \frac{1}{\sigma}
        \tilde{O} ((\frac{1}{\sigma})^{1-\frac{1}{p}} \cdot q^2 d) \\
        =&
        \frac{1}{\sigma}
        \tilde{O} ((\frac{1}{\sigma})^{\frac{1}{q}} \cdot q^2 d)
    \end{align*}
    where the first equality is by the definition of $g_w$ in
    Algorithm~\ref{alg:first-order-oracle},
    the second equality uses the tower rule, the third equality is by the definition of $q(w,x)$ and $h(w,x,y)$, the fourth equality is by algebra, 
    the fifth inequality is because for all $x \in \RR^d$, \[
    \|\frac{x}{\|w\|_2} - \frac{\inner{w}{x} w}{\|w\|_2^3}\|^2 
    \leq
    \|\frac{x}{\|w\|_2}\|^2 
    \leq
    \|x\|^2
    \]
    The sixth inequality is by Holder's inequality. The seventh inequality is by Lemmas~\ref{lem:deriv-p-moment} and~\ref{lem:x^q-ub}. The eighth equality is by Lemma~\ref{lem:gamma-function-ub}. The ninth equality uses that $\frac1p + \frac1q = 1$. 
    
    Choosing $q = \ln \frac{1}{\sigma}$, we have $q > 1$ since $\sigma < \frac1e$. 
    
    we have \[
    \EE_{(x,y) \sim D} \sbr{\|g_w \|^2}
    \leq
    \frac{1}{\sigma}
    \tilde{O} ((\frac{1}{\sigma})^{\frac{1}{\ln \frac{1}{\sigma}}} \cdot (\ln \frac{1}{\sigma})^2 d)
    =
    \frac{1}{\sigma}
    \tilde{O} (\exp(\ln \frac{1}{\sigma} \cdot \frac{1}{\ln \frac{1}{\sigma}}) \cdot (\ln \frac{1}{\sigma})^2 d)
    =
    \tilde{O} (\frac{d}{\sigma})
    \]
    where the last two equalities are by algebra. 

    \item If $\sigma \geq \frac1e$. Then we can proceed as follows, 
    \begin{align*}
        \EE \sbr{\|g_w \|^2}
        =&
        \EE_{(x,y) \sim D, Z\sim \text{Bernoulli}(q(w,x))} \sbr{ \|h (w, x, y) Z \|^2 } \\
        =&
        \EE_{(x,y)\sim D} \sbr{ q(w, x) \| h(w, x, y) \|_2^2 } \\
        =&
        \EE_{(x,y)\sim D}  \sbr{ \sigma \abs{\phi_\sigma'(\inner{\frac{w}{\|w\|}}{x} ) } 
        \frac{1}{\sigma^2} \|\frac{x}{\|w\|_2} - \frac{\inner{w}{x} w}{\|w\|_2^3}\|^2 } \\
        =&
        \frac{1}{\sigma}
        \EE_{(x,y)\sim D} \abs{\phi_\sigma'(\inner{\frac{w}{\|w\|}}{x})} \|\frac{x}{\|w\|_2} - \frac{\inner{w}{x} w}{\|w\|_2^3}\|^2 \\
        \leq&
        \frac{1}{\sigma}
        \EE_{(x,y)\sim D} \abs{\phi_\sigma'(\inner{\frac{w}{\|w\|}}{x})} \|x\|^2 \\
        \leq&
        \frac{1}{\sigma^2} \EE_{(x,y)\sim D} \|x\|^2 \\
        \leq&
        \frac{1}{\sigma^2} O(d) \\
        =& O(d)
    \end{align*}
    where the sixth inequality is because $\abs{\phi_\sigma'(t)} \leq \frac{1}{\sigma}$, for all $t \in \RR$, the seventh inequality uses Lemma~\ref{lem:x^q-ub} with $q = 2$, the eighth inequality uses $\sigma \geq \frac1e$. 
    
    \end{enumerate}

    Lastly, we prove the fourth item. We have, 
    \[
    \EE_{(x,y) \sim D, Z\sim \text{Bernoulli}(q(w,x))} \sbr{ Z }
    =
    \EE_{(x,y)\sim D} \sbr{ q(w, x)  }
    =
    \sigma \EE_{(x,y)\sim D} \abs{\phi_\sigma'(\inner{\frac{w}{\|w\|}}{x} ) }
    \leq
    \sigma C \ln \frac{1}{\sigma}
    =
    \tilde{O}(\sigma)
    \]
    where the first equality uses the tower rule, the second equality uses the definition of $q(w, x)$, the inequality is by applying Lemma~\ref{lem:deriv-p-moment} with $p = 1$. 
\end{proof}

\begin{lemma}
    With probability at least $1-\delta$, the total number of label queries after $T$ calls to \afo is at most $\tilde{O}(\sigma T + \sqrt{T \ln \frac{1}{\delta}} )$.
    \label{lem:high-prob-label}
\end{lemma}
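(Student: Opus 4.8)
The plan is to express the total number of label queries as a sum of Bernoulli indicators and apply a martingale concentration bound, the subtlety being that the inputs to successive calls of \afo may be chosen adaptively (e.g.\ inside \apsgd, where the $i$-th call uses $w_{i-1}$, which depends on the outcomes of earlier calls).

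Concretely, I would index the $T$ calls by $t \in [T]$, write $w^{(t)}$ for the (possibly random) unit vector on which the $t$-th call is made, let $x_t \sim D_X$ be the example it draws, and let $Z_t \in \{0,1\}$ be the $\Ber(q(w^{(t)}, x_t))$ variable it draws, so that the $t$-th call queries a label iff $Z_t = 1$ and the total query count is $Q := \sum_{t=1}^T Z_t$. Introduce the filtration $\cbr{\Fcal_t}$ where $\Fcal_t$ collects all randomness up to and including the $t$-th call, so $w^{(t)}$ is $\Fcal_{t-1}$-measurable. The key input is item~\ref{item:expected-label} of Lemma~\ref{lem:active-oracle-property} (more precisely, the bound $\sigma\, \EE_{(x,y)\sim D}\abs{\phi'_\sigma(\inner{w/\|w\|}{x})} \le \sigma C \ln\frac1\sigma$ from its proof): it gives, uniformly over the realized value of $w^{(t)}$, that $\EE[Z_t \mid \Fcal_{t-1}] \le p$, where $p := \tilde O(\sigma)$ (and $p \le 1$ since $\sigma$ is small). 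Thus $M_t := \sum_{s=1}^t \big(Z_s - \EE[Z_s \mid \Fcal_{s-1}]\big)$ is a martingale with increments lying in an interval of length at most $1$.

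By the Azuma--Hoeffding inequality, with probability at least $1-\delta$ we have $M_T \le \sqrt{\tfrac{T}{2}\ln\tfrac1\delta}$, whence
\[
Q \;=\; M_T + \sum_{t=1}^T \EE[Z_t \mid \Fcal_{t-1}] \;\le\; Tp + \sqrt{\tfrac{T}{2}\ln\tfrac1\delta} \;=\; \tilde O\!\Big(\sigma T + \sqrt{T\ln\tfrac1\delta}\Big),
\]
which is exactly the claimed bound. (If the sharper second term $\tilde O(\sqrt{\sigma T\ln\frac1\delta})$ were wanted, one would instead iterate $\EE[e^{\lambda Z_s}\mid \Fcal_{s-1}] \le e^{p(e^\lambda - 1)}$ over $s$ and apply Markov's inequality, obtaining a Poisson/Bernstein-type upper tail with mean parameter $Tp$; but this is not needed here.) Degenerate regimes are harmless: if $\ln\frac1\delta \ge T$ then the trivial bound $Q \le T \le \sqrt{T\ln\frac1\delta}$ already suffices.

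The only real obstacle is the adaptivity of the $w^{(t)}$, which makes $Z_1,\dots,Z_T$ dependent; this is precisely why one cannot simply invoke a Chernoff bound for a Binomial and must instead work with the centered sequence and its conditional expectations. Since the per-step bound $\EE[Z_t\mid\Fcal_{t-1}]\le p$ holds conditionally and uniformly in $w^{(t)}$, the centered process is a genuine bounded-difference martingale and the argument above goes through verbatim.
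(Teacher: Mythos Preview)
Your proposal is correct and is essentially the same argument as the paper's: set up the centered martingale $M_t=\sum_{s\le t}(Z_s-\EE[Z_s\mid\Fcal_{s-1}])$, use item~\ref{item:expected-label} of Lemma~\ref{lem:active-oracle-property} to bound each conditional mean by $\tilde O(\sigma)$, and apply Azuma--Hoeffding to the bounded increments. The only cosmetic differences are that the paper uses the $|M_i-M_{i-1}|\le 1$ form of Azuma (yielding $\sqrt{2T\ln(1/\delta)}$) rather than your interval-length form, and does not include your parenthetical Bernstein remark; neither affects the result.
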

\begin{proof}
    Let $Z_i$ be the query indicator the $i$-th time \afo is called.  
    Define a filtration $\cbr{\Gcal_i}_{i=0}^N$, where $\Gcal_i$ denotes the $\sigma$-field $\sigma(w_1, Z_1, w_2, Z_2, \ldots, w_i, Z_i)$. In this proof, We use $\EE_i [\cdot]$ to denote the conditional expectation with respect to $\Gcal_i$.

    Let $M_i = \sum_{j=1}^i Z_j - \EE\sbr{Z_j \mid \Gcal_{j-1}}$.
    It can be seen that $\cbr{M_i}_{i=1}^T$ is a martingale and $|M_i - M_{i-1}| \leq 1$. 
    Applying Azuma's inequality, we have that with probability at least $1-\delta$,
    \[
    M_T = \sum_{j=1}^T Z_j - \sum_{j=1}^T \EE\sbr{Z_j \mid \Gcal_{j-1}} \leq \sqrt{ 2 T \ln \frac{1}{\delta} }. 
    \]

    In addition, by item~\ref{item:expected-label} of Lemma~\ref{lem:active-oracle-property}, 
    $\EE\sbr{Z_j \mid \Gcal_{j-1}} = \tilde{O}(\sigma)$ for all $j \in \cbr{1,\ldots,T}$. Combining with the above inequality, we conclude that 
    \[
    \sum_{j=1}^T Z_j 
    \leq
    \sum_{j=1}^T \EE\sbr{Z_j \mid \Gcal_{j-1}} + \sqrt{ 2 T \ln \frac{1}{\delta} }
    \leq 
    \tilde{O}(\sigma T + \sqrt{T \ln\frac1\delta}).
    \]

\end{proof}

\section{Auxiliary lemmas}

\begin{lemma}
Let $D_X$ be a well-behaved distribution, 
then there exists a constant $C$, such that
for any $w$ such that $\|w\| \geq 1$, and any $p \geq 1$, 
\[
\EE_{x \sim D_X} \abs{\phi_\sigma'(\inner{\frac{w}{\|w\|}}{x} ) }^p 
\leq 
C \frac{1}{\sigma^{p-1}}\ln \frac{1}{\sigma}
\]
\label{lem:deriv-p-moment}
\end{lemma}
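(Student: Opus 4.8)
The plan is to reduce to the case $p=1$ and then to a small-ball estimate for the one-dimensional marginal of $D_X$. Write $\bar w := w/\|w\|_2$, a unit vector, so that $\phi_\sigma'(\inner{\frac{w}{\|w\|}}{x})$ depends on $x$ only through $Y := \inner{\bar w}{x}$. Since $\frac{z}{(1+z)^2}\le\frac14$ for all $z\ge 0$, we have the crude uniform bound $\abs{\phi_\sigma'(t)}\le\frac1{4\sigma}$, hence $\abs{\phi_\sigma'(t)}^p\le(\frac1{4\sigma})^{p-1}\abs{\phi_\sigma'(t)}$; thus it suffices to prove $\EE\,\abs{\phi_\sigma'(Y)}\le C'\ln\frac1\sigma$ for a universal constant $C'$. (We focus on the regime $\sigma\le 1/e$ that is relevant to the rest of the paper, where $\ln\frac1\sigma\ge 1$; for $\sigma>1/e$ the bound $\abs{\phi_\sigma'}\le\frac1{4\sigma}$ already yields the claim with $\ln\frac1\sigma$ replaced by a constant.) Using $\abs{\phi_\sigma'(t)}=\frac1\sigma\frac{e^{-\abs{t}/\sigma}}{(1+e^{-\abs{t}/\sigma})^2}\le\frac1\sigma e^{-\abs{t}/\sigma}$ together with a change of variables in the layer-cake formula, we get
\[
\EE\,\abs{\phi_\sigma'(Y)}\ \le\ \frac1\sigma\,\EE\,e^{-\abs{Y}/\sigma}\ =\ \frac1{\sigma^2}\int_0^\infty \PP\rbr{\abs{Y} < r}\,e^{-r/\sigma}\,\dif r .
\]

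The crux is then a small-ball bound of the form $\PP(\abs{\inner{\bar w}{x}}<r)\le C_2\,r\ln(e/r)$ for $r\in(0,1]$, with $C_2$ depending only on $U$ and $\beta$. To obtain it, I would project $x$ onto the $2$-dimensional subspace $V=\spn\{\bar w,e\}$ for a fixed unit vector $e\perp\bar w$ (whose projected density $p_V$ satisfies $p_V\le U$ everywhere), and split the strip $\{x:\abs{\inner{\bar w}{x}}<r\}$ according to whether $\|x_V\|_2\le T$ or $\|x_V\|_2>T$ for a threshold $T$ to be chosen: the first region has planar area at most $4rT$, so probability at most $4UrT$, while on the second region $\abs{\inner{e}{x}}\ge\sqrt{T^2-r^2}$, whose probability is at most $\exp(1-\sqrt{T^2-r^2}/\beta)$ by the sub-exponential tail in the definition of well-behaved distributions. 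Choosing $T=\Theta\rbr{\beta\ln(e/r)}$ balances the two terms and yields $\PP(\abs{\inner{\bar w}{x}}<r)\le C_2\,r\ln(e/r)$. This is where the extra logarithmic factor arises: a distribution with a genuinely bounded one-dimensional marginal density would give $O(r)$ here, but well-behavedness only controls the $2$-dimensional density, so the best we extract is $O(r\ln(e/r))$, which is what ultimately produces the $\ln\frac1\sigma$ in the statement.

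Finally I would plug the small-ball bound into the integral and split $\int_0^\infty$ at $r=\sigma$ and $r=1$: on $[0,\sigma]$, using that $r\mapsto r\ln(e/r)$ is increasing on $(0,1]$, the integrand is at most $C_2\sigma\ln(e/\sigma)\cdot e^{-r/\sigma}$; on $[\sigma,1]$ we use $\ln(e/r)\le\ln(e/\sigma)$ and $\int_0^\infty r\,e^{-r/\sigma}\,\dif r=\sigma^2$; and on $[1,\infty)$ we use $\PP(\abs{Y}<r)\le 1$ and $\int_1^\infty e^{-r/\sigma}\,\dif r=\sigma e^{-1/\sigma}\le\sigma$. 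Summing, $\int_0^\infty\PP(\abs{Y}<r)e^{-r/\sigma}\,\dif r=O(\sigma^2\ln(e/\sigma))$, hence $\EE\,\abs{\phi_\sigma'(Y)}=O(\ln\frac1\sigma)$; combined with $\abs{\phi_\sigma'}^p\le(\frac1{4\sigma})^{p-1}\abs{\phi_\sigma'}$ this gives $\EE_{x\sim D_X}\abs{\phi_\sigma'(\inner{\frac{w}{\|w\|}}{x})}^p\le(\frac1{4\sigma})^{p-1}C'\ln\frac1\sigma\le C\,\frac1{\sigma^{p-1}}\ln\frac1\sigma$. The main obstacle I anticipate is the small-ball estimate in the second step: pinning down the correct logarithmic dependence and cleanly handling the $r\to 0$ and $r\to 1$ regimes; the remaining steps are a routine change of variables and elementary integral estimates.
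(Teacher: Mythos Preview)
Your proof is correct, and its key ingredient---the small-ball estimate $\PP(\abs{\inner{\bar w}{x}}<r)=O(r\ln(e/r))$ obtained by projecting to a $2$-dimensional subspace and balancing the bounded-density bound against the sub-exponential tail---is exactly the content of the paper's Lemma~\ref{lem:prob-of-slice}. Where you differ is in how this estimate is combined with the shape of $\phi_\sigma'$. The paper partitions the real line into slabs $R_k=\{x:\abs{\inner{\bar w}{x}}\in(k\sigma,(k+1)\sigma)\}$, bounds $\abs{\phi_\sigma'}^p$ by $(\frac{1}{\sigma}\frac{e^k}{(1+e^k)^2})^p$ on each $R_k$, applies the small-ball bound to each slab, and sums the resulting geometric-type series over $k$. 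You instead make two continuous reductions: first $\abs{\phi_\sigma'}^p\le(\frac{1}{4\sigma})^{p-1}\abs{\phi_\sigma'}$ to reduce to $p=1$, then $\abs{\phi_\sigma'(t)}\le\frac{1}{\sigma}e^{-\abs{t}/\sigma}$ followed by the layer-cake identity, so that the sum over slices is replaced by the integral $\frac{1}{\sigma^2}\int_0^\infty\PP(\abs{Y}<r)e^{-r/\sigma}\dif r$. Your reduction to $p=1$ is a clean observation that the paper leaves implicit (it bounds the series $\sum_k(\frac{e^k}{(1+e^k)^2})^p$ by the $p=1$ series at the very end), and the layer-cake route avoids the bookkeeping of the slab decomposition; the paper's discretization, on the other hand, makes the role of the width-$\sigma$ slices and the exponential decay across them more transparent. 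Either way the logarithm arises from the same source, namely the $\ln(1/r)$ in the small-ball bound.
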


\begin{proof}
    For $k \in \cbr{0} \cup \NN$,
    denote by $R_k = \cbr{x \in \RR^d: \abs{\inner{\frac{w}{\|w\|}}{x} } \in (k\sigma, (k+1)\sigma)}$, then we have 
    \begin{align*}
    \EE_{x \sim D_X} \abs{\phi_\sigma'(\inner{\frac{w}{\|w\|}}{x})}^p
    =&
    \sum_{k=0}^\infty \EE_{x \sim D_X} \abs{\phi_\sigma'(\inner{\frac{w}{\|w\|}}{x})}^p \ind(x \in R_k) \\
    \leq&
    \sum_{k=0}^\infty \rbr{\frac{1}{\sigma} \frac{e^k}{(1+e^k)^2}}^p \PP(x \in R_k) \\
    \leq&
    \frac{1}{\sigma^p}
    4 \sigma U \beta \ln \frac{2}{\sigma U \beta}
    \sum_{k=0}^\infty \rbr{\frac{e^k}{(1+e^k)^2}}^p \\
    =&
    \frac{1}{\sigma^{p-1}} 4 U \beta \ln \frac{2}{\sigma U \beta}
    \sum_{k=0}^\infty \rbr{\frac{e^k}{(1+e^k)^2}}^p \\
    \leq& C
    \frac{1}{\sigma^{p-1}} \ln \frac{1}{\sigma}
    \end{align*}
    where the first equality is by the partition of $\RR^d$, namely $\RR^d = \cup_{k=0}^\infty R_k$. The second inequality is because for $k \in \cbr{0} \cup\NN$, if $|t| \in (k\sigma, (k+1)\sigma)$, $\abs{\phi'_\sigma(t)} = \abs{\frac{1}{\sigma}\frac{e^{\frac{t}{\sigma}}}{(1 +e^{\frac{t}{\sigma}})^2}} 
    \leq \frac{1}{\sigma} \frac{e^k}{(1+e^k)^2}$. The third inequality is by Lemma~\ref{lem:prob-of-slice}. 
    The constant $C$ in the fourth equality exists, because \[
    \frac{\frac{e^{k+1}}{(1+e^{k+1})^2}}{\frac{e^k}{(1+e^k)^2}}
    =
    e \frac{(1+e^k)^2}{(1+e^{k+1})^2}
    \leq
    e \frac{(1+e^0)^2}{(1+e^1)^2}
    <1, \forall k= 0,1,2,\ldots
    \]
    This means $\cbr{\frac{e^k}{(1+e^k)^2}: k= 0,1,2,\ldots}$ is decaying faster than a convergent power series, and thus this sequence is also summable. For all $p \geq 1$, $\rbr{\frac{e^k}{(1+e^k)^2}}^p \leq \frac{e^k}{(1+e^k)^2}$, so $\sum_{k=0}^\infty \rbr{\frac{e^k}{(1+e^k)^2}}^p \leq \sum_{k=0}^\infty \frac{e^k}{(1+e^k)^2}$. 
    
\end{proof}

\begin{lemma}
Let $D_X$ be a well-behaved distribution, then for all $q \geq 2$, we have $\EE_{x \sim D_X} \|x\|^q \leq \Gamma(q+1) e \beta^q d^\frac{q}{2}$.
\label{lem:x^q-ub}
\end{lemma}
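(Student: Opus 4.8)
The plan is to reduce the bound on $\EE_{x\sim D_X}\|x\|^q$ to a bound on the $q$-th moments of the one-dimensional marginals $\langle e_i, x\rangle$, which are controlled by the sub-exponential tail condition in the definition of a well-behaved distribution (Definition~\ref{def:well-behaved}).

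First, I would write $\|x\|^2 = \sum_{i=1}^d \langle e_i, x\rangle^2$ and apply the power-mean inequality, equivalently convexity of $t \mapsto t^{q/2}$ (which is where the hypothesis $q \geq 2$ enters, so that the exponent $q/2 \geq 1$): for nonnegative $a_1,\dots,a_d$, $\big(\sum_{i=1}^d a_i\big)^{q/2} \leq d^{q/2-1}\sum_{i=1}^d a_i^{q/2}$. Taking $a_i = \langle e_i,x\rangle^2$ gives $\|x\|^q \leq d^{q/2-1}\sum_{i=1}^d |\langle e_i,x\rangle|^q$; taking expectations and using linearity reduces the task to bounding $\EE_{x\sim D_X}|\langle e_i, x\rangle|^q$ for each coordinate $i \in [d]$.

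Second, for a fixed standard basis vector $w = e_i$ (a unit vector), I would use the layer-cake representation $\EE|\langle w,x\rangle|^q = \int_0^\infty q\, t^{q-1}\,\PP_{D_X}(|\langle w,x\rangle|\geq t)\, \mathrm{d}t$, substitute the tail bound $\PP_{D_X}(|\langle w,x\rangle|\geq t)\leq \exp(1 - t/\beta)$ from Definition~\ref{def:well-behaved} (which is a valid bound for all $t>0$, since the right-hand side is at least $1$ when $t\leq\beta$ where the probability bound is trivial), and evaluate the resulting integral: $\int_0^\infty q\, t^{q-1}\, e^{1-t/\beta}\, \mathrm{d}t = e\, q\,\beta^q\,\Gamma(q) = e\,\beta^q\,\Gamma(q+1)$ via the change of variables $u = t/\beta$. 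Combining with the first step, $\EE_{x\sim D_X}\|x\|^q \leq d^{q/2-1}\cdot d\cdot e\,\beta^q\,\Gamma(q+1) = \Gamma(q+1)\, e\, \beta^q\, d^{q/2}$, which is exactly the claimed bound.

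I do not expect a genuine obstacle here; the proof is a short computation. The only two points requiring mild care are (i) that the power-mean step uses $q\geq 2$ so that $t\mapsto t^{q/2}$ is convex, and (ii) that the factor $e$ arising from the $\exp(1-t/\beta)$ form of the tail must be carried through the Gamma integral so that it appears explicitly in the final constant rather than being dropped.
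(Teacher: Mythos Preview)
Your proposal is correct and essentially identical to the paper's own proof: the paper also reduces $\|x\|^q$ to $d^{q/2-1}\sum_i |x_i|^q$ (stated via H\"older with exponents $q/2$ and $q/(q-2)$, which is the same inequality you phrase as power-mean/convexity), and then bounds each $\EE|x_i|^q$ by the tail integral $\int_0^\infty \PP(|x_i|^q>t)\,\mathrm{d}t$ together with the sub-exponential tail from Definition~\ref{def:well-behaved}, arriving at $\Gamma(q+1)\,e\,\beta^q$. The only differences are cosmetic (your layer-cake formula is written with the $q t^{q-1}$ density rather than via the substitution $t\mapsto t^{1/q}$), and both paths yield the identical constant.
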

\begin{proof}
    By Holder's inequality, with $p' = \frac{q}{2}, q' = \frac{q}{q-2}$, 
    \[
    \sum_{i=1}^d x_i^2 
    =
    \sum_{i=1}^d x_i^2 \cdot 1
    \leq
    \rbr{\sum_{i=1}^d |x_i|^q}^\frac{2}{q} \rbr{\sum_{i=1}^d 1}^\frac{q-2}{q}
    =
    \rbr{\sum_{i=1}^d |x_i|^q}^\frac{2}{q} d^\frac{q-2}{q}
    \]
    Hence \[
    \|x\|^q
    =
    \rbr{\sum_{i=1}^d x_i^2 }^{\frac{q}{2}}
    \leq
    \rbr{\sum_{i=1}^d |x_i|^q} d^\frac{q-2}{2}
    \]
    \[
    \EE_{x \sim D_X} \|x\|^q
    \leq
    \EE_{x \sim D_X} \rbr{\sum_{i=1}^d |x_i|^q} d^\frac{q-2}{2}
    =
    d^\frac{q-2}{2} \sum_{i=1}^d \EE |x_i|^q
    \]
    
    For all $i \in \sbr{d}, q \geq 2$, \[
    \EE |x_i|^q
    =
    \int_0^\infty \PP(|x_i|^q > t) \dif t
    =
    \int_0^\infty \PP(|x_i| > t^{\frac{1}{q}}) \dif t
    \leq 
    \int_0^\infty \exp(1-t^{\frac{1}{q}} / \beta) \dif t
    =
    \Gamma(q+1) e \beta^q
    \]
    where the first equality computes the expectation by integrating over the tail probability, the inequality is by the definition of well-behaved distribution, 
    the last equality can be calculated by the definition of the Gamma function: let $x = t^{\frac{1}{q}} / \beta$, then 
    $t = x^q \beta^q$ and $\dif t = x^{q-1} q \beta^{q} \dif x$, and
    \[
    \int_0^\infty \exp(1-t^{\frac{1}{q}} / \beta) \dif t
    =
    e \int_0^\infty e^{-x} x^{q-1} q \beta^{q} \dif x
    =
    e q \beta^{q} \Gamma(q)
    =
    e \beta^{q} \Gamma(q + 1)
    \]
    
    Hence $\EE_{x \sim D_X}  \|x\|^q \leq \Gamma(q+1) e \beta^q d^\frac{q}{2}$.     
\end{proof}

\begin{claim}
    Let $D_X$ be a well-behaved distribution, then for any unit vector $w$ and for all $q \geq 1$, $\EE_{x \sim D_X} \abs{\inner{x}{v}}^{q} \leq \Gamma(q+1) e \beta^q$.  
    \label{claim:marginalization}
\end{claim}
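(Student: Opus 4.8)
The plan is to reduce this to the one-dimensional subexponential tail bound built into the definition of well-behaved distributions (Definition~\ref{def:well-behaved}), namely that for any unit vector $v$ and any $t > 0$, $\PP_{x \sim D_X}(\abs{\inner{v}{x}} \geq t) \leq \exp(1 - t/\beta)$. This is exactly the estimate that powers the coordinate-wise bound $\EE|x_i|^q \leq \Gamma(q+1) e \beta^q$ inside the proof of Lemma~\ref{lem:x^q-ub}; here we simply apply it to the scalar random variable $\inner{v}{x}$ directly, so the claim is really a repackaging of that computation.

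Concretely, I would first write the $q$-th moment as an integral over the tail probability: since $\abs{\inner{v}{x}}^q \geq 0$, we have $\EE_{x \sim D_X} \abs{\inner{v}{x}}^q = \int_0^\infty \PP(\abs{\inner{v}{x}}^q > t)\,dt = \int_0^\infty \PP(\abs{\inner{v}{x}} > t^{1/q})\,dt$. Then I would invoke the well-behavedness tail bound with threshold $t^{1/q}$, obtaining $\EE_{x \sim D_X} \abs{\inner{v}{x}}^q \leq \int_0^\infty \exp\!\big(1 - t^{1/q}/\beta\big)\,dt$.

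Finally, I would evaluate the remaining integral by the substitution $u = t^{1/q}/\beta$, so that $t = (u\beta)^q$ and $dt = q\beta^q u^{q-1}\,du$, which gives $\int_0^\infty \exp(1 - t^{1/q}/\beta)\,dt = e\int_0^\infty e^{-u} q\beta^q u^{q-1}\,du = e\, q\, \beta^q\, \Gamma(q) = e\, \beta^q\, \Gamma(q+1)$, the claimed bound.

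There is essentially no substantive obstacle here: the only points requiring a little care are (i) the validity of the tail-integral representation of the moment (immediate from nonnegativity), and (ii) the Gamma-function bookkeeping in the change of variables, using the identity $q\,\Gamma(q) = \Gamma(q+1)$. I also note that the statement writes $\inner{x}{v}$ while quantifying over a unit vector $w$; I read $v$ as the intended unit vector (equivalently, rename $w$ to $v$), which is harmless since the estimate is invariant under this relabeling.
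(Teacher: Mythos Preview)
Your proposal is correct and follows essentially the same approach as the paper: represent the moment as an integral of the tail probability, apply the subexponential tail bound from Definition~\ref{def:well-behaved}, and evaluate the resulting integral via the substitution $u = t^{1/q}/\beta$ to obtain $e\,\beta^q\,\Gamma(q+1)$. Your observation about the $w$/$v$ typo in the statement is also apt.
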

\begin{proof}
    \[
    \EE \abs{\inner{x}{v}}^{q}
    =
    \int_0^\infty \PP(\abs{\inner{x}{v}}^q > t) \dif t
    =
    \int_0^\infty \PP(\abs{\inner{x}{v}} > t^{\frac{1}{q}}) \dif t
    \leq 
    \int_0^\infty \exp(1-t^{\frac{1}{q}} / \beta) \dif t
    =
    \Gamma(q+1) e \beta^q
    \]
    where the first equality computes the expectation by integrating over the tail probability, the inequality is by the definition of well-behaved distribution, the other equalities and inequalities are by algebra. 
\end{proof}

\begin{lemma}
Let $D_X$ be a well-behaved distribution, then for any unit vector $w$, any $b_0 \geq 0, b>0$, we have \[
\PP_{x \sim D_X} (b_0 < |\inner{w}{x} | < b_0 +b) 
\leq 
4 b U \beta \ln \frac{2}{b U \beta}
\]
\label{lem:prob-of-slice}
\end{lemma}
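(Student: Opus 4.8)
The plan is to reduce the one-dimensional slab event $\{b_0 < |\inner{w}{x}| < b_0+b\}$ to a two-dimensional box event, so that the pointwise upper bound $p_V \le U$ on a projected density from Definition~\ref{def:well-behaved} can be applied, with the unboundedness in the direction orthogonal to $w$ controlled by the sub-exponential tail bound in the same definition.

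First I would fix an arbitrary unit vector $u$ orthogonal to $w$ and set $V := \spn\cbr{w,u}$; by well-behavedness, the pushforward of $D_X$ onto $V$ has a density $p_V$ with $p_V(z) \le U$ for all $z \in \RR^2$. Choose a truncation level $T := \beta \ln\frac{1}{bU\beta}$, which we may assume is positive since $bU\beta$ is small in all applications (in the complementary regime the right-hand side is a positive absolute constant, or the bound is vacuous, so that case is dispatched by $\PP(\cdot)\le 1$ or excluded). Then split
\[
\PP_{x\sim D_X}\rbr{b_0 < |\inner{w}{x}| < b_0+b} \le \PP_{x\sim D_X}\rbr{b_0 < |\inner{w}{x}| < b_0+b,\ |\inner{u}{x}| \le T} + \PP_{x\sim D_X}\rbr{|\inner{u}{x}| > T}.
\]
For the first term, the event on the pair $(\inner{w}{x},\inner{u}{x})$ is contained in a union of two axis-aligned rectangles (one per sign of $\inner{w}{x}$), each with side lengths $b$ and $2T$, hence of total Lebesgue measure $4bT$; integrating $p_V \le U$ over this region bounds it by $4bTU = 4bU\beta\ln\frac{1}{bU\beta}$. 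For the second term, the tail condition applied to the unit vector $u$ gives $\PP(|\inner{u}{x}|>T) \le \exp(1-T/\beta) = e\, bU\beta$. Adding the two and using $e \le 4\ln 2$ yields $4bU\beta\ln\frac{1}{bU\beta} + e\,bU\beta \le 4bU\beta\rbr{\ln\frac{1}{bU\beta} + \ln 2} = 4bU\beta\ln\frac{2}{bU\beta}$, as claimed.

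The only genuine obstacle is making the constants line up, since the bound is tight enough that the truncation level $T$ must be chosen essentially optimally (balancing $4bTU$ against $e^{1-T/\beta}$) and one has to exploit the small slack $e < 4\ln 2$ to absorb the tail contribution into the $\ln\frac{2}{bU\beta}$ factor. The remaining steps---choosing the two-dimensional subspace containing $w$, bounding the box probability by area times $U$, and invoking the sub-exponential tail---are routine once that calibration is fixed.
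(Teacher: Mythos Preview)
Your proposal is correct and follows essentially the same approach as the paper: project onto a two-dimensional subspace containing $w$ and an orthogonal direction, split the slab event by truncating the orthogonal coordinate, bound the bounded box by $U$ times its area, bound the complement by the sub-exponential tail, and calibrate the truncation level so the two pieces combine into $4bU\beta\ln\frac{2}{bU\beta}$. The paper phrases the truncation as $|x_2|\le \beta\ln\frac{e}{\gamma}$ and then sets $\gamma=4bU\beta$, arriving at $4bU\beta(\ln\frac{e}{4bU\beta}+1)\le 4bU\beta\ln\frac{2}{bU\beta}$, which is exactly your computation up to a reparametrization of the cutoff; the numerical slack you invoke ($e<4\ln 2$) is equivalent to the paper's $2\le 3\ln 2$.
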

\begin{proof}
    WLOG, assume $w = (1, 0, \ldots, 0)$, then $b_0 < |\inner{w}{x} | < b_0 +b$ is equivalent to $b_0 < |x_1| < b_0 +b$. For any $\gamma>0$, by the definition of well-behaved distribution, \[
    \PP(b_0 < |x_1| < b_0 +b)
    \leq
    \PP(b_0 < |x_1| < b_0 +b , |x_2| \leq \beta \ln \frac{e}{\gamma}) + \PP(|x_2| \geq \beta \ln \frac{e}{\gamma})
    \leq
    4bU\beta \ln \frac{e}{\gamma} +\gamma 
    \]
    where the first inequality is because $\cbr{x: b_0 < |x_1| < b_0 +b} \subset \cbr{x: b_0 < |x_1| < b_0 +b , |x_2| \leq \beta \ln \frac{e}{\gamma} } \cup \cbr{x: |x_2| \geq \beta \ln \frac{e}{\gamma}}$, 
    the second inequality is by the definition of well-behaved distribution. 
    
    Taking $\gamma = 4bU\beta$, we have \[
    \PP_{x \sim D_X} (b_0 < |\inner{w}{x} | < b_0 +b) 
    \leq 
    4bU\beta \rbr{\ln \frac{e}{4bU\beta} +1} 
    \leq
    4 b U \beta \ln \frac{2}{b U \beta}
    \]
\end{proof}

\begin{lemma}
$|\phi_\sigma''(t)| \leq \frac{1}{\sigma} |\phi_\sigma'(t)|$ for all $t \in \RR$. 
\label{lem:second-deriv}
\end{lemma}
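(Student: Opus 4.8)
The plan is a direct computation exploiting the logistic structure of $\phi_\sigma$. First I would rewrite the sigmoid-type derivative identity: since $\phi_\sigma(t) = \frac{1}{1+e^{t/\sigma}}$, one has $1 - \phi_\sigma(t) = \frac{e^{t/\sigma}}{1+e^{t/\sigma}}$, so $\phi_\sigma(t)\bigl(1-\phi_\sigma(t)\bigr) = \frac{e^{t/\sigma}}{(1+e^{t/\sigma})^2}$, and therefore
\[
\phi_\sigma'(t) = -\frac{1}{\sigma}\cdot\frac{e^{t/\sigma}}{(1+e^{t/\sigma})^2} = -\frac{1}{\sigma}\,\phi_\sigma(t)\bigl(1-\phi_\sigma(t)\bigr).
\]

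Next I would differentiate this identity once more via the product rule, using $\phi_\sigma'$ for the derivative of both factors $\phi_\sigma$ and $(1-\phi_\sigma)$, which yields
\[
\phi_\sigma''(t) = -\frac{1}{\sigma}\Bigl[\phi_\sigma'(t)\bigl(1-\phi_\sigma(t)\bigr) - \phi_\sigma(t)\phi_\sigma'(t)\Bigr] = -\frac{1}{\sigma}\,\phi_\sigma'(t)\bigl(1 - 2\phi_\sigma(t)\bigr).
\]
Taking absolute values gives $|\phi_\sigma''(t)| = \frac{1}{\sigma}\,|\phi_\sigma'(t)|\cdot|1 - 2\phi_\sigma(t)|$.

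Finally I would observe that $\phi_\sigma(t) \in (0,1)$ for every $t \in \RR$, so $1 - 2\phi_\sigma(t) \in (-1,1)$ and hence $|1 - 2\phi_\sigma(t)| \le 1$; substituting this bound into the previous display yields $|\phi_\sigma''(t)| \le \frac{1}{\sigma}|\phi_\sigma'(t)|$, which is the claim. There is no real obstacle here — the only thing to be careful about is bookkeeping the sign and the factor $\frac1\sigma$ coming from the chain rule in $t/\sigma$; everything else is elementary. (I note this lemma is essentially the special case $p=2$ of the pattern that each derivative of $\phi_\sigma$ introduces a bounded factor times $\frac1\sigma$, and it is exactly what is needed to relate the Hessian-type quantities to first-derivative quantities elsewhere in the analysis.)
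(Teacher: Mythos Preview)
Your proof is correct and essentially the same as the paper's: both are direct computations of $\phi_\sigma'$ and $\phi_\sigma''$, followed by the observation that the ratio $\phi_\sigma''/\phi_\sigma'$ has absolute value bounded by $\frac{1}{\sigma}$. The only cosmetic difference is that the paper writes everything in terms of $e^{t/\sigma}$ and uses $|e^{t/\sigma}-1| \le e^{t/\sigma}+1$, whereas you use the logistic identity $\phi_\sigma' = -\frac{1}{\sigma}\phi_\sigma(1-\phi_\sigma)$ and the equivalent bound $|1-2\phi_\sigma(t)| \le 1$.
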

\begin{proof}
    Since $\phi_\sigma(t) = \frac{1}{1 +e^{\frac{t}{\sigma}}}$, by direct calculation, $\phi_\sigma'(t) = -\frac{1}{\sigma}\frac{e^{\frac{t}{\sigma}}}{(1 +e^{\frac{t}{\sigma}})^2}$, and 
    $\phi_\sigma''(t) = \frac{1}{\sigma^2}\frac{e^{\frac{t}{\sigma}} (e^{\frac{t}{\sigma}} - 1) }{(1 +e^{\frac{t}{\sigma}})^3}$. Hence for all $t \in \RR$, we have \[
    |\phi_\sigma''(t)| 
    =
    \frac{1}{\sigma^2}\frac{e^{\frac{t}{\sigma}} |e^{\frac{t}{\sigma}} - 1| }{(1 +e^{\frac{t}{\sigma}})^3}
    \leq
    \frac{1}{\sigma^2}\frac{e^{\frac{t}{\sigma}} }{(1 +e^{\frac{t}{\sigma}})^2}
    =
    \frac{1}{\sigma} |\phi_\sigma'(t)|
    \]
    where the inequality is by the elementary fact that $|a-1| \leq a+1$ for all $a \geq 0$. 
\end{proof}

\begin{lemma}
For all $w$ such that $\|w\| \geq 1$, $\|\nabla_w^2 L_\sigma(w) \| = \tilde{O}(\frac1\sigma)$. 
\label{lem:smoothness-factor-ub}
\end{lemma}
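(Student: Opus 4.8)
The plan is to reduce the operator-norm bound to a bound on quadratic forms and then exploit that, in those quadratic forms, only one-dimensional projections of $x$ appear. Concretely: since $\nabla^2_w L_\sigma(w) = \EE_{(x,y)\sim D}\bigl[\nabla^2_w\phi_\sigma(y\inner{u}{x})\bigr]$ with $u := w/\|w\|$ is symmetric (the interchange of $\nabla^2$ and $\EE$ being justified by the moment bounds below), $\|\nabla^2_w L_\sigma(w)\|_{\mathrm{op}} = \sup_{\|v\|=1}\bigl|\EE[v^\top\nabla^2_w\phi_\sigma(y\inner{u}{x})\,v]\bigr| \le \sup_{\|v\|=1}\EE\bigl[|v^\top\nabla^2_w\phi_\sigma(y\inner{u}{x})\,v|\bigr]$. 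First I would carry out the (routine) chain-rule computation of the per-example Hessian: with $g(w) := \inner{w}{x}/\|w\|$, $m := \nabla_w g(w) = \frac{x}{\|w\|} - \frac{\inner{w}{x}w}{\|w\|^3}$, and $H := \nabla^2_w g(w) = -\frac{xw^\top + wx^\top}{\|w\|^3} + \frac{3\inner{w}{x}}{\|w\|^5}ww^\top - \frac{\inner{w}{x}}{\|w\|^3}I$, one gets $\nabla^2_w\phi_\sigma(yg(w)) = \phi_\sigma''(yg(w))\,mm^\top + y\,\phi_\sigma'(yg(w))\,H$ (using $y^2=1$). Because $\phi_\sigma'$ is even and $\phi_\sigma''$ is odd, $|\phi_\sigma'(yg(w))| = |\phi_\sigma'(\inner{u}{x})|$ and $|\phi_\sigma''(yg(w))| = |\phi_\sigma''(\inner{u}{x})|$, so the label $y$ disappears from every bound below.

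The key step --- and the one I expect to be the crux --- is to bound $|v^\top\nabla^2_w\phi_\sigma(\cdot)\,v|$ so that $\|x\|$ enters only through one-dimensional projections; a naive triangle-inequality bound $\|\nabla^2_w\phi_\sigma\|_{\mathrm{op}} \le |\phi_\sigma''|\,\|m\|^2 + |\phi_\sigma'|\,\|H\|_{\mathrm{op}}$ brings in $\|x\|^2$ and hence an extraneous factor $d$, giving only $\tilde{O}(d/\sigma)$ (which would in turn corrupt the step size and iteration count of \apsgd). Instead: writing $P_w^\perp := I - ww^\top/\|w\|^2$, we have $\inner{m}{v} = \frac1{\|w\|}\inner{x}{P_w^\perp v}$, so $\|w\|\ge 1$ and $\|P_w^\perp v\|\le 1$ give $\inner{m}{v}^2 \le \inner{x}{\hat v}^2$ for the unit vector $\hat v := P_w^\perp v/\|P_w^\perp v\|$; and expanding $v^\top Hv$ and using $\|w\|\ge 1$, $|\inner{w}{v}|\le\|w\|$, $|\inner{w}{x}| = \|w\|\,|\inner{u}{x}|$ gives $|v^\top Hv| \le 2|\inner{x}{v}| + 4|\inner{u}{x}|$. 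Hence for every unit $v$, $|v^\top\nabla^2_w\phi_\sigma(\cdot)\,v| \le |\phi_\sigma''(\inner{u}{x})|\,\inner{x}{\hat v}^2 + |\phi_\sigma'(\inner{u}{x})|\bigl(2|\inner{x}{v}| + 4|\inner{u}{x}|\bigr)$, so it suffices to show $\EE\bigl[|\phi_\sigma''(\inner{u}{x})|\,\inner{x}{\hat v}^2\bigr] = \tilde{O}(1/\sigma)$ and $\EE\bigl[|\phi_\sigma'(\inner{u}{x})|\,|\inner{x}{v'}|\bigr] = \tilde{O}(1)$ for any fixed unit $v'$.

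Finally I would dispatch these expectations by Hölder's inequality plus the moment lemmas already in the paper. Taking $q := \ln\tfrac1\sigma$ and $p := q/(q-1)$ (so $\tfrac1p+\tfrac1q=1$ and $\sigma^{-1/q}=e$): Lemma~\ref{lem:deriv-p-moment} gives $\bigl(\EE|\phi_\sigma'(\inner{u}{x})|^p\bigr)^{1/p} = \tilde{O}(1)$; Lemma~\ref{lem:second-deriv}, via $|\phi_\sigma''|\le\tfrac1\sigma|\phi_\sigma'|$, upgrades this to $\bigl(\EE|\phi_\sigma''(\inner{u}{x})|^p\bigr)^{1/p} = \tilde{O}(1/\sigma)$; and Claim~\ref{claim:marginalization} gives $\bigl(\EE|\inner{x}{\hat v}|^{2q}\bigr)^{1/q} = \tilde{O}(1)$ and $\bigl(\EE|\inner{x}{v'}|^{q}\bigr)^{1/q} = \tilde{O}(1)$ (the factor $\Gamma(2q+1)^{1/q}\le(2q)^2$ is $\polylog$). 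Hölder then yields the two displayed bounds, and summing the three contributions gives $\|\nabla^2_w L_\sigma(w)\|_{\mathrm{op}} = \tilde{O}(1/\sigma)$ when $\sigma < 1/e$; the complementary range $\sigma \ge 1/e$ is immediate since then $1/\sigma = O(1)$, $|\phi_\sigma''|\le 1/\sigma^2 = O(1)$, $|\phi_\sigma'|\le 1/\sigma = O(1)$, and the relevant projection moments of $x$ are $O(1)$ by Claim~\ref{claim:marginalization}.
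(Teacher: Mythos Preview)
Your proposal is correct and follows essentially the same approach as the paper: reduce the operator norm to quadratic forms $v^\top(\cdot)v$, observe that only one-dimensional projections of $x$ enter (which is exactly how both arguments avoid a spurious factor of $d$), and finish via H\"older with $q=\ln(1/\sigma)$ together with Lemma~\ref{lem:deriv-p-moment}, Lemma~\ref{lem:second-deriv}, and Claim~\ref{claim:marginalization}. Your execution is slightly more streamlined---you keep $mm^\top$ intact via $\langle m,v\rangle = \|w\|^{-1}\langle x, P_w^\perp v\rangle$ rather than expanding it into four terms as the paper does, and you H\"older the $\phi_\sigma'$ block where the paper simply uses the crude bound $|\phi_\sigma'|\le 1/\sigma$---but these are cosmetic differences, not a different route.
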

\begin{proof}
    Throughout this proof, we denote by $\ell(w,x) := \inner{\frac{w}{\|w\|}}{x}$. 
    We continue the calculation of $ \nabla^2 L_\sigma(w)$ in Lemma B.2 of~\cite{diakonikolas2020learning} and refine the result therein.
    \begin{align}
        \nabla^2 \phi_\sigma(y \ell(w,x)) 
        =&
        \phi_\sigma''(y \ell(w,x)) \rbr{\frac{xx^\top}{\|w\|^2} - \frac{\inner{w}{x}}{\|w\|^4}wx^\top - \frac{\inner{w}{x}}{\|w\|^4}xw^\top 
        + \frac{\inner{w}{x}^2}{\|w\|^6}ww^\top}  \nonumber\\
        & + \phi_\sigma'(y \ell(w,x)) y \nabla^2 \ell(w,x) 
        \label{eqn:hessian}
    \end{align}
    
    Our goal here is to upper bound $\|\EE_{(x,y) \sim D}\nabla^2 \phi_\sigma(y \ell(w,x))\|_{\mathrm{op}}$. By triangle inequality, it suffices to upper bound the operator norm of each individual term. 
    
    Let $v \in \mathbb{S}^{d-1}$, $C$ below be from Lemma~\ref{lem:deriv-p-moment}. 

    For any $w$ such that $\|w\| \geq 1$, for any $p, q \geq 1$ such that $\frac1p + \frac1q = 1$, 
    \begin{align*}
        \abs{ \inner{v}{\EE_{(x,y) \sim D} \sbr{ \phi_\sigma''(y \ell(w,x)) \frac{xx^\top}{\|w\|^2}} v } }
        \leq&
        \EE_{(x,y) \sim D} \sbr{\frac{\abs{\phi_\sigma''(y \ell(w,x))}}{\|w\|^2} \inner{x}{v}^2 } \\
        \leq&
        \frac{1}{\sigma} \EE_{(x,y) \sim D} \sbr{\abs{\phi_\sigma'(\ell(w,x))}  \inner{x}{v}^2 } \\
        \leq&
        \frac{1}{\sigma}
        \sbr{\EE_{(x,y)\sim D} \abs{\phi_\sigma'(\inner{\frac{w}{\|w\|}}{x})}^p }^\frac{1}{p}
        \sbr{\EE_{(x,y)\sim D} \abs{\inner{x}{v}}^{2q} }^\frac{1}{q} \\
        \leq&
        \frac{1}{\sigma} \rbr{C \frac{1}{\sigma^{p-1}} \ln \frac{1}{\sigma} }^\frac{1}{p} \rbr{\Gamma(2q+1) e \beta^{2q}}^\frac{1}{q} \\
        =&
        \frac{1}{\sigma} \tilde{O} ((\frac{1}{\sigma})^{1-\frac{1}{p}}  \cdot q^2 ) \\
        =&
        \frac{1}{\sigma} \tilde{O} ((\frac{1}{\sigma})^{\frac{1}{q}}  \cdot q^2 )
    \end{align*}
    where the first inequality is moving the absolute value inside the expectation, the second inequality uses Lemma~\ref{lem:second-deriv} and $\phi_\sigma'$ is even, as well as $\|w\| \geq 1$, the third inequality is by Holder's inequality. The fourth inequality is by Lemmas~\ref{lem:deriv-p-moment} and Claim~\ref{claim:marginalization}. The fifth equality is by Lemma~\ref{lem:gamma-function-ub}. The sixth equality uses that $\frac1p + \frac1q = 1$. 
    
    Taking $q = \ln \frac{1}{\sigma}$, we have \[
    \abs{ \inner{v}{\EE_{(x,y) \sim D} \sbr{ \phi_\sigma''(y \ell(w,x)) \frac{xx^\top}{\|w\|^2}} v } }
    \leq
    \frac{1}{\sigma} \tilde{O} ((\frac{1}{\sigma})^{\frac{1}{q}}  \cdot q^2 )
    =
    \tilde{O}(\frac1\sigma)
    \]
    
    Similarly, 
    \begin{align*}
        &\abs{ \inner{v}{\EE_{(x,y) \sim D} \sbr{ \phi_\sigma''(y \ell(w,x)) \frac{\inner{w}{x}}{\|w\|^4}wx^\top } v } } \\
        \leq&
        \EE_{(x,y) \sim D} \sbr{\frac{\abs{\phi_\sigma''(y \ell(w,x))}}{\|w\|^4} |\inner{w}{x}| |\inner{v}{w}|  |\inner{x}{v}| } \\
        \leq&
        \EE_{(x,y) \sim D} \sbr{\frac{\abs{\phi_\sigma''(y \ell(w,x))}}{\|w\|^3} |\inner{w}{x}|  |\inner{x}{v}| } \\
        =&
        \EE_{(x,y) \sim D} \sbr{\frac{\abs{\phi_\sigma''(y \ell(w,x))}}{\|w\|^2} |\inner{ \frac{w}{\|w\|} }{x}|  |\inner{x}{v}| } \\
        \leq&
        \frac{1}{\sigma} \EE_{(x,y) \sim D} \sbr{\abs{\phi_\sigma'(\ell(w,x))}  |\inner{ \frac{w}{\|w\|} }{x}|  |\inner{x}{v}| } \\
        \leq&
        \frac{1}{\sigma}
        \sbr{\EE_{(x,y)\sim D} \abs{\phi_\sigma'(\inner{\frac{w}{\|w\|}}{x})}^p }^\frac{1}{p}
        \sbr{\EE_{(x,y)\sim D} \abs{\inner{ \frac{w}{\|w\|} }{x}}^{2q} }^\frac{1}{2q}
        \sbr{\EE_{(x,y)\sim D} \abs{\inner{x}{v}}^{2q} }^\frac{1}{2q} \\
        \leq&
        \frac{1}{\sigma} \rbr{\frac{1}{\sigma^{p-1}} \ln \frac{1}{\sigma} }^\frac{1}{p} \rbr{\Gamma(2q+1) e \beta^{2q}}^\frac{1}{2q} \rbr{\Gamma(2q+1) e \beta^{2q}}^\frac{1}{2q} \\
        =&
        \frac{1}{\sigma} \tilde{O} ((\frac{1}{\sigma})^{1-\frac{1}{p}}  \cdot q^2 )
    \end{align*}
    
    Again, taking $q = \ln \frac{1}{\sigma}$, we have \[
    \abs{ \inner{v}{\EE_{(x,y) \sim D} \sbr{ \phi_\sigma''(y \ell(w,x)) \frac{\inner{w}{x}}{\|w\|^4}wx^\top } v } }
    \leq
    \frac{1}{\sigma} \tilde{O} ((\frac{1}{\sigma})^{\frac{1}{q}}  \cdot q^2 )
    =
    \tilde{O}(\frac1\sigma)
    \]
    
    The same calculation and the upper bound goes for $\abs{ \inner{v}{\EE_{(x,y) \sim D} \sbr{ \phi_\sigma''(y \ell(w,x)) \frac{\inner{w}{x}}{\|w\|^4}xw^\top  } v } }$.
    
    For the fourth term in Eqn~\eqref{eqn:hessian}, 
    \begin{align*}
        \abs{ \inner{v}{\EE_{(x,y) \sim D} \sbr{ \phi_\sigma''(y \ell(w,x)) \frac{\inner{w}{x}^2}{\|w\|^6}ww^\top } v } }
        \leq&
        \EE_{(x,y) \sim D} \sbr{\frac{\abs{\phi_\sigma''(y \ell(w,x))}}{\|w\|^6} \inner{w}{x}^2 \inner{v}{w}^2 } \\
        =&
        \EE_{(x,y) \sim D} \sbr{\frac{\abs{\phi_\sigma''(y \ell(w,x))}}{\|w\|^4} \inner{ \frac{w}{\|w\|}  }{x}^2 \inner{v}{w}^2 } \\
        \leq&
        \EE_{(x,y) \sim D} \sbr{\frac{\abs{\phi_\sigma''(y \ell(w,x))}}{\|w\|^2} \inner{ \frac{w}{\|w\|}  }{x}^2  } \\
        \leq&
        \frac{1}{\sigma} \EE_{(x,y) \sim D} \sbr{\abs{\phi_\sigma'(\ell(w,x))}  \inner{ \frac{w}{\|w\|}  }{x}^2 } 
    \end{align*}
    where the first inequality is moving the absolute value inside the expectation, the second inequality 
    is by algebra, 
    the third inequality 
    uses $\abs{\inner{ \frac{w}{\|w\|}  }{v} }\leq 1$, the fourth inequality uses Lemma~\ref{lem:second-deriv} and $\phi_\sigma'$ is even, as well as $\|w\| \geq 1$. 
    
    It follows the same upper bound \[
    \abs{ \inner{v}{\EE_{(x,y) \sim D} \sbr{ \phi_\sigma''(y \ell(w,x)) \frac{\inner{w}{x}^2}{\|w\|^6}ww^\top } v } }
    \leq
    \tilde{O}(\frac1\sigma)
    \]

    To upper bound the operator norm of the last term in Eqn~\eqref{eqn:hessian}, note that $\phi_\sigma'(t) \leq \frac{1}{\sigma}$, for all $t \in \RR$, and \[
    \nabla^2 \ell(w,x)
    =
    - \frac{xw^T}{\|w\|^3} - \frac{wx^T}{\|w\|^3} - \frac{\inner{w}{x}}{\|w\|^3} I
    + 3\frac{\inner{w}{x}}{\|w\|^5}ww^T
    \]
    Then we can upper bound the operator norm for each individual term, 
    \begin{align*}
    \abs{ \inner{v}{\EE_{(x,y) \sim D} \sbr{  \frac{xw^\top}{\|w\|^3}} v } }
    \leq&
    \EE_{(x,y) \sim D} \sbr{\frac{1}{\|w\|^3} \abs{\inner{x}{v} \inner{w}{v} }} \\
    \leq&
    \EE_{(x,y) \sim D} \sbr{\frac{1}{\|w\|^2} \abs{\inner{x}{v} } } \\
    \leq&
    \EE_{(x,y) \sim D} \sbr{ \abs{\inner{x}{v} } } \\
    =& 
    O(1)
    \end{align*}
    where the first inequality is moving the absolute value inside the expectation, the second inequality uses $\abs{\inner{ \frac{w}{\|w\|}  }{v} }\leq 1$, the third inequality uses $\|w\| \geq 1$, the last equality applies Claim~\ref{claim:marginalization} with $q = 1$. 
    
    Similarly, 
    \begin{align*}
    \abs{ \inner{v}{\EE_{(x,y) \sim D} \sbr{  \frac{\inner{w}{x}}{\|w\|^3} I }  v } }
    \leq&
    \EE_{(x,y) \sim D} \sbr{\frac{1}{\|w\|^2} \abs{\inner{ \frac{w}{\|w\|}  }{x} }} \\
    \leq&
    \EE_{(x,y) \sim D} \sbr{ \abs{\inner{ \frac{w}{\|w\|}  }{x} } } \\
    =& 
    O(1)
    \end{align*}
    
    Putting above terms together, we have \[
    \|\nabla_w^2 L_\sigma(w) \|_{\mathrm{op}} = \tilde{O}(\frac1\sigma)
    \]
\end{proof}

\begin{lemma}[Stochastic gradient is sub-exponential]
Let $g_w$ be the random output of $\afo(w)$. Then for any unit vector $w$, $\|g_w\|$ is sub-exponential with parameter $K = \tilde{\Theta}(\frac{\sqrt{d}}{\sigma})$, that is, 
        the tails of $\|g_w\|$ satisfy \[
        \PP ( \|g_w\| \geq t)
        \leq
        2 \exp (-\frac{t}{K}), \forall t \geq 0
        \]
    \label{lem:sub-exponential}
\end{lemma}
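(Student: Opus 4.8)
The plan is to reduce the statement to a sub-exponential tail bound for the (unweighted) quantity $\|x\|$ under the well-behaved distribution $D_X$, and then transfer it through a pointwise domination. The key observation is that the random output of $\afo(w)$ is never larger in $\ell_2$ norm than $\frac{1}{\sigma}\|x\|$, irrespective of the Bernoulli draw $Z$ and of the label $y$: when $Z=0$ the oracle returns $0$, and when $Z=1$ it returns $h(w,x,y) = -\frac{1}{\sigma} y\,\bigl(x - \inner{w}{x}w\bigr)$ for the unit vector $w$ (cf.\ Algorithm~\ref{alg:first-order-oracle}), whose norm equals $\frac{1}{\sigma}\sqrt{\|x\|^2 - \inner{w}{x}^2} \le \frac{1}{\sigma}\|x\|$. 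Hence $\|g_w\| \le \frac{1}{\sigma}\|x\|$ deterministically, and therefore, for every $t\ge0$ and every unit vector $w$,
\[
\PP\bigl(\|g_w\| \ge t\bigr) \;\le\; \PP\bigl(\|x\| \ge \sigma t\bigr).
\]
This inequality is uniform over all unit $w$, so it remains only to control the tail of $\|x\|$.

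First I would show that $\|x\|$ is sub-exponential with parameter $O(\beta\sqrt{d})$. By Lemma~\ref{lem:x^q-ub}, for every integer $q\ge2$ we have $\EE_{x\sim D_X}\|x\|^q \le \Gamma(q+1)\,e\,\beta^q d^{q/2} \le e\,(q\beta\sqrt{d})^q$, using $\Gamma(q+1)=q!\le q^q$. Markov's inequality then gives $\PP(\|x\|\ge s) \le e\,(q\beta\sqrt{d}/s)^q$ for every such $q$; choosing $q$ to be the largest integer not exceeding $s/(e\beta\sqrt{d})$ (which is $\ge 2$ once $s$ exceeds a constant multiple of $\beta\sqrt d$) makes the right-hand side at most $e^2\exp\!\bigl(-s/(e\beta\sqrt{d})\bigr)$. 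In the complementary regime of small $s$ the bound $2\exp\!\bigl(-s/(c'\beta\sqrt d)\bigr)$ holds trivially once the constant $c'$ is taken large enough, since its right-hand side then exceeds $\PP(\|x\|\ge s)$ (indeed it exceeds $1$ for $s$ small, and a crude second-moment Markov bound covers the intermediate range). Collecting the cases, there is an absolute constant $c_0$ with $\PP(\|x\|\ge s)\le 2\exp\!\bigl(-s/(c_0\beta\sqrt d)\bigr)$ for all $s\ge0$. (Equivalently, one may invoke the moment characterization of sub-exponential random variables, e.g.\ Proposition~2.7.1 in~\cite{vershynin2018high}, applied to the moment bound of Lemma~\ref{lem:x^q-ub} for $q\ge2$ and extended to $q\in[1,2)$ by monotonicity of $L^q$-norms.)

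Finally, chaining the two displays, for every unit $w$ and $t\ge0$,
\[
\PP\bigl(\|g_w\| \ge t\bigr) \;\le\; \PP\bigl(\|x\|\ge\sigma t\bigr) \;\le\; 2\exp\!\Bigl(-\frac{\sigma t}{c_0\beta\sqrt d}\Bigr) \;=\; 2\exp\!\Bigl(-\frac{t}{K}\Bigr),
\]
with $K := c_0\beta\sqrt d/\sigma = \tilde\Theta(\sqrt d/\sigma)$, the distributional constant $\beta$ being absorbed into the $\tilde\Theta$ notation. This is exactly the claimed bound. The only parts calling for any care are the moment-to-tail conversion and the bookkeeping for the small-$t$ range, both entirely routine; the conceptual heart is the deterministic domination $\|g_w\|\le\frac1\sigma\|x\|$, which shows that the importance weighting inside \afo is irrelevant to the tail behavior of the stochastic gradient.
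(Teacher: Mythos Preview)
Your proposal is correct, and its core reduction is exactly the paper's: the deterministic domination $\|g_w\|\le \frac{1}{\sigma}\|x\|$ (since \afo returns either $0$ or $h(w,x,y)=-\frac{1}{\sigma}y\,(x-\inner{w}{x}w)$ for unit $w$) gives $\PP(\|g_w\|\ge t)\le\PP(\|x\|\ge\sigma t)$. The only difference is in how the tail of $\|x\|$ is then controlled. The paper uses $\|x\|_2\le\sqrt d\,\|x\|_\infty$, a union bound over the $d$ coordinates, the one-dimensional tail from Definition~\ref{def:well-behaved}, and finally Claim~\ref{claim:sub-exp} to absorb the resulting prefactor $d$ into the exponent (picking up a $\ln d$ factor in $K$). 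You instead invoke the moment bound of Lemma~\ref{lem:x^q-ub} and do a standard moment-to-tail conversion by optimizing over integer $q$. Both routes are short and give $K=\tilde\Theta(\sqrt d/\sigma)$; yours reuses an already-proved moment lemma and sidesteps the separate constant-absorbing claim, while the paper argues more directly from the well-behaved definition.
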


\begin{proof}
    Assume WLOG that $w = (1, 0, \ldots, 0)$, 
    \begin{align*}
        \PP ( \|g_w\| \geq t)
        \leq&
        \PP ( \|h(w, x, y) \| \geq t) \\
        \leq&
        \PP ( \frac{1}{\sigma} \|x \| \geq t) \\
        =&
        \PP ( \|x \| \geq \sigma t) \\
        \leq&
        \PP ( \sqrt{d} \|x \|_\infty \geq \sigma t) \\
        \leq&
        d \cdot \PP ( |x_i| \geq \frac{\sigma t}{\sqrt{d}}) \\
        \leq&
        d \exp(-\frac{\sigma t}{\sqrt{d}})
    \end{align*}
    where the first inequality uses the fact that the events $\cbr{\|g_w\| \geq t} \subseteq \cbr{ \|h(w, x, y) \| \geq t}, \forall t \geq 0$, the second inequality uses that $\|h(w,x,y)\| \leq \frac{1}{\sigma} \|x \|, \forall x \in \RR^d$, so the events $\cbr{ \|h(w, x, y) \| \geq t} \subseteq \cbr{ \frac{1}{\sigma} \|x \| \geq t}, \forall t \geq 0$. The third equality is by algebra. The fourth inequality uses $\|x \| \leq \sqrt{d} \|x \|_\infty, \forall x \in \RR^d$. The fifth inequality uses a union bound on $d$ coordinates. The sixth inequality is by the definition of well-behaved distribution. 
    
    Therefore, by Claim~\ref{claim:sub-exp}, $\|g_w\|$ is $\Theta(\frac{\sqrt{d} \ln d}{\sigma})$ sub exponential. 
\end{proof}

\begin{claim}
    If $\PP (|X| \geq t) \leq 2C \exp(-\frac{t}{K})$ for some constant $C \geq e^2$, then $\PP (|X| \geq t) \leq 2\exp(-\frac{t}{2 K \ln C})$. 
    \label{claim:sub-exp}
\end{claim}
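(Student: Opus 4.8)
The plan is to perform a case split on the size of $t$, writing $K' := 2K\ln C$ for brevity. Note that since $C \geq e^2$ we have $\ln C \geq 2$, hence $K' \geq 4K > K$, so in particular $\tfrac1K - \tfrac1{K'} > 0$.

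First I would handle the small-$t$ regime: if $t \leq K'\ln 2$, then $-t/K' \geq -\ln 2$, so $2\exp(-t/K') \geq 2 \cdot \tfrac12 = 1 \geq \PP(|X| \geq t)$, and the desired bound holds trivially. Next I would handle $t > K'\ln 2$ using the hypothesis $\PP(|X| \geq t) \leq 2C\exp(-t/K)$, arguing that the extra decay rate $\tfrac1K - \tfrac1{K'}$ suffices to absorb the constant $C$. Concretely, $2C\exp(-t/K) \leq 2\exp(-t/K')$ is equivalent, after taking logarithms, to $\ln C \leq t\,(\tfrac1K - \tfrac1{K'})$; since $t > K'\ln 2$, the right-hand side exceeds $K'\ln 2\,(\tfrac1K - \tfrac1{K'}) = 2\ln C\,\ln 2\,(1 - \tfrac1{2\ln C}) = 2\ln 2\,\ln C - \ln 2$, and using $\ln C \geq 2$ one checks $2\ln 2\,\ln C - \ln 2 \geq \ln C$ (equivalently $\ln C \geq \ln 2/(2\ln 2 - 1) \approx 1.80$). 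Chaining these gives $\PP(|X| \geq t) \leq 2C\exp(-t/K) \leq 2\exp(-t/K') = 2\exp(-t/(2K\ln C))$, as claimed.

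All computations here are elementary; the only point requiring care is the bookkeeping in the second case — verifying that the same threshold $t > K'\ln 2$ used to dispatch the first case is also large enough to make the exponential decay dominate the multiplicative factor $C$. This is exactly where the hypothesis $C \geq e^2$ (rather than merely $C \geq 1$) enters, since it is what forces $\ln C \geq \ln 2/(2\ln 2 - 1)$.
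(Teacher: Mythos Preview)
Your proof is correct and follows essentially the same approach as the paper: a case split on $t$, using the trivial bound $\PP(|X|\ge t)\le 1$ for small $t$ and absorbing the factor $C$ into the slackened exponential decay for large $t$, with both cases reducing to the arithmetic check $\ln C \ge \ln 2/(2\ln 2 - 1)$. The only cosmetic difference is the choice of threshold---you split at $t_0' = 2K\ln C\,\ln 2$ (where the target bound equals $1$) while the paper splits at $t_0 = \tfrac{2K\ln^2 C}{2\ln C - 1}$ (where the hypothesis bound meets the target bound) and then verifies $t_0 \le t_0'$.
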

\begin{proof}
    Let $t_0 = \frac{2K \ln^2 C}{2 \ln C - 1}$.
    \begin{enumerate}
    \item If $t \leq t_0$, $\PP (|X| \geq t) \leq 1 \leq 2 \exp(-\frac{t_0}{2K \ln C}) \leq 2\exp(-\frac{t}{2 K \ln C})$. 

    The second inequality is true, because $t_0 = \frac{2K \ln^2 C}{2 \ln C - 1} \leq 2K \ln C \ln 2$, using $C \geq e^2$.

    \item If $t > t_0$, then $\PP (|X| \geq t) \leq 2C \exp(-\frac{t}{K}) = 2 \exp(\ln C -\frac{t}{K}) \leq 2 \exp(-\frac{t}{2K \ln C})$. 
    \end{enumerate}   

\end{proof}

\begin{lemma}
Suppose $D$ satisfies the $(A,\alpha)$-Tsybakov noise condition. Then
    the Bayes error $\err(w^*)$ satisfies $\err(w^*) \leq \frac12 -\alpha (\frac{1}{A})^\frac{1-\alpha}{\alpha}$.
    \label{lem:bayes-error}
\end{lemma}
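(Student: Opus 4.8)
The plan is to reduce the statement to a one–dimensional computation about the nonnegative random variable $Z := \frac12 - \eta(x)$ with $x \sim D_X$. First I would observe that applying TNC at $t=0$ gives $\PP(Z \le 0) = 0$, i.e.\ $\eta(x) \le \frac12$ almost surely, so $w^*$ is Bayes optimal and
\[
\err(w^*) = \EE_{x}\sbr{\eta(x)} = \frac12 - \EE_{x}\sbr{Z} .
\]
Hence it suffices to prove the lower bound $\EE_x\sbr{Z} \ge \alpha\, (\frac1A)^{\frac{1-\alpha}{\alpha}}$.

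The main tool is the layer–cake (tail) formula: since $Z \in [0,\tfrac12]$,
\[
\EE\sbr{Z} = \int_0^{1/2} \PP(Z > t)\,\dif t = \int_0^{1/2} \bigl(1 - \PP(Z \le t)\bigr)\,\dif t .
\]
Write $\beta := \frac{\alpha}{1-\alpha}$ and let $t^\star := (\tfrac1A)^{1/\beta} = (\tfrac1A)^{\frac{1-\alpha}{\alpha}}$ be the unique point at which $A t^\beta = 1$. A short preliminary step records that $t^\star \le \tfrac12$: evaluating the TNC bound at $t=\tfrac12$ gives $1 = \PP(Z \le \tfrac12) \le A(\tfrac12)^\beta$, which rearranges to exactly $t^\star \le \tfrac12$. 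Consequently $A t^\beta \le 1$ throughout $[0,t^\star]$, so on that range the TNC bound $\PP(Z \le t) \le A t^\beta$ is genuinely informative (the integrand below stays nonnegative).

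Then I would lower bound by discarding the nonnegative contribution of $t\in(t^\star,\tfrac12]$ and integrating:
\[
\EE\sbr{Z} \ge \int_0^{t^\star}\bigl(1 - A t^{\beta}\bigr)\,\dif t = t^\star - \frac{A (t^\star)^{\beta+1}}{\beta+1} = \alpha\, t^\star = \alpha \Bigl(\tfrac{1}{A}\Bigr)^{\frac{1-\alpha}{\alpha}} ,
\]
where the last two equalities use $A(t^\star)^{\beta} = 1$ (so $A(t^\star)^{\beta+1}=t^\star$) together with $\beta+1 = \frac{1}{1-\alpha}$, hence $\frac{\beta}{\beta+1} = \alpha$. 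Plugging back yields $\err(w^*) = \frac12 - \EE\sbr{Z} \le \frac12 - \alpha(\tfrac1A)^{\frac{1-\alpha}{\alpha}}$, as claimed.

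I do not expect a genuine obstacle: the whole argument is one elementary integral. The only point needing care is the justification that $t^\star \le \tfrac12$, which is what makes the truncated integral of $1 - At^\beta$ a legitimate lower bound; as noted, this is automatically forced by the TNC inequality at $t=\tfrac12$. (For $\alpha$ near $1$, TNC forces $A$ to be correspondingly large so that $t^\star$ stays at most $\tfrac12$, and $Z$ concentrates at $\tfrac12$, so the bound degrades gracefully to $\err(w^*)\le 0$; I would state the proof for $\alpha \in (0,1)$ and treat $\alpha = 1$ as this limiting case.)
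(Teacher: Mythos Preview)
Your proof is correct and essentially the same as the paper's: both use the layer--cake formula and the TNC tail bound, split the integral at the threshold where $At^{\alpha/(1-\alpha)}=1$, and evaluate the resulting elementary integral. The only cosmetic difference is that the paper integrates $\PP(\eta(x)\ge t)$ directly while you first substitute $Z=\tfrac12-\eta(x)$; your note that $t^\star\le\tfrac12$ follows from TNC at $t=\tfrac12$ is exactly the paper's preliminary observation.
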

\begin{proof}
    By the definition of Tsybakov noise, Definition~\ref{def:tnc}, we know $\PP (\eta(x) \geq \frac{1}{2} - t) \leq A t^\frac{\alpha}{1-\alpha}$, for $t \in [0,\frac12]$. 

    Taking $t = \frac12$, we can see $A, \alpha$ need to satisfy $1 = \PP (\eta(x) \geq 0) \leq A (\frac12)^\frac{\alpha}{1-\alpha}$. We proceed to calculate $\err(w^*)$ as follows. 
    \[
    \err(w^*) = \EE \eta(x)
    = \int_0^\infty \PP (\eta(x) \geq t) \dif t 
    = \int_0^\frac12 \PP (\eta(x) \geq t) \dif t 
    \leq \int_0^\frac12
    \min(1, A (\frac{1}{2} - t)^\frac{\alpha}{1-\alpha}) \dif t 
    \]
    where the first equality is by the definition of the Bayes classifier, the second equality is writing the expectation as the integral of the tail probability, the third equality uses $0 \leq \eta(x) \leq \frac12, \forall x \in \RR^d$, the inequality uses the trivial upper bound $1$ of the probability. 
    
    Let $t_0 = \frac12 - (\frac{1}{A})^\frac{1-\alpha}{\alpha}$, 
    so $1 = A (\frac{1}{2} - t_0)^\frac{\alpha}{1-\alpha}$, and we have, 
    \begin{align*}
        \int_0^\frac12 \min ( 1, A (\frac{1}{2} - t)^\frac{\alpha}{1-\alpha} ) \dif t 
        =&
        t_0 + \int_{t_0}^\frac12A (\frac{1}{2} - t)^\frac{\alpha}{1-\alpha} \dif t \\
        =& 
        \frac12 - (\frac{1}{A})^\frac{1-\alpha}{\alpha} + (1-\alpha)(\frac{1}{A})^\frac{1-\alpha}{\alpha} \\
        =&
        \frac12  + ((1-\alpha) - 1) (\frac{1}{A})^\frac{1-\alpha}{\alpha} \\
        =&
        \frac12 -\alpha (\frac{1}{A})^\frac{1-\alpha}{\alpha}
     \end{align*}
     Therefore, $\err(w^*) \leq \frac12 -\alpha (\frac{1}{A})^\frac{1-\alpha}{\alpha}$.
\end{proof}

\begin{lemma}[Theorem 2.1 in~\cite{juditsky2008large}]
Suppose martingale difference $\cbr{\xi_i}_{i=1}^\infty$ satisfies \[
\forall i \geq 1, \EE_{i-1} \cbr{\exp \cbr{\frac{\|\xi_i\|}{\nu} } } \leq \exp(1) \text{  almost surely}
\]
then, for all $N \geq 1$ and $\gamma \geq 0$, one has \[
\PP \cbr{ \|\sum_{i=1}^N \xi_i \| \geq \sqrt{2}( \sqrt{e}  + \gamma ) \sqrt{N} \nu     }
\leq 
2 \exp \cbr{ -\frac{1}{64} \min \sbr{\gamma^2, 16 \sqrt{N} \gamma } }
\]
\label{lem:sub-exp-concentration}
\end{lemma}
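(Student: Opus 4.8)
This lemma is Theorem~2.1 of~\cite{juditsky2008large}, so in the paper it is invoked as a black box; I sketch here how a bound of this form is established. It is a Bernstein-type concentration inequality for a martingale $S_N = \sum_{i=1}^N \xi_i$ taking values in a Hilbert space (here $\RR^d$ with the Euclidean norm), under the uniform Orlicz-$\psi_1$ control $\EE_{i-1}\sbr{\exp(\|\xi_i\|/\nu)} \le e$. The plan has three stages: (i) extract deterministic conditional moment bounds from the Orlicz condition; (ii) construct an exponential supermartingale by exploiting the exact second-order ($2$-smoothness) structure of the Euclidean norm; (iii) apply Markov's inequality, optimize over the Chernoff parameter, and read off the two-regime exponent.

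For stage (i), expanding the exponential gives, for every $k \ge 1$, $\EE_{i-1}\sbr{\|\xi_i\|^k} \le e\, k!\, \nu^k$; in particular $\EE_{i-1}\sbr{\|\xi_i\|} = O(\nu)$ and $\EE_{i-1}\sbr{\|\xi_i\|^2} = O(\nu^2)$ almost surely, so the predictable quadratic variation is $O(N\nu^2)$. Moreover, for any unit vector $v$ the scalar $\inner{v}{\xi_i}$ is centered (since $\EE_{i-1}[\xi_i]=0$) and satisfies $\EE_{i-1}\sbr{\exp(|\inner{v}{\xi_i}|/\nu)} \le e$ by Cauchy--Schwarz, hence $\EE_{i-1}\sbr{\exp(\lambda \inner{v}{\xi_i})} \le \exp(C\lambda^2\nu^2)$ for all $|\lambda|\nu \le c$ (Bernstein's lemma for sub-exponential scalars). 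For stage (ii), the engine is the parallelogram identity $\|S_n\|^2 = \|S_{n-1}\|^2 + 2\inner{S_{n-1}}{\xi_n} + \|\xi_n\|^2$. Introducing a regularizer and working with $r_n := \sqrt{\|S_n\|^2 + R^2}$ (so that $r_0 = R$ and denominators stay bounded away from $0$), a Taylor expansion of $\sqrt{a^2+u}$ around $a = r_{n-1}$ exposes the \emph{centered} first-order term $\inner{S_{n-1}/r_{n-1}}{\xi_n}$ and shows the remaining first- and higher-order contributions are controlled by $\|\xi_n\|^2/r_{n-1}$ and $\|\xi_n\|$; combining the conditional Chernoff bound from stage (i) with the moment bounds yields a one-step inequality $\EE_{n-1}\sbr{\exp(\lambda r_n)} \le \exp\rbr{\lambda r_{n-1} + C\lambda^2\nu^2}$ for $|\lambda|\nu \le c$. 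Hence $M_n := \exp(\lambda r_n - C\lambda^2\nu^2 n)$ is a supermartingale, giving $\EE\sbr{\exp(\lambda r_N)} \le \exp(\lambda R + C\lambda^2\nu^2 N)$.

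For stage (iii), Markov's inequality gives $\PP\rbr{\|S_N\| \ge t} \le \PP\rbr{r_N \ge t} \le \exp\rbr{-\lambda t + \lambda R + C\lambda^2\nu^2 N}$; taking $R = \Theta(\sqrt{N}\nu)$ and optimizing $\lambda$ over $[0, c/\nu]$ produces a Gaussian tail $\exp\rbr{-\Omega(t^2/(N\nu^2))}$ for moderate $t$ and, once $\lambda$ saturates at $c/\nu$, an exponential tail $\exp\rbr{-\Omega(t/\nu)}$ for large $t$. Substituting $t = \sqrt{2}(\sqrt{e} + \gamma)\sqrt{N}\nu$ and carefully tracking constants reproduces the stated bound $2\exp\rbr{-\frac{1}{64}\min\sbr{\gamma^2,\, 16\sqrt{N}\gamma}}$, with the $\gamma^2$ branch coming from the Gaussian regime and the $\sqrt{N}\gamma$ branch from the exponential regime.

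The main obstacle is stage (ii): a naive triangle-inequality bound $\|S_n\| \le \|S_{n-1}\| + \|\xi_n\|$ discards the martingale cancellation and leaves a linear drift of order $\lambda N\nu$, which after Markov's inequality at scale $t = \Theta(\sqrt{N}\nu)$ is hopeless. One genuinely needs the exact second-order (Hilbert) structure of the norm, together with the regularizer $R$, to isolate the centered first-order term and certify that the per-step remainder is only $O(\lambda^2\nu^2)$; this is precisely where the $2$-smoothness of the Euclidean norm enters, and it is also what makes the final bound dimension-free (no $d$ appears, which is essential in our application where $d$ may be large). Matching the explicit constants $\sqrt{2}$, $\sqrt{e}$, $\frac{1}{64}$, $16$ is then routine bookkeeping of the type carried out in~\cite{juditsky2008large}.
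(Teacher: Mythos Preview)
You correctly note at the outset that the paper does not prove this lemma: it is stated verbatim as Theorem~2.1 of~\cite{juditsky2008large} and used as a black box, so there is no in-paper argument to compare against. Your three-stage sketch (Orlicz moment extraction, exponential supermartingale via the $2$-smoothness of the Euclidean norm with a regularized radius $r_n=\sqrt{\|S_n\|^2+R^2}$, then Chernoff optimization yielding the Gaussian/exponential two-regime tail) is faithful to the approach in~\cite{juditsky2008large} and is sound at the level of a sketch; in particular, your emphasis that the naive triangle inequality destroys the martingale cancellation and that the $2$-smooth structure is what makes the bound dimension-free is exactly the point of that reference.
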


\begin{lemma}[Lemma 26 in~\cite{zhang2021improved}]
If $D$ is $(2, L, R, U, \beta)$-well behaved, then, we have for any $u, v$ in $\RR^d$, 
for all $\gamma > 0$, $\PP_{x \sim D_X}(h_u(x) \neq h_v(x)) \leq 4 U \beta^2 \rbr{\ln\frac{6}{\gamma}}^2 \theta(u,v) + \gamma$.
\label{lem:prob-angle}
\end{lemma}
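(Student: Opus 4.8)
\textbf{Proof proposal for Lemma~\ref{lem:prob-angle}.}

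The plan is to reduce the claim to a planar computation about the \emph{disagreement region} $\mathrm{Dis}(u,v) := \cbr{x \in \RR^d : h_u(x) \neq h_v(x)}$. Since $h_u(x) = \sign(\inner{u}{x})$ depends on $x$ only through its projection onto the two-dimensional subspace $V := \spn(u,v)$, the set $\mathrm{Dis}(u,v)$ is a cylinder over a planar region $\tilde{D} \subseteq V$, so $\PP_{x\sim D_X}(h_u(x)\neq h_v(x)) = \int_{\tilde D} p_V(z)\,\dif z$ with $p_V$ the projected density from Definition~\ref{def:well-behaved}. If $u \parallel v$ or one of them vanishes then $\theta(u,v)\in\cbr{0,\pi}$: the case $\theta=0$ is trivial since $\tilde D$ is a nullset, and the case $\theta=\pi$ is recovered by the large-angle reduction below. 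So assume $\dim V = 2$ and write $\theta := \theta(u,v)\in(0,\pi)$, $\bar u := u/\normx{u}$, $\bar v := v/\normx{v}$.

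Next I would pick coordinates in $V$ adapted to the two halfspaces: let $m := (\bar u + \bar v)/\normx{\bar u + \bar v} \in \RR^d$ be the unit bisector of $\bar u,\bar v$, and let $m^\perp$ be a unit vector in $V$ orthogonal to $m$. By symmetry $\bar u$ and $\bar v$ make angles $+\theta/2$ and $-\theta/2$ with $m$, so $h_{\bar u}$ and $h_{\bar v}$ change sign across the two lines through the origin at angles $\pm\theta/2$ from $m$; checking the four resulting angular sectors gives the clean description
\[
\mathrm{Dis}(u,v) = \cbr{x \in \RR^d : \abs{\inner{m}{x}} \le \tan(\theta/2)\,\abs{\inner{m^\perp}{x}}},
\]
i.e. $\tilde D$ is exactly the union of the two planar sectors of half-angle $\theta/2$ centered on $\pm m^\perp$. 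I would then truncate according to the size of $\inner{m^\perp}{x}$ at the threshold $M := \beta\ln\frac{6}{\gamma}$. On the event $\cbr{\abs{\inner{m^\perp}{x}} > M}$ I bound the probability directly by the tail condition of Definition~\ref{def:well-behaved} applied to the unit vector $m^\perp$: $\PP_{x\sim D_X}(\abs{\inner{m^\perp}{x}} > M) \le \exp(1 - M/\beta) = \tfrac{e}{6}\gamma < \gamma$. On the complement, membership in $\mathrm{Dis}(u,v)$ forces both $\abs{\inner{m^\perp}{x}} \le M$ and $\abs{\inner{m}{x}} \le M\tan(\theta/2)$, so the projection of $x$ onto $V$ lies in an axis-aligned box (in the $(m,m^\perp)$ coordinates) of area $4M^2\tan(\theta/2)$; since $p_V \le U$ pointwise, this event has probability at most $U\cdot 4M^2\tan(\theta/2) \le 4U\beta^2(\ln\tfrac{6}{\gamma})^2\,\theta$, using $\tan(\theta/2)\le\theta$ for $\theta\le\pi/2$. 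Summing the two contributions gives $\PP(h_u(x)\neq h_v(x)) \le 4U\beta^2(\ln\tfrac6\gamma)^2\theta + \gamma$.

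The conceptual crux is exactly this split: the disagreement set has \emph{unbounded} area in $V$, so its density ceiling $U$ alone is not enough; one must pair the ceiling bound (which controls the bounded ``core'' box) with the sub-exponential tail parameter $\beta$ (which controls the part far from the origin), and the threshold $M$ is chosen to balance the two. The remaining delicate point is the range $\theta\in(\pi/2,\pi]$, where $\tan(\theta/2)\le\theta$ may fail; there I would use subadditivity of the disagreement region — if $w$ lies on the shorter arc between $\bar u$ and $\bar v$ then $\mathrm{Dis}(u,v)\subseteq\mathrm{Dis}(u,w)\cup\mathrm{Dis}(w,v)$, since $h_u(x)\neq h_v(x)$ is impossible when $h_u(x)=h_w(x)=h_v(x)$ — to split $\theta$ into two equal halves of size $\theta/2\le\pi/2$ and apply the previous paragraph to each with $\gamma/2$ in place of $\gamma$, which reproduces the bound of Lemma~26 in~\cite{zhang2021improved} up to the universal constant inside the logarithm (and this large-angle regime is in any case irrelevant for the application of the lemma in this paper, where $\theta$ is taken to be $\theta_0 = O(\frac{\epsilon}{\ln^2(1/\epsilon)})$). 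Apart from this bookkeeping, every step is elementary: the trigonometric description of $\mathrm{Dis}(u,v)$, the box-volume estimate from the density ceiling $U$, and the tail estimate from the parameter $\beta$.
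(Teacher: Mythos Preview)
The paper does not prove this lemma at all; it merely states it with a citation to Lemma~26 of~\cite{zhang2021improved}, so there is no in-paper proof to compare against. Your argument is the standard one and is correct: project to $V=\spn(u,v)$, write the disagreement region as the double wedge $\{|\inner{m}{x}|\le\tan(\theta/2)\,|\inner{m^\perp}{x}|\}$, truncate at $|\inner{m^\perp}{x}|\le M=\beta\ln(6/\gamma)$ using the sub-exponential tail (cost $e\gamma/6<\gamma$), and bound the remaining box of area $4M^2\tan(\theta/2)$ by the density ceiling $U$ together with $\tan(\theta/2)\le\theta$ for $\theta\le\pi/2$. The only loose end you flag yourself --- the constant inside the logarithm in the large-angle regime $\theta>\pi/2$ after the subadditivity split --- is real but immaterial here, since the paper only invokes the lemma at $\theta=\theta_0=o(1)$.
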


\begin{lemma}
    There exists $c > 0$, s.t. for any $q \geq 1$, $\Gamma(2q+1)^{\frac1q} \leq c q^2$.
    \label{lem:gamma-function-ub}
\end{lemma}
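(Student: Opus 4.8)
The plan is to reduce the claim to the elementary inequality $\Gamma(x+1) \le x^x$, valid for all real $x \ge 1$, and then substitute $x = 2q$. Concretely, once this inequality is available, for any $q \ge 1$ we have $2q \ge 2 \ge 1$, so $\Gamma(2q+1) \le (2q)^{2q}$; taking $q$-th roots gives $\Gamma(2q+1)^{1/q} \le \big((2q)^{2q}\big)^{1/q} = (2q)^2 = 4q^2$, so the lemma holds with $c = 4$.

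It remains to prove $\Gamma(x+1) \le x^x$ for $x \ge 1$, which I expect to be the only real work. I would set $g(x) := x \ln x - \ln \Gamma(x+1)$ and show $g \ge 0$ on $[1,\infty)$. At the left endpoint, $g(1) = 1 \cdot \ln 1 - \ln \Gamma(2) = 0$. Differentiating, $g'(x) = \ln x + 1 - \psi(x+1)$, where $\psi = (\ln \Gamma)'$ is the digamma function. Using the standard bound $\psi(t) < \ln t$ for $t > 0$ (a consequence of the asymptotic expansion $\psi(t) = \ln t - \frac{1}{2t} - \cdots$, or of the log-convexity of $\Gamma$), together with $\ln(x+1) = \ln x + \ln(1 + 1/x) \le \ln x + 1$ for $x \ge 1$ (since $1 + 1/x \le 2 < e$), we get $g'(x) \ge \ln x + 1 - \ln(x+1) \ge 0$ for all $x \ge 1$. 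Hence $g$ is nondecreasing on $[1,\infty)$, so $g(x) \ge g(1) = 0$, i.e. $x \ln x \ge \ln \Gamma(x+1)$, which is the desired inequality.

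If one prefers to avoid the digamma estimate, an alternative is to split the range of $q$: on the compact set $q \in [1, q_0]$ the function $q \mapsto \Gamma(2q+1)^{1/q} / q^2$ is continuous, hence bounded; while for $q \ge q_0$ one invokes a crude explicit Stirling bound of the form $\Gamma(2q+1) \le C_0 \sqrt{q}\,(2q/e)^{2q}$, so that $\Gamma(2q+1)^{1/q} \le (C_0\sqrt{q})^{1/q}\,(2q)^2 / e^2$, and the prefactor $(C_0\sqrt{q})^{1/q}$ stays bounded (indeed tends to $1$) over $q \ge q_0$. Either way the resulting constant $c$ is absolute, as required. The main obstacle is purely the one-variable calculus inequality $\Gamma(x+1) \le x^x$; everything else is a direct substitution.
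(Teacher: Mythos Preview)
Your proof is correct, and it takes a genuinely different route from the paper's. You reduce everything to the one-variable inequality $\Gamma(x+1)\le x^x$ for $x\ge 1$, proved by showing $g(x)=x\ln x-\ln\Gamma(x+1)$ has $g(1)=0$ and $g'(x)=\ln x+1-\psi(x+1)>0$ via the standard digamma bound $\psi(t)<\ln t$ (which follows, for instance, from the mean value theorem applied to $\ln\Gamma$ on $[t,t+1]$ together with the strict monotonicity of $\psi$). Substituting $x=2q$ gives the lemma with $c=4$. The paper instead works directly from the integral definition: writing $e^{-t}=e^{-\alpha t}e^{-(1-\alpha)t}$, it bounds $e^{-\alpha t}t^m$ by its maximum $(m/(\alpha e))^m$ and integrates the remaining exponential, obtaining $\Gamma(m+1)\le\frac{1}{1-\alpha}(m/(\alpha e))^m$; choosing $\alpha=5/(3e)$ yields $\Gamma(m+1)\le 3(3m/5)^m$ and hence $\Gamma(2q+1)^{1/q}\le 3(6q/5)^2$. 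Your argument is cleaner and yields the slightly better constant $c=4$ versus $108/25$, at the cost of invoking a (standard) property of the digamma function; the paper's argument is entirely self-contained and requires only first-semester calculus. Either approach suffices for how the lemma is used.
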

\begin{proof}
We show that for any $m >0$, $\Gamma(m+1) \leq 3 \rbr{\frac{3m}{5}}^m$. 
The proof is originally from \url{https://math.stackexchange.com/questions/214422/bounding-the-gamma-function}; for completeness, we reproduce the proof here. 

Let $0 < \alpha <1, f(t) = e^{-\alpha t} t^m$ where $t >0 $, it can be checked (by taking a derivative) that $f(t)$ achieves the maximum at $t = \frac{m}{\alpha}$. Hence for any $m>0$, \[
\Gamma(m+1) = \int_0^\infty e^{-t} t^m \dif t
= \int_0^\infty e^{-\alpha t} t^m e^{-(1-\alpha)t} \dif t
\leq (\frac{m}{ae})^m \int_0^\infty  e^{-(1-\alpha)t} \dif t
= (\frac{m}{ae})^m \rbr{\frac{1}{1-\alpha}}
\]
where the first equality is by the definition of the Gamma function, and the other equalities and inequalities are by algebra.

Taking $\alpha = \frac{5}{3e}$ and noting that $\frac{1}{1-\frac{5}{3e}} \leq 3$, we obtain that for any $m >0$, $\Gamma(m+1) \leq 3 \rbr{\frac{3m}{5}}^m$. 

Therefore, for any $q \geq 1$, we have \[
\Gamma(2q+1)^{\frac1q} \leq 3^{\frac{1}{q}} \rbr{\frac{6q}{5}}^2 
\leq 3 \rbr{\frac{6q}{5}}^2 
\]
\end{proof}

We show in the following claim that 
\afo preserves the bound on the expected squared norm of the stochastic gradient
as passively querying the labels for all $x$. 
\begin{claim}
    For any unit vector $w$, 
    $\EE_{(x,y) \sim D} \sbr{\normx{\nabla \phi_\sigma \rbr{y \frac{\inner{w}{x}}{\|w\|}} }^2} \leq \tilde{O} (\frac{d}{\sigma})$.
    \label{claim:passive-variance}
\end{claim}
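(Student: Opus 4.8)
The plan is to reduce this claim directly to the computation already carried out in the proof of item~\ref{item:variance} of Lemma~\ref{lem:active-oracle-property-restate}, by observing that the passive second moment is dominated by the active-oracle second moment.

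First I would write the integrand explicitly. Since $w$ is a unit vector, $\inner{\frac{w}{\|w\|}}{x} = \inner{w}{x}$, and the gradient formula used in Eqn.~\eqref{eqn:gradient-norm} gives $\nabla_w \phi_\sigma\rbr{y\frac{\inner{w}{x}}{\|w\|}} = \phi_\sigma'(y\inner{w}{x})\, y\,(x - \inner{w}{x}\, w)$. Taking norms and using $y^2 = 1$ together with the fact that $\phi_\sigma'$ is even, this yields $\normx{\nabla_w \phi_\sigma\rbr{y\frac{\inner{w}{x}}{\|w\|}}}^2 = |\phi_\sigma'(\inner{w}{x})|^2\, \normx{x - \inner{w}{x}\, w}^2$. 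Since $x - \inner{w}{x}\, w$ is the projection of $x$ onto $w^\perp$, we have $\normx{x - \inner{w}{x}\, w}^2 \le \|x\|^2$, so the squared gradient norm is at most $|\phi_\sigma'(\inner{w}{x})|^2 \|x\|^2$.

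Next, using the elementary bound $|\phi_\sigma'(t)| \le \frac1\sigma$ for all $t \in \RR$ (already noted in the discussion of Algorithm~\ref{alg:first-order-oracle}), we get $|\phi_\sigma'(\inner{w}{x})|^2 \le \frac1\sigma |\phi_\sigma'(\inner{w}{x})|$, hence $\EE_{(x,y)\sim D}\sbr{\normx{\nabla_w \phi_\sigma\rbr{y\frac{\inner{w}{x}}{\|w\|}}}^2} \le \frac1\sigma\, \EE_{x\sim D_X}\sbr{|\phi_\sigma'(\inner{w}{x})|\, \|x\|^2}$. But the right-hand side is exactly the quantity bounded by $\tilde O(d/\sigma)$ in the proof of item~\ref{item:variance} of Lemma~\ref{lem:active-oracle-property-restate} (in both the $\sigma < \frac1e$ and $\sigma \ge \frac1e$ regimes): there one applies H\"older's inequality with exponents $p,q$ satisfying $\frac1p + \frac1q = 1$, controls $\EE_{x\sim D_X}|\phi_\sigma'(\inner{w}{x})|^p \le C\sigma^{-(p-1)}\ln\frac1\sigma$ via Lemma~\ref{lem:deriv-p-moment} and $\EE_{x\sim D_X}\|x\|^{2q}$ via Lemma~\ref{lem:x^q-ub}, and chooses $q = \ln\frac1\sigma$. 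I would simply invoke that argument verbatim.

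There is essentially no obstacle here: the only thing to notice is that $|\phi_\sigma'|^2 \le \frac1\sigma|\phi_\sigma'|$ converts the passive stochastic gradient's second moment into precisely the active-oracle second moment already analyzed, so no new estimate is needed. If a self-contained derivation were preferred, one could instead run the H\"older argument directly on $\EE_{x\sim D_X}[|\phi_\sigma'(\inner{w}{x})|^{2}\|x\|^2]$, bounding $\EE_{x\sim D_X}|\phi_\sigma'(\inner{w}{x})|^{2p}$ by Lemma~\ref{lem:deriv-p-moment} and $\EE_{x\sim D_X}\|x\|^{2q}$ by Lemma~\ref{lem:x^q-ub} and again taking $q = \ln\frac1\sigma$, which produces the same $\tilde O(d/\sigma)$ bound.
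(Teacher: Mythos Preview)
Your proposal is correct and uses essentially the same ingredients as the paper: bounding $\|\nabla \phi_\sigma\|^2$ by $|\phi_\sigma'(\inner{w}{x})|^2\|x\|^2$, invoking Lemmas~\ref{lem:deriv-p-moment} and~\ref{lem:x^q-ub} via H\"older, and choosing $q=\ln\frac1\sigma$. The only cosmetic difference is that the paper runs the H\"older argument directly on $|\phi_\sigma'|^2\|x\|^2$ (your ``self-contained'' alternative), whereas your primary route first uses $|\phi_\sigma'|^2\le\frac1\sigma|\phi_\sigma'|$ to reduce to the already-computed active-oracle second moment; both are equivalent and yield the same bound.
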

\begin{proof}
    We follow a similar proof idea of item~\ref{item:variance} in Lemma~\ref{lem:active-oracle-property-restate}.  
    \begin{enumerate}
        \item If $\sigma < \frac1e$.

    Let $C$ below be from Lemma~\ref{lem:deriv-p-moment}. 
    We have for any $w$ such that $\|w\| \geq 1$, for any $p, q \geq 1$ such that $\frac1p + \frac1q = 1$, 
    \begin{align*}
        \EE_{(x,y) \sim D} \sbr{\normx{\nabla \phi_\sigma \rbr{y \frac{\inner{w}{x}}{\|w\|}} }^2}
        =&
        \EE_{(x,y)\sim D} \abs{\phi_\sigma'(\inner{\frac{w}{\|w\|}}{x})}^2 \|\frac{x}{\|w\|_2} - \frac{\inner{w}{x} w}{\|w\|_2^3}\|^2 \\
        \leq&
        \EE_{(x,y)\sim D} \abs{\phi_\sigma'(\inner{\frac{w}{\|w\|}}{x})}^2 \|x\|^2 \\
        \leq&
        \rbr{\EE_{(x,y)\sim D} \abs{\phi_\sigma'(\inner{\frac{w}{\|w\|}}{x})}^{2p}}^\frac{1}{p} \rbr{\EE_{(x,y)\sim D} \|x\|^{2q}}^\frac{1}{q} \\
        \leq&
        \rbr{C \frac{1}{\sigma^{2p-1}}\ln \frac{1}{\sigma}}^\frac{1}{p} 
        \rbr{\Gamma(2q+1) e \beta^{2q} d^q}^\frac{1}{q} \\
        =&
        \tilde{O} ((\frac{1}{\sigma})^{2-\frac{1}{p}} \cdot q^2 d) \\
        =&
        \tilde{O} ((\frac{1}{\sigma})^{1 + \frac{1}{q}} \cdot q^2 d)
    \end{align*}
    where
    the first equality and second inequality are by algebra, 
    the third inequality is by Holder's inequality. The fourth inequality is by Lemmas~\ref{lem:deriv-p-moment} and~\ref{lem:x^q-ub}. The fifth equality is by Lemma~\ref{lem:gamma-function-ub}. The sixth equality uses that $\frac1p + \frac1q = 1$. 
    
     Choosing $q = \ln \frac{1}{\sigma}$, we have $q \geq 1$ since $\sigma \leq \frac1e$. 
    we have \[
    \EE_{(x,y) \sim D} \sbr{\normx{\nabla \phi_\sigma \rbr{y \frac{\inner{w}{x}}{\|w\|}} }^2}
    \leq
    \frac{1}{\sigma}
    \tilde{O} ((\frac{1}{\sigma})^{\frac{1}{\ln \frac{1}{\sigma}}} \cdot (\ln \frac{1}{\sigma})^2 d)
    =
    \frac{1}{\sigma}
    \tilde{O} (\exp(\ln \frac{1}{\sigma} \cdot \frac{1}{\ln \frac{1}{\sigma}}) \cdot (\ln \frac{1}{\sigma})^2 d)
    =
    \tilde{O} (\frac{d}{\sigma})
    \]
    where the last two equalities are by algebra. 
    \item 
    If $\sigma \geq \frac1e$. We proceed as follows. 
        \begin{align*}
        \EE_{(x,y) \sim D} \sbr{\normx{\nabla \phi_\sigma \rbr{y \frac{\inner{w}{x}}{\|w\|}} }^2}
        =&
        \EE_{(x,y)\sim D} \abs{\phi_\sigma'(\inner{\frac{w}{\|w\|}}{x})}^2 \|\frac{x}{\|w\|_2} - \frac{\inner{w}{x} w}{\|w\|_2^3}\|^2 \\
        \leq&
        \EE_{(x,y)\sim D} \abs{\phi_\sigma'(\inner{\frac{w}{\|w\|}}{x})}^2 \|x\|^2 \\
        \leq&
        \frac{1}{\sigma^2} \EE_{(x,y)\sim D} \|x\|^2 \\
        \leq&
        \frac{1}{\sigma^2} O(d) \\
        =& O(d)
    \end{align*}
    where the third inequality is because $\abs{\phi_\sigma'(t)} \leq \frac{1}{\sigma}$, for all $t \in \RR$, the fourth inequality uses Lemma~\ref{lem:x^q-ub} with $q = 2$, the fifth inequality uses $\sigma \geq \frac1e$. 
    \end{enumerate}
\end{proof}

\end{document}